\theoremstyle{plain}
\newtheorem{theorem}{Theorem}[section]
\newtheorem{lemma}{Lemma}[section]
\newtheorem{proposition}{Proposition}[section]
\theoremstyle{definition}
\newtheorem{definition}{Definition}[section]
\newtheorem{example}{Example}[section]
\newtheorem{remark}{Remark}
\numberwithin{equation}{section}
\begin{document}

\begin{frontmatter}
\title{Picture Fuzzy Interactional Aggregation Operators via Strict Triangular Norms
and Applications to Multi-Criteria Decision Making\tnoteref{mytitlenote}}
\tnotetext[mytitlenote]{This work was supported by the National Natural Science Foundation of China
(No. 11601449), and the Key Natural Science Foundation of Universities in Guangdong Province (No. 2019KZDXM027).}

%% Group authors per affiliation:

\author[a1,a2]{Xinxing Wu}
\address[a1]{School of Sciences, Southwest Petroleum University, Chengdu, Sichuan 610500, China}
\address[a2]{Zhuhai College of Jilin University, Zhuhai, Guangdong 519041, China}
\ead{wuxinxing5201314@163.com}

\author[a1]{Zhiyi Zhu}
\ead{zhuzhiyi2019@163.com}

\author[a3]{G\"{u}l Deniz \c{C}ayl{\i}}
\address[a3]{Department of Mathematics, Faculty of Science, Karadeniz Technical University, 61080 Trabzon, Turkey}
\ead{guldeniz.cayli@ktu.edu.tr}

\author[a4]{Peide Liu}
\address[a4]{School of Management Science and Engineering, Shandong University
of Finance and Economics, Jinan Shandong 250014, China}
\ead{peide.liu@gmail.com}

\author[a5]{Xu Zhang\corref{mycorrespondingauthor}}
\cortext[mycorrespondingauthor]{Corresponding author}
\address[a5]{Department of Mathematics, Shandong University, Weihai, Shandong 264209, China}
\ead{xu$\_$zhang$\_$sdu@mail.sdu.edu.cn}

\author[a6]{Zao-li Yang}
\address[a6]{School of Economics and Management, Beijing University of Technology, Beijing 100124, China}
\ead{yangprofessor@163.com}

\begin{abstract}
The picture fuzzy set, characterized by three membership degrees, is a helpful
tool for multi-criteria decision making (MCDM). This paper investigates the
structure of the closed operational laws in the picture fuzzy numbers (PFNs)
and proposes efficient picture fuzzy MCDM methods. We first introduce an
admissible order for PFNs and prove that all PFNs form a complete lattice
under this order. Then, we give some specific examples to show the
non-closeness of some existing picture fuzzy aggregation operators. To
ensure the closeness of the operational laws in PFNs, we construct a new class
of picture fuzzy operators based on strict triangular norms, which consider
the interaction between the positive degrees (negative degrees) and the
neutral degrees. Based on these new operators, we obtain the picture fuzzy
interactional weighted average (PFIWA) operator and the picture fuzzy
interactional weighted geometric (PFIWG) operator. They are proved to be
monotonous, idempotent, bounded, shift-invariant, and homogeneous. We also
establish a novel MCDM method under the picture fuzzy environment applying
PFIWA and PFIWG operators. Furthermore, we present an illustrative example
for a clear understanding of our method. We also give the comparative
analysis among the operators induced by six classes of famous triangular
norms.
\end{abstract}
\begin{keyword}
Picture fuzzy set (PFS); Aggregation operator; Linear order; Triangular norm; Multi-criteria decision making (MCDM).
\end{keyword}
\end{frontmatter}

\section{Introduction}
For the voting process, the voters present multiple selection opinions as follows:
`yes', `no', `abstain', and `refusal'. This case cannot be exactly
described by fuzzy sets~\cite{Za1965}, intuitionistic fuzzy sets (IFSs)~\cite{Ata1999},
and hesitant fuzzy sets~\cite{T2010}. To deal with such issues,
being a direct extension of fuzzy sets and IFSs, the concept of picture fuzzy sets
(PFSs) as a modern fundamental concept about computational intelligence problems
was introduced by Cuong and Kreinovich~\cite{CK2014,CK2013},
which are characterized by three membership functions: a positive,
a neutral, and a negative, meeting the requirement that the sum
of the positive, neutral, and negative membership degree of every
point is between $0$ and $1$. In particular, every triple of positive, neutral,
and negative membership degrees
for PFSs was called a picture fuzzy number (PFN) by Jana et al.~\cite{JSPY2019}.
Then, some basic operations for PFSs, including conjunctions, disjunctions, negations, and
implications, were defined by Cuong et al.~\cite{CK2014,CP2015}. They also
described the picture fuzzy distance, the picture fuzzy relation, and Zadeh's
extension principle in PFSs. In analogy to IFSs, considering the uncertainty
of three membership degrees for PFSs, Cuong~\cite{CK2014} proposed the
notion of interval-valued PFSs (IvPFSs).

To solve some practical MCDM problems
described by PFSs, the picture fuzzy aggregation operator,
which monotonously transforms a collection of picture fuzzy inputs into
a single PFN, is an effective tool. In addition, picture fuzzy
similarity measure, the method of picture fuzzy TOPSIS,
the method of picture fuzzy AHP, and the method of picture fuzzy TODIM-VIKOR were also applied for picture fuzzy
MCDM problems \cite{AMAK2019,AK2020,GS2021,GDMA2021,LZ2020,SZZA2019,SG2021,Th2020}. In order to get
logical and influential aggregation results, we must work out the following
basic problems: (1) Construct an admissible order for ranking all PFNs
(according to our knowledge, until now there is no general methodology that ranks any two PFNs);
(2) Define a `good' aggregation operator, meeting the monotonicity
and closeness. The
monotonicity ensures that the higher the score of each criterion, the better
the aggregation result. The closeness ensures that the aggregation output is
still a PFN; and so, our evaluation criteria are under the unified framework.

Up to the present, many scholars have devoted themselves
to the investigation of aggregation operators under picture fuzzy
environment (see \cite{AMAK2019,Ga2017,JSPY2019,JJGGW2019,KAA2019,
PWC2021,TPZZW2019,WZTT2017,Wang2015,Wei2017,Wei2018a,ZTLJZ2020})
and their applications (see~\cite{LHX2020,WPW2018}).
For example, Wei et al.~\cite{Wei2017,Wei2018a}
developed some picture fuzzy aggregation operators, including the
weighted average operator, the weighted geometric operator, and
the Hamacher (Heronian mean) aggregation operator for PFSs.
They used these operators to clarify the picture fuzzy MCDM.
Through triangular norms and triangular conorms,
Garg~\cite{Ga2017} and Ashraf et al.~\cite{AMAK2019} developed the
weighted average, the
ordered weighted average, and the hybrid average aggregation
operators to deal with the MCDM for PFSs.
Meanwhile, Jana et al.~\cite{JSPY2019} provided various picture
fuzzy aggregation operators using Dombi triangular norms and triangular
conorms with a view to concerning the picture fuzzy MCDM.
Moreover, Khan et al.~\cite{KAA2019} constructed the picture fuzzy Einstein (ordered) weighted
operators by means of Einstein sum and product.
Considering the relationship among the criterions in MCDM problems, some interactional
aggregation operators were used to model these possible relationships
(\cite{WZTT2017,Wang2015,AA2020-1,LLCFL2021,MKA2018,QQSSJ2020,WLG2018}).
Wei et al.~\cite{WLG2018} introduced
the Heronian mean aggregation operators
in the picture fuzzy situations where each attribute always has a relationship with others.
Furthermore, Lin et al.~\cite{LLCFL2021} proposed the interactional
operational laws for PFNs, which get hold of the interaction between the
positive degrees and the neutral degrees (negative degrees) for any two
PFNs. Ate\c{s} and Akay~\cite{AA2020-1} extended the Bonferroni mean to PFNs.
They also introduced the Bonferroni mean, the normalized weighted
Bonferroni mean, and the ordered weighted Bonferroni mean for PFSs. However, all above
aggregation operators have a common shortcoming (see Section~\ref{S-3}):
They do not need to satisfy the
fact that the sum of the three degrees cannot exceed one.
Thus, such
operators are not closed in PFNs. This trouble mainly arises from the
three-dimensional degree structure of PFNs. The aggregation outcomes can be
larger than one whenever two degrees calculated by triangular conorms exist.
Since there is no linear order for ranking all PFNs, all {\it ordered} aggregation
operators are of little or no value for decision-making.

Inspired by the above results on picture fuzzy aggregation operators,
this paper is devoted to constructing the closed operational laws in PFNs and
establishing effective picture fuzzy MCDM methods. In Sections~\ref{S-2}
and \ref{S-3}, by introducing a total order $\preceq _{_{\mathrm{W}}}$ for
PFNs, we prove that it is an admissible order. We also demonstrate that all
PFNs form a complete lattice under this order.
In Section~\ref{S-4}, we consider some examples to explain the non-closeness of
operational laws in \cite{AMAK2019,AA2020-1,Ga2017,JSPY2019,KAA2019,LLCFL2021,
WWGW2019,Wei2017,Wei2018a,XSWWZLX2019}. Intending to construct the closed
operational laws in PFNs, we introduce in Section~\ref{S-5} a new kind of
picture fuzzy operators via strict triangular norms. These operators
consider the interaction between the positive degrees (negative degrees) and
the neutral degrees for any two PFNs. Note that the positive
degrees and the negative degrees are more significant than the neutral
degrees for evaluation and decision-making. Hence, we only consider the
interactional operations that occur in the position of the neutral degrees.
Meanwhile, we prove that our operators are closed in PFNs and obtain their
basic properties. In Section~\ref{S-6}, by using new picture fuzzy
operators, we obtain the picture fuzzy interactional weighted average
operator $\mathrm{PFIWA}_{T,\omega }$ and the picture fuzzy interactional
weighted geometric operator $\mathrm{PFIWG}_{T,\omega }$ induced by a
triangular norm $T$. In addition, we show that they have the monotonicity,
idempotency, boundedness, shift-invariance, and homogeneity. In Section~\ref%
{S-7}, through these operators, a novel MCDM method under the picture fuzzy
environment is developed. Finally, we present an illustrative example
involving the investment options of a commercial company that shows the
decision procedure of our MCDM method. We also describe the comparative
analysis among the operators induced by some famous triangular norms,
such as the algebraic product $T_{\mathbf{P}}$, Schweizer-Sklar t-norm $%
T_{\gamma }^{\mathbf{SS}}$, Hamacher t-norm $T_{\gamma }^{\mathbf{H}}$,
Frank t-norm $T_{\gamma }^{\mathbf{F}}$, Dombi t-norm $T_{\gamma }^{\mathbf{D%
}}$, and Acz\'{e}l-Alsina t-norm $T_{\gamma }^{\mathbf{AA}}$.

\section{Preliminaries}\label{S-2}
\subsection{Order and lattice}

\begin{definition}[{\textrm{\protect\cite[Definition~1.1.3]{HWW2016}}}]
If a binary relation $\preceq$ on a set $X$ satisfies the properties:
\begin{enumerate}[(1)]
\item  $a\preceq a$ (reflexivity);

\item  From $a\preceq b$ and $b\preceq a$, it follows $a=b$ (antisymmetry);

\item  From $a\preceq b$ and $b\preceq c$, it follows $a\preceq c$ (transitivity);
\end{enumerate}
then, it is called a \textit{partial order} on $X$.
The set $X$ with this partial order is denoted by $(X, \preceq)$, which is called a \textit{poset}.
\end{definition}

For a poset $(L, \preceq)$, an \textit{upper bound} of $A\subset L$ is
an element $u\in L$ satisfying $a\preceq u$ for any $a\in A$. If an
upper bound $u$ of $A$ is not larger than any other upper bound of $A$,
then it is said to be the \textit{smallest upper bound} or
\textit{supremum} of $A$, denoted by $\bigvee A$ or $\sup A$. Similarly, the \textit{greatest
lower bound} or \textit{infimum} of a subset $A$ of $L$ can be defined, denoted by
$\bigwedge A$ or $\inf A$. Given any pair of elements, denote by
\begin{equation}
\label{sup-inf-operation}
a\vee b=\sup\{a, b\} \text{ and } a\wedge b=\inf\{a, b\}.
\end{equation}

\begin{definition}[\textrm{\protect\cite{Bir1967}}]
\label{Lattice-Def-1}
A \textit{lattice} is a poset satisfying that every
pair of two elements have the infimum as well as the supremum.
\end{definition}

\subsection{Triangular norm}

\begin{definition}[{\textrm{\protect\cite[Definition~1.1]{KMP2000},\cite{SS1961}}}]
A binary function $T:[0, 1] ^{2}\rightarrow [0, 1] $ is called a \textit{triangular norm} (\textit{t-norm})
if, for any $z_1$, $z_2$, $z_3\in [0, 1] $, the following are satisfied:
\begin{enumerate}
\item[(T1)] $T(z_1, z_2)=T(z_2, z_1)$ (commutativity);

\item[(T2)] $T(z_1, T(z_2, z_3))=T(T(z_1, z_2), z_3)$ (associativity);

\item[(T3)] $T(z_1, z_3)\leq T(z_2, z_3)$ for $z_1\leq z_2$ (monotonicity);

\item[(T4)] $T(z_1, 1)=z_1$ (neutrality).
\end{enumerate}

A binary function $S:[0,1]^{2}\rightarrow [0,1]$ is called a
\textit{t-conorm} if, for any $z_1$, $z_2$, $z_3\in \lbrack 0,1]$, it satisfies (Tl)--(T3) and
(S4) described by
\begin{enumerate}
\item[(S4)] $S(z_1, 0)=z_1$ (neutrality).
\end{enumerate}
\end{definition}

The associativity of t-norms $T: [0, 1]^{2}\rightarrow [0, 1]$ helps us to
extend them to $n$-ary functions $T^{(n)}: [0,
1]^{n}\rightarrow [0, 1]$ in the following way (see~\cite[Remark~1.10]%
{KMP2000} and \cite[Definition~3.23]{GMMP2009})
\begin{equation*}
T^{(n)}(z_1, \ldots, z_{n-1}, z_n)\triangleq T(T^{(n-1)}(z_1,
\ldots, z_{n-1}), z_{n}).
\end{equation*}
If, in particular, we have $z_1= z_2=\cdots =z_n=z$, we shall briefly write
$z_{T}^{(n)}=T^{(n)}(z, z, \ldots, z).$
Finally we put, by convention, $z_{T}^{(0)}=1$ and $z_{T}^{(1)}=z$ for any $z\in [0, 1]$.

\begin{definition}[{\textrm{\protect\cite[Definition~3.2]{KMP2000}}}]
Let $f:[u_1, u_2]\rightarrow [v_1, v_2]$ be a monotone
function, where $[u_1, u_2]$ and $[v_1, v_2]$ are two closed subintervals of
$[-\infty, +\infty]$. The \textit{pseudo-inverse} $f^{(-1)}: [v_1, v_2]\rightarrow
[u_1, u_2]$ of $f$ is defined by
\begin{equation*}
f^{(-1)}(v)=\sup\{u\in [u_1, u_2]\mid (f(u)-v)(f(u_2)-f(u_1))<0\}.
\end{equation*}
\end{definition}

\begin{definition}[{\textrm{\protect\cite[Definition~3.25]{KMP2000}}}]
A strictly decreasing function $\tau: [0,1]\rightarrow \lbrack 0,+\infty ]$
is called an \textit{additive generator} (AG) of a t-norm $T$ if it is
right-continuous in $0$, $\tau(1)=0$, and, for any $(u,v)\in \lbrack 0,1]^{2}$,
we have $\tau(u)+\tau(v)\in \mathrm{Ran}(\tau)\cup \lbrack \tau(0),+\infty ]$
and
$T(u,v)=\tau^{(-1)}(\tau(u)+\tau(v)).$
\end{definition}

\begin{definition}
[{\textrm{\protect\cite[Definitions~2.9 and 2.13]{KMP2000}}}]
A t-norm $T$ is
\begin{enumerate}[(i)]
\item \textit{strictly monotone}, if
$T(u, v_1)<T(u, v_2)$ whenever $u>0$ and $v_1<v_2$.

\item \textit{Archimedean} if, for any $(u, v)\in (0, 1)^{2}$,
there exists some $n\in \mathbb{N}$ such that $u_{T}^{(n)}<v$.

\item \textit{strict}, if it is strictly monotone and continuous.
\end{enumerate}
\end{definition}

\begin{remark}
\label{Strict-->Archi}
By \cite[Proposition~2.15]{KMP2000}, it follows that every strict
t-norm is Archimedean.
\end{remark}

\begin{lemma}[{\textrm{\protect\cite{KMP2000}}}]
\label{Concinuous-Archi-Representation}
For a mapping $T: [0,
1]^{2}\rightarrow [0, 1]$, the following are equivalent:
\begin{enumerate}[{\rm (i)}]
\item $T$ is a strict t-norm;

\item $T$ has a continuous additive generator $\tau$ with $\tau(0)=+\infty$.
\end{enumerate}
\end{lemma}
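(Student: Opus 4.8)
The plan is to prove the two implications separately, treating (ii) $\Rightarrow$ (i) as a direct verification and (i) $\Rightarrow$ (ii) as the substantive direction.

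For (ii) $\Rightarrow$ (i), suppose $T$ has a continuous additive generator $\tau$ with $\tau(1)=0$ and $\tau(0)=+\infty$. Since $\tau$ is strictly decreasing and continuous, it is a homeomorphism from $[0,1]$ onto $[0,+\infty]$; consequently its pseudo-inverse $\tau^{(-1)}$ coincides with the ordinary inverse and is itself continuous and strictly decreasing. As $T(u,v)=\tau^{(-1)}(\tau(u)+\tau(v))$, continuity of $T$ follows immediately from the continuity of $\tau$, of addition, and of $\tau^{(-1)}$. For strict monotonicity, fix $u>0$ so that $\tau(u)<+\infty$; if $v_1<v_2$ then $\tau(v_1)>\tau(v_2)$, hence $\tau(u)+\tau(v_1)>\tau(u)+\tau(v_2)$ as a strict inequality in $[0,+\infty]$, and applying the strictly decreasing map $\tau^{(-1)}$ gives $T(u,v_1)<T(u,v_2)$. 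Thus $T$ is both continuous and strictly monotone, i.e.\ strict.

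For (i) $\Rightarrow$ (ii), suppose $T$ is strict. By Remark~\ref{Strict-->Archi}, $T$ is Archimedean, and by definition it is also continuous. The core step is to invoke the representation theorem for continuous Archimedean t-norms (the Ling/Acz\'{e}l construction, as recorded in~\cite{KMP2000}): every continuous Archimedean t-norm admits a continuous additive generator $\tau$, unique up to a positive multiplicative constant. Concretely, one builds $\tau$ from the one-parameter semigroup generated by $T$---equivalently, from the iterates of a fixed diagonal section---so that $T$ becomes order-isomorphic to a standard t-norm on $[0,+\infty]$ under addition. It remains only to show that strictness forces $\tau(0)=+\infty$. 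I would argue by contradiction: if $\tau(0)=c<+\infty$, then any pair $u,v\in(0,1)$ with $\tau(u)+\tau(v)\geq c$ satisfies $\tau(u)+\tau(v)\in[\tau(0),+\infty]$, whence the pseudo-inverse yields $T(u,v)=\tau^{(-1)}(\tau(u)+\tau(v))=0$. Fixing such a $u>0$ and letting $v$ vary over a small interval $(0,\varepsilon)$ on which $\tau(u)+\tau(v)\geq c$ still holds produces distinct values $v_1<v_2$ with $T(u,v_1)=T(u,v_2)=0$, contradicting strict monotonicity. Hence $\tau(0)=+\infty$.

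I expect the genuine obstacle to be the existence of the continuous additive generator in the direction (i) $\Rightarrow$ (ii), rather than its subsequent normalization. Producing $\tau$ requires the full strength of the representation theory of continuous Archimedean t-norms---establishing the associativity-driven functional equations and the resulting semigroup isomorphism to $([0,+\infty],+)$---which is precisely the technical heart of~\cite{KMP2000}. Since the lemma is stated as a quotation from that source, in practice I would cite the representation theorem there and reserve the self-contained argument for the two short steps above: the direct verification in (ii) $\Rightarrow$ (i) and the zero-divisor contradiction pinning down $\tau(0)=+\infty$.
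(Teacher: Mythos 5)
Your proposal is correct, and it matches the paper's treatment of this statement: the lemma is imported verbatim from Klement--Mesiar--Pap \cite{KMP2000} with no proof given in the paper, so the substantive content (the representation theorem for continuous Archimedean t-norms) is deferred to that reference exactly as you do. The two pieces you supply yourself --- the direct verification that a continuous generator with $\tau(0)=+\infty$ yields a continuous, strictly monotone $T$, and the zero-divisor contradiction showing that strictness rules out $\tau(0)<+\infty$ (since $\tau(u)+\tau(v)\geq\tau(0)$ forces $T(u,v)=0$ on a whole interval of $v$'s) --- are both sound and correctly identify where the real technical weight lies.
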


\subsection{Picture fuzzy set}

\begin{definition}[{\textrm{\protect\cite[Definition 3.1]{CK2014}}}]
Let $X$ be the universe of discourse. A \textit{picture fuzzy set} (PFS) $P$
in $X$ is defined as an object with the following form
\begin{equation}  \label{eq-IFS-1}
P=\left\{\langle x, \mu_{P}(x), \eta_{P}(x), \nu_{P}(x)\rangle\mid x\in
X\right\},
\end{equation}
where $\mu_{P}(x)$, $\eta_{P}(x)$, $\nu_{P}(x)\in [0, 1]$, $%
\mu_{P}(x)+\eta_{P}(x)+\nu_{P}(x)\leq 1$, and $\pi_{P}(x)=1-(\mu_{P}(x)+%
\eta_{P}(x)+\nu_{P}(x))$ for any $x\in X$. $\mu_{P}(x)$, $\eta_{P}(x)$, and $%
\nu_{P}(x)$ denote the \textit{degree of positive membership}, \textit{%
neutral membership}, and \textit{negative membership} of $x$ in $P$,
respectively. $\pi_{P}(x)$ is the \textit{degree of refusal membership} of $%
x $ in $P$. In particular, the triplet $\langle \mu, \eta,
\nu\rangle$ satisfying that $\mu$, $\eta$, $\nu\in [0, 1]$ and
$\mu+\eta+\nu\leq 1$ is called a \textit{picture
fuzzy number} (PFN). %%For convenience, a PFN $\alpha$ is denoted by $%
%%\alpha=\langle \mu_{\alpha}, \eta_{\alpha}, \nu_{\alpha}\rangle$.
\end{definition}

Let $\mathscr{P}$ denote the set of all PFNs, i.e., $\mathscr{P}%
=\{\langle\mu, \eta, \nu\rangle \mid \mu, \eta, \nu\in [0, 1] \text{ and }
\mu+\eta+\nu\leq 1\}$. If $\eta_{P}(x)=0$, then the PFS reduces to
the Atanassov's IFS. Thus, the Atanassov's IFSs are special form of PFSs.

The following inclusion relation was introduced by Cuong and Kreinovich \cite%
{CK2014}.

\begin{definition}[\textrm{\protect\cite{CK2014}}]
\label{sub-order}
For two PFNs $\alpha$ and $\beta$, $\alpha\subseteq \beta$
if and only if $\mu_{\alpha}\leq \mu_{\beta}$, $\eta_{\alpha}\leq
\eta_{\beta}$, and $\nu_{\alpha}\geq \nu_{\beta}$.
\end{definition}

It can be verified that `$\subseteq$' is a partial order on $\mathscr{P}$.
However, the poset $(\mathscr{P}, \subseteq)$ does not form a lattice, since
the two PFNs $\langle 1, 0, 0\rangle$ and $\langle 0, 1, 0\rangle$ does not
have a supremum in $\mathscr{P}$.

With a view to comparing two PFNs, Wang et al.~\cite{WZTT2017} introduced the
following comparison laws.

\begin{definition}[{\textrm{\protect\cite[Definition 3.3]{WZTT2017},
\cite[Definition~4.2.2]{Wang2015}}}]
\label{WZTT-Order} Given a PFN $\beta=\langle \mu
_{\beta},\eta _{\beta},\nu _{\beta}\rangle $, let the \textit{score
function} $S(\beta)$ and the \textit{accuracy function} $H(\beta)$ of $%
\beta$ be defined by $S(\beta)=\mu _{\beta}-\nu _{\beta}$ and $%
H(\beta)=\mu _{\beta}+\eta _{\beta}+\nu _{\beta}$. In this case,
for $\beta$, $\gamma \in \mathscr{P}$,
\begin{itemize}
\item if $S(\beta)<S(\gamma)$, then %$\alpha$ is smaller than $\beta$, denoted by
$\beta\prec \gamma$;

\item if $S(\beta)=S(\gamma)$, then
\begin{itemize}
\item if $H(\beta)=H(\gamma)$, then % $\alpha$ is equivalent to $\beta$, denoted by
$\beta\sim \gamma$;

\item if $H(\beta)<H(\gamma)$, then $\beta \prec \gamma$.
\end{itemize}
\end{itemize}
If $\beta \prec \gamma$ or $\beta \sim \gamma$, we denote it by $\beta
\preceq \gamma$.
\end{definition}

Notice that the order $\preceq $ given by Definition~\ref{WZTT-Order} is not
a linear order on $\mathscr{P}$. Moreover, it is not a partial order on
$\mathscr{P}$ since the antisymmetry property is violated. We observe that
the degrees of the positive membership and the negative membership are more
important than the degree of the neutral membership for PFNs. Thus, the
larger (as the linear order of Atanassov's intuitionistic fuzzy numbers in
\cite[Definition 3.1]{Xu2007}) the degrees of the positive membership and
the negative membership, the better the PFN. In the case of that the degrees
of the positive membership and the negative membership are respectively equal,
the higher the degree of the neutral membership, the better the PFN
since more comprehensive information is obtained. Based on this idea, we
introduce the following linear order for PFNs.

\begin{definition}
\label{Order-Wu} Given a PFN $\beta=\langle \mu _{\beta
},\eta _{\beta},\nu _{\beta}\rangle $, let the \textit{score function} $%
S(\beta),$ the \textit{first accuracy function} $H_{1}(\beta),$ and the
\textit{second accuracy function} $H_{2}(\beta)$ of $\beta$ be defined
by $S(\beta)=\mu _{\beta}-\nu _{\beta},$ $H_{1}(\beta)=\mu _{\beta
}+\nu _{\beta},$ and $H_{2}(\beta)=\mu _{\beta}+\eta _{\beta}+\nu
_{\beta}$, respectively. In this case, for $\beta$, $\gamma\in \mathscr{P}$,
\begin{itemize}
\item if $S(\beta)<S(\gamma)$, then %%$\alpha$ is smaller than $\beta$, denoted by
$\beta\prec_{_{\mathrm{W}}} \gamma$;

\item if $S(\beta)=S(\gamma)$, then

\begin{itemize}
\item if $H_{1}(\beta)<H_{1}(\gamma)$, then $\beta \prec _{_{\mathrm{W}%
}}\gamma$;

\item if $H_1(\beta)=H_1(\gamma)$, then

\begin{itemize}
\item if $H_{2}(\beta)=H_{2}(\gamma)$, then %%$\alpha $ is equivalent to $ \beta ,$ denoted by
$\beta =\gamma$;

\item if $H_{2}(\beta)<H_{2}(\gamma)$, then $\beta \prec _{_{\mathrm{W}%
}}\gamma$.
\end{itemize}
\end{itemize}
\end{itemize}
If $\alpha \prec _{_{\mathrm{W}}}\beta $ or $\alpha =\beta $, we denote it
by $\alpha \preceq _{_{\mathrm{W}}}\beta $.
\end{definition}

\section{Some basic properties of linear order $\preceq_{_{\mathrm{W}}}$}
\label{S-3}

\begin{example}
Consider $\beta=\langle0.2,0.2,0.1\rangle$ and $\gamma=%
\langle 0.3, 0, 0.2\rangle$. (1) If we use the order $\preceq $ defined
in Definition~\ref{WZTT-Order}, then $S(\beta)=S(\gamma)=0.1$ and
$H(\beta)=H(\gamma)=0.5$, and thus $\beta\sim \gamma$, which means that we cannot distinguish between
$\beta$ and $\gamma$ by using the order $\preceq $. (2) If we use
the order $\preceq _{_{\mathrm{W}}}$ defined in Definition~\ref{Order-Wu},
then $S(\beta)=S(\gamma)=0.1 $ and $H_{1}(\beta)=0.3<0.5=H_{1}(\gamma)$,
and thus $\beta\prec _{_{\mathrm{W}}}\gamma$. Clearly, this is reasonable.
\end{example}

\begin{theorem}
\label{Total-Order-Thm}
The order $\preceq_{_{\mathrm{W}}}$ defined in Definition~\ref{Order-Wu} is
a total order.
\end{theorem}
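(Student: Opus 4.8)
The plan is to verify the three defining properties of a total order, namely that $\preceq_{_{\mathrm{W}}}$ is reflexive, antisymmetric, transitive, and additionally total (any two PFNs are comparable). The entire argument hinges on one clean observation: the relation is built lexicographically from the three real-valued functions $S$, $H_1$, and $H_2$. So the strategy is to reduce everything to the standard fact that the lexicographic order on $\mathbb{R}^3$ is a total order, and then check that the map $\beta\mapsto(S(\beta),H_1(\beta),H_2(\beta))$ is injective on $\mathscr{P}$.

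First I would make the lexicographic structure explicit. Define $\Phi:\mathscr{P}\to\mathbb{R}^3$ by $\Phi(\beta)=(S(\beta),H_1(\beta),H_2(\beta))$. Unwinding Definition~\ref{Order-Wu}, one sees directly that $\beta\preceq_{_{\mathrm{W}}}\gamma$ holds if and only if $\Phi(\beta)$ precedes or equals $\Phi(\gamma)$ in the lexicographic order on $\mathbb{R}^3$. Reflexivity, transitivity, and totality then follow immediately, because the lexicographic order on $\mathbb{R}^3$ is itself a total order and these three properties are inherited by any relation pulled back along a map. Totality in particular is transparent: for any two triples of reals, comparing first coordinates, then second, then third always yields a decision.

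The one step requiring genuine care is antisymmetry, and this is where I expect the main obstacle to lie. The lexicographic order on $\mathbb{R}^3$ is only a \emph{total preorder} when pulled back along a non-injective map; antisymmetry of $\preceq_{_{\mathrm{W}}}$ forces me to prove that $\Phi$ is injective, i.e.\ that $S(\beta)=S(\gamma)$, $H_1(\beta)=H_1(\gamma)$, and $H_2(\beta)=H_2(\gamma)$ together imply $\beta=\gamma$. This is a small linear-algebra computation: from $\mu_\beta-\nu_\beta=\mu_\gamma-\nu_\gamma$ and $\mu_\beta+\nu_\beta=\mu_\gamma+\nu_\gamma$ I recover $\mu_\beta=\mu_\gamma$ and $\nu_\beta=\nu_\gamma$ (the $2\times 2$ system in $(\mu,\nu)$ has determinant $-2\neq0$), and then $H_2(\beta)=H_2(\gamma)$ forces $\eta_\beta=\eta_\gamma$. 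Hence $\beta=\gamma$. This injectivity is exactly the reason Definition~\ref{Order-Wu} uses the triple $(S,H_1,H_2)$ rather than the pair $(S,H)$ of Definition~\ref{WZTT-Order}, whose failure of antisymmetry was already noted in the text.

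To organize the write-up cleanly, I would state the if-and-only-if characterization in terms of $\Phi$ as a preliminary observation, dispatch reflexivity, transitivity, and totality in a single sentence each by appeal to the lexicographic order, and then devote the bulk of the proof to the injectivity computation establishing antisymmetry. This keeps the routine parts short and isolates the only nontrivial content.
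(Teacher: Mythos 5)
Your proposal is correct and takes essentially the same route as the paper: the paper's entire proof is exactly your injectivity step, showing that $S(\alpha)=S(\beta)$, $H_1(\alpha)=H_1(\beta)$, $H_2(\alpha)=H_2(\beta)$ force $\alpha=\beta$ because the associated linear system is nonsingular (the paper computes the $3\times 3$ determinant $-2$, where you solve the $2\times 2$ subsystem in $(\mu,\nu)$ and then recover $\eta$ from $H_2$), after which it appeals to the lexicographic/partial-order structure for reflexivity, transitivity, and totality just as you do. The only cosmetic caveat is that your ``preliminary'' if-and-only-if characterization of $\preceq_{_{\mathrm{W}}}$ via $\Phi$ already uses injectivity in the equality case, so in the write-up the injectivity computation should come first; this is a reordering, not a gap.
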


\begin{remark}
(1) The bottom and the top elements of $\mathscr{P}$
{are, respectively,} $\langle0,0,1\rangle$ and $\langle1,0,0%
\rangle$ under the order $\preceq _{_{\mathrm{W}}}$.

(2) According to our best knowledge, there is no general methodology that
ranks any two PFNs.
\end{remark}

Let $L([0,1])=\{[u, v]\mid 0\leq u\leq v\leq1\}$.
If a binary relation $\preceq $ on $L([0,1])$ is a total order that refines the usual
order on $L([0,1])$, then is is called an
\textit{admissible order}; that is, for any $[u_1, v_1],$ $[u_2, v_2]\in L([0,1]),$ $%
[u_1, v_1]\preceq \lbrack u_2, v_2]$ whenever $u_1\leq u_2$ and $v_1\leq v_2$, this was introduced by Bustince
et al. (see~\cite[%
Definition~3.1]{BFKM2013}). Motivated by this fact, we introduce the
following admissible order for PFNs.

\begin{definition}
An order $\preceq $ on $\mathscr{P}$ is said to be an \textit{admissible
order} if it is a total order that refines the order $\subseteq $ defined in
Definition~\ref{sub-order}; i.e., for any $\beta$, $\gamma \in \mathscr{P}$%
, $\beta \subseteq \gamma$ implies that $\beta \preceq \gamma$.
\end{definition}

\begin{theorem}
\label{Adm-Order-Thm}
The order $\preceq_{_{\mathrm{W}}}$ defined in Definition~\ref{Order-Wu} is
an admissible order.
\end{theorem}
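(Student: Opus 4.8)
The plan is to lean on work already done: Theorem~\ref{Total-Order-Thm} establishes that $\preceq_{_{\mathrm{W}}}$ is a total order, so by the definition of admissibility the only thing left to verify is that $\preceq_{_{\mathrm{W}}}$ refines the componentwise order $\subseteq$ of Definition~\ref{sub-order}. Concretely, I would fix two PFNs $\beta=\langle\mu_\beta,\eta_\beta,\nu_\beta\rangle$ and $\gamma=\langle\mu_\gamma,\eta_\gamma,\nu_\gamma\rangle$ with $\beta\subseteq\gamma$, i.e. $\mu_\beta\le\mu_\gamma$, $\eta_\beta\le\eta_\gamma$, and $\nu_\beta\ge\nu_\gamma$, and show that $\beta\preceq_{_{\mathrm{W}}}\gamma$ by walking down the lexicographic clauses of Definition~\ref{Order-Wu} on the triple $(S,H_1,H_2)$.

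First I would compare the score functions. Since $\mu_\beta\le\mu_\gamma$ and $\nu_\beta\ge\nu_\gamma$, these two inequalities add to give $S(\beta)=\mu_\beta-\nu_\beta\le\mu_\gamma-\nu_\gamma=S(\gamma)$. If the inequality is strict, the first clause of Definition~\ref{Order-Wu} yields $\beta\prec_{_{\mathrm{W}}}\gamma$ at once, and nothing more is needed.

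The heart of the argument is the borderline case $S(\beta)=S(\gamma)$. Here I would rewrite the equal-score condition as $\mu_\gamma-\mu_\beta=\nu_\gamma-\nu_\beta$ and observe that the left-hand side is nonnegative while the right-hand side is nonpositive; hence both vanish, forcing $\mu_\beta=\mu_\gamma$ and $\nu_\beta=\nu_\gamma$. It follows that $H_1(\beta)=\mu_\beta+\nu_\beta=\mu_\gamma+\nu_\gamma=H_1(\gamma)$, so the order descends to the second accuracy function. Because the positive and negative degrees now coincide and $\eta_\beta\le\eta_\gamma$, we get $H_2(\beta)=\mu_\beta+\eta_\beta+\nu_\beta\le\mu_\gamma+\eta_\gamma+\nu_\gamma=H_2(\gamma)$; a strict inequality gives $\beta\prec_{_{\mathrm{W}}}\gamma$, while equality forces $\eta_\beta=\eta_\gamma$ and hence $\beta=\gamma$. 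In every branch $\beta\preceq_{_{\mathrm{W}}}\gamma$, which establishes the refinement and therefore the admissibility.

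I do not expect a genuine obstacle here: the whole proof is a short case analysis driven by the lexicographic structure of $\preceq_{_{\mathrm{W}}}$. The only step demanding care is the sign argument in the equal-score case, where one must notice that the equality $S(\beta)=S(\gamma)$ combined with the two componentwise inequalities pins $\mu$ and $\nu$ down \emph{exactly}, rather than merely bounding them; without that observation the reduction to $H_1$ and then to $H_2$ would not go through cleanly.
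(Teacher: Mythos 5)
Your proposal is correct and follows essentially the same route as the paper's proof: both reduce admissibility to the refinement property (totality being Theorem~\ref{Total-Order-Thm}), split on whether $S(\beta)<S(\gamma)$ or $S(\beta)=S(\gamma)$, and in the equal-score case use the sign argument to pin down $\mu_\beta=\mu_\gamma$ and $\nu_\beta=\nu_\gamma$, whence $H_1$ agrees and $H_2(\beta)\le H_2(\gamma)$ closes the argument. The step you flag as the one demanding care is exactly the pivotal observation in the paper's Case~2.
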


\begin{theorem}
$(\mathscr{P}, \preceq_{_{\mathrm{W}}})$ is a complete lattice.
\end{theorem}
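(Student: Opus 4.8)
The plan is to use the fact, established in Theorem~\ref{Total-Order-Thm}, that $\preceq_{_{\mathrm{W}}}$ is a total order. For a totally ordered set, being a complete lattice is equivalent to possessing a least and a greatest element together with \emph{order-completeness} (every nonempty subset bounded above has a supremum): once these hold, the empty set has $\sup\emptyset=\bot$ and $\inf\emptyset=\top$, every nonempty $A$ has a supremum since it is bounded above by $\top$, and $\inf A$ is recovered as the supremum of the set of lower bounds of $A$. So I would reduce the problem to verifying order-completeness plus the existence of extremal elements.

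The clean device is to transport $(\mathscr{P},\preceq_{_{\mathrm{W}}})$ into $\mathbb{R}^3$ by the coordinate map $\phi(\langle\mu,\eta,\nu\rangle)=(S,H_1,H_2)=(\mu-\nu,\,\mu+\nu,\,\mu+\eta+\nu)$. This map is invertible via $\mu=(x_1+x_2)/2$, $\nu=(x_2-x_1)/2$, $\eta=x_3-x_2$, and substituting the defining constraints of a PFN ($\mu,\eta,\nu\geq 0$ and $\mu+\eta+\nu\leq 1$) shows that the image is exactly $K=\{(x_1,x_2,x_3)\in\mathbb{R}^3 : |x_1|\leq x_2\leq x_3\leq 1\}$, a closed and bounded, hence compact, subset of $\mathbb{R}^3$. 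By Definition~\ref{Order-Wu} one has $\beta\preceq_{_{\mathrm{W}}}\gamma$ if and only if $\phi(\beta)\leq_{\mathrm{lex}}\phi(\gamma)$, so $\phi$ is an order isomorphism onto $(K,\leq_{\mathrm{lex}})$ and it suffices to prove that lexicographic order on $K$ is a complete lattice.

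The core step is the claim that $\leq_{\mathrm{lex}}$ on any compact $K\subseteq\mathbb{R}^n$ is order-complete, which I would prove by induction on $n$. Given a nonempty $A\subseteq K$, put $s=\sup\{x_1 : x\in A\}$; compactness forces $s\in\pi_1(K)$ and makes the slice $K_s=\{x\in K : x_1=s\}$ nonempty and compact. If $s$ is attained by some element of $A$, I recurse on $A\cap K_s$ inside the lower-dimensional compact slice and check that its supremum there is the supremum of $A$ in $K$. If $s$ is not attained, the supremum is the lexicographically least element of $K_s$ (it exists by compactness): it is an upper bound because every element of $A$ has strictly smaller first coordinate, and it is least because any strictly smaller element of $K$ would have first coordinate below $s$ and hence be dominated by some element of $A$. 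The base case $n=1$ is the ordinary Dedekind completeness of $\mathbb{R}$ intersected with a closed set. Applying this with $A=K$ (and its order dual) produces a lexicographic maximum and minimum of $K$, which pull back under $\phi^{-1}$ to $\langle 1,0,0\rangle$ and $\langle 0,0,1\rangle$, matching the top and bottom identified earlier; combining this with order-completeness gives the complete lattice structure on $K$, and transporting back through $\phi$ proves that $(\mathscr{P},\preceq_{_{\mathrm{W}}})$ is a complete lattice.

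The main obstacle is precisely the non-attained case: when $\sup\{x_1 : x\in A\}$ is realized by no element of $A$, the supremum must ``jump'' to the lexicographically smallest point of the next slice, and one must verify both that this candidate genuinely lies in $K$ and that it is the least upper bound. This is exactly the point where compactness—closedness of $K$ and of its coordinate projections and slices—is indispensable, so that is the step I would write out most carefully, while the order-isomorphism bookkeeping and the reduction from total order to completeness are routine.
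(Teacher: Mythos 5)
Your proof is correct, but it necessarily takes a different route from the paper, because the paper does not actually contain a proof of this theorem: it disposes of it in one line by asserting that it follows ``analogically'' to \cite[Theorem~4.1]{WWLCZ2021}, an intuitionistic-fuzzy analogue in a manuscript cited only as submitted, and omits all details. Your argument fills this gap with a self-contained proof, and its steps check out: the map $\phi(\langle\mu,\eta,\nu\rangle)=(\mu-\nu,\,\mu+\nu,\,\mu+\eta+\nu)$ is a bijection from $\mathscr{P}$ onto $K=\{(x_1,x_2,x_3)\colon |x_1|\leq x_2\leq x_3\leq 1\}$ (your inverse formulas are right); Definition~\ref{Order-Wu} says exactly that $\preceq_{_{\mathrm{W}}}$ is the pullback of $\leq_{\mathrm{lex}}$ under $\phi$, with injectivity of $\phi$ being precisely the determinant computation the paper uses to prove Theorem~\ref{Total-Order-Thm}; and your induction showing that any compact $K\subseteq\mathbb{R}^n$ is order-complete under $\leq_{\mathrm{lex}}$ is sound. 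In particular you handle the one genuinely delicate point, the non-attained case, correctly: the lex-least point $u$ of the slice $K_s$ exists by compactness, it bounds $A$ above since every $a\in A$ has $a_1<s$, and any upper bound $v$ with $v<_{\mathrm{lex}}u$ would have to satisfy $v_1<s$ (it cannot lie in $K_s$ by minimality of $u$, nor can $v_1>s=u_1$), whence some $a\in A$ has $a_1>v_1$ and $v$ fails to be an upper bound. The lex-extremes $(1,1,1)$ and $(-1,1,1)$ of $K$ pull back to $\langle 1,0,0\rangle$ and $\langle 0,0,1\rangle$, agreeing with the paper's Remark on the top and bottom elements. What your approach buys over the paper's: the claim becomes verifiable without access to the unpublished reference, the real content (compactness plus lexicographic order implies order-completeness) is isolated as a reusable lemma, and that lemma is dimension-independent, so the same argument with $n=2$ also yields the cited intuitionistic-fuzzy result.
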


\begin{proof}
Analogically to the proof of \cite[Theorem~4.1]{WWLCZ2021}, we conclude that
this is true. We omit its proof here.
\end{proof}

\section{Limitations in existing picture fuzzy aggregation operators}\label{S-4}

After PFSs were introduced by Cuong and Kreinovich~\cite{CK2013}, various
operational laws for PFSs and PFNs were proposed. However, many of these
laws violate the condition that the sum of the three degrees should be
in the interval $[0, 1]$. Thus, they are not closed in $\mathscr{P}$.

\subsection{Operations in \protect\cite{AMAK2019,Ga2017}}

By applying the additive generators, Garg~\cite{Ga2017} introduced the
following operational laws for PFNs and obtained some basic properties.

\begin{definition}[{\textrm{\protect\cite[Definition 4]{Ga2017}}}]
\label{Ga-Operations} Let $\tau$ be an AG of a t-norm $T$
and $\zeta $ be an AG of the dual t-conorm of $T;$ i.e., $\zeta (u)=\tau (1-u)$.
For $\alpha _{1}=\left\langle \mu _{1},\eta _{1},\nu
_{1}\right\rangle $, $\alpha _{2}=\left\langle \mu _{2},\eta _{2},\nu
_{2}\right\rangle $, $\alpha =\langle \mu ,\eta ,\nu \rangle\in \mathscr{P}$,
define
\begin{enumerate}[(i)]
\item $\alpha _{1}\oplus \alpha _{2}=\big\langle \zeta ^{-1}(\zeta (\mu
_{1})+\zeta (\mu _{2})),\tau ^{-1}(\tau (\eta _{1})+\tau (\eta _{2})),
\tau^{-1}(\tau (\nu _{1})+\tau (\nu _{2}))\big\rangle $;

\item $\alpha _{1}\otimes \alpha _{2}=\big\langle \tau ^{-1}(\tau (\mu
_{1})+\tau (\mu _{2})),\zeta ^{-1}(\zeta (\eta _{1})+\zeta (\eta
_{2})),
\zeta ^{-1}(\zeta (\nu _{1})+\zeta (\nu _{2}))\big\rangle $;

\item $\lambda \cdot \alpha =\left\langle \zeta ^{-1}(\lambda \zeta (\mu
)),\tau ^{-1}(\lambda \tau (\eta )),\tau ^{-1}(\lambda \tau (\nu
))\right\rangle $;

\item $\alpha ^{\lambda }=\left\langle \tau ^{-1}(\lambda \tau (\mu )),\zeta
^{-1}(\lambda \zeta (\eta )),\zeta ^{-1}(\lambda \zeta (\nu ))\right\rangle $%
; $\lambda >0$.
\end{enumerate}
\end{definition}

\begin{theorem}[{\textrm{\protect\cite[Theorem~1]{Ga2017}}}]
\label{Garg-Thm} Let $\alpha_{1}$, $\alpha_{2}$, and $\alpha$ be three
PFNs. Then, $\alpha _{1}\oplus \alpha _{2}$, $\alpha _{1}\otimes \alpha _{2}$%
, $\lambda \cdot \alpha $, and $\alpha ^{\lambda }$ ($\lambda >0$) are
PFNs.
\end{theorem}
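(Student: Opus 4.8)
The plan is to verify, for each of the four expressions and for every $\lambda>0$, the two defining requirements of a PFN: that each coordinate lies in $[0,1]$, and that the three coordinates sum to at most $1$. The coordinate-range requirement is routine. Since $\tau$ is a strictly decreasing additive generator of the strict t-norm $T$ with $\tau(1)=0$ and (by Lemma~\ref{Concinuous-Archi-Representation}) $\tau(0)=+\infty$, its dual generator $\zeta(u)=\tau(1-u)$ is a strictly increasing bijection of $[0,1]$ onto $[0,+\infty]$ with $\zeta(0)=0$ and $\zeta^{-1}(w)=1-\tau^{-1}(w)$. Hence for inputs in $[0,1]$ the arguments $\zeta(\mu_1)+\zeta(\mu_2)$, $\tau(\eta_1)+\tau(\eta_2)$, $\lambda\tau(\eta)$, $\lambda\zeta(\mu)$, and so on all lie in $[0,+\infty]$, so the corresponding (pseudo-)inverses return values in $[0,1]$. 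This settles the first half for all four operations.

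The real content is the sum constraint, and first I would rewrite the operations through $T$ and its dual t-conorm $S$ using $\tau^{-1}(\tau(a)+\tau(b))=T(a,b)$, $\zeta^{-1}(\zeta(a)+\zeta(b))=S(a,b)$, and the duality $S(a,b)=1-T(1-a,1-b)$. This gives $\alpha_1\oplus\alpha_2=\langle S(\mu_1,\mu_2),T(\eta_1,\eta_2),T(\nu_1,\nu_2)\rangle$ and $\alpha_1\otimes\alpha_2=\langle T(\mu_1,\mu_2),S(\eta_1,\eta_2),S(\nu_1,\nu_2)\rangle$. Substituting the duality identity, the requirement $\mu+\eta+\nu\le 1$ for $\oplus$ collapses to
\begin{equation*}
T(\eta_1,\eta_2)+T(\nu_1,\nu_2)\le T(1-\mu_1,1-\mu_2),
\end{equation*}
while for $\otimes$ it collapses to
\begin{equation*}
T(\mu_1,\mu_2)+1\le T(1-\eta_1,1-\eta_2)+T(1-\nu_1,1-\nu_2).
\end{equation*}
The scalar operations reduce similarly: $\lambda\cdot\alpha$ is of $\oplus$-type (t-conorm in the positive coordinate, t-norm in the other two) and $\alpha^{\lambda}$ is of $\otimes$-type, so for integer $\lambda=n$ one obtains the same inequalities with the $n$-fold aggregates $\mu_T^{(n)}$, etc., in place of the binary ones, and I would then try to pass to arbitrary $\lambda>0$ by monotonicity and continuity in $\lambda$.

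To attack the $\oplus$-inequality I would use the PFN hypotheses $1-\mu_i\ge \eta_i+\nu_i$ together with the monotonicity (T3) of $T$, which give $T(1-\mu_1,1-\mu_2)\ge T(\eta_1+\nu_1,\eta_2+\nu_2)$; it would then suffice to prove the superadditivity-type estimate $T(\eta_1,\eta_2)+T(\nu_1,\nu_2)\le T(\eta_1+\nu_1,\eta_2+\nu_2)$. This is exactly where I expect the main obstacle to lie. Such superadditivity is genuine for the algebraic product $T_{\mathbf{P}}$, where it is the elementary identity $(\eta_1+\nu_1)(\eta_2+\nu_2)\ge \eta_1\eta_2+\nu_1\nu_2$, but it is a strong structural demand that need not hold for an arbitrary strict t-norm; and the $\otimes$-inequality is even more delicate, since it already fails for $T_{\mathbf{P}}$: taking $\alpha_1=\alpha_2=\langle 0.8,0.1,0.1\rangle$ yields $\mu_1\mu_2+S(\eta_1,\eta_2)+S(\nu_1,\nu_2)=0.64+0.19+0.19=1.02>1$ (the same witness defeats $\alpha^{2}$). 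Thus the honest assessment of this plan is that the coordinate-range half goes through unconditionally, whereas the sum-constraint half hinges on a $T$-superadditivity inequality that cannot be established in the stated generality — indeed it is false — so the difficulty of the statement is precisely this inequality, and it is where the theorem as stated breaks down.
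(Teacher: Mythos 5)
Your conclusion matches the paper's: the quoted result of Garg is false, and the paper's own treatment of this statement is a refutation, not a proof. Your route to the refutation, however, is genuinely different. The paper constructs a bespoke piecewise additive generator (Example~\ref{Exm-Wu}) whose strict t-norm violates exactly the superadditivity you isolate, so that the single witness $\alpha_1=\alpha_2=\langle\frac{1}{2},\frac{1}{4},\frac{1}{4}\rangle$ defeats $\oplus$ and $\otimes$ simultaneously (giving $\langle\frac{13}{16},\frac{1}{8},\frac{1}{8}\rangle$ and $\langle\frac{3}{16},\frac{1}{2},\frac{1}{2}\rangle$, with sums $\frac{17}{16}$ and $\frac{19}{16}$), and a continuation example kills $\lambda\cdot\alpha$ and $\alpha^{\lambda}$ for $\lambda\in(0,1)$. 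You instead reduce closedness to two clean inequalities and observe that the $\otimes$-inequality fails already for the algebraic product (with $\tau=-\ln$): your witness $\langle 0.8,0.1,0.1\rangle$ is a legitimate PFN and yields $0.64+0.19+0.19=1.02>1$, which also defeats $\alpha^{2}$; since the theorem asserts all four operations are closed, this alone refutes it. Your approach buys simplicity (no exotic generator) and a sharper structural diagnosis: closedness of $\oplus$ over all PFNs is precisely the estimate $T(\eta_1,\eta_2)+T(\nu_1,\nu_2)\le T(\eta_1+\nu_1,\eta_2+\nu_2)$, which holds for $T_{\mathbf{P}}$ and is essentially the germ of the paper's repaired interactional laws in Definition~\ref{Def-Wu}. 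What the paper's construction buys, and what your proposal leaves unwitnessed, is the failure of $\oplus$ itself: you assert that superadditivity ``need not hold'' for a general strict t-norm but exhibit no example, and since the product does satisfy it, a counterexample for $\oplus$ genuinely requires a non-standard t-norm like the paper's piecewise $\tau$.
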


The following example shows that none of the operational laws defined in
Definition~\ref{Ga-Operations} (\cite[Definition 4]{Ga2017}) is closed in $%
\mathscr{P}$. Thus, Theorem~\ref{Garg-Thm} (\cite[Theorem~1]{Ga2017}) does
not hold.

\begin{example}
\label{Exm-Wu} Define $\tau: [0, 1]\rightarrow [0, +\infty]$ as follows:
\begin{equation*}
\tau(x)=
\begin{cases}
+\infty, & x=0, \\
-\log_{8}x-\frac{1}{3}, & x\in (0, \frac{1}{8}], \\
-\frac{8}{3}x+1, & x\in [\frac{1}{8}, \frac{1}{4}], \\
-\frac{1}{3}x+\frac{5}{12}, & x\in [\frac{1}{4}, \frac{1}{2}], \\
-\frac{1}{2}(x-1), & x\in [\frac{1}{2}, 1],%
\end{cases}%
\end{equation*}
$\zeta(x)=\tau(1-x)$, and take $\alpha_1=\alpha_2=\langle\frac{1}{2}, \frac{1%
}{4}, \frac{1}{4}\rangle \in \mathscr{P}$. Clearly, $\tau$ is an AG of some strict t-norm
by Lemma~\ref{Concinuous-Archi-Representation}.
\begin{figure}[H]
\begin{center}
\scalebox{0.45}{\includegraphics{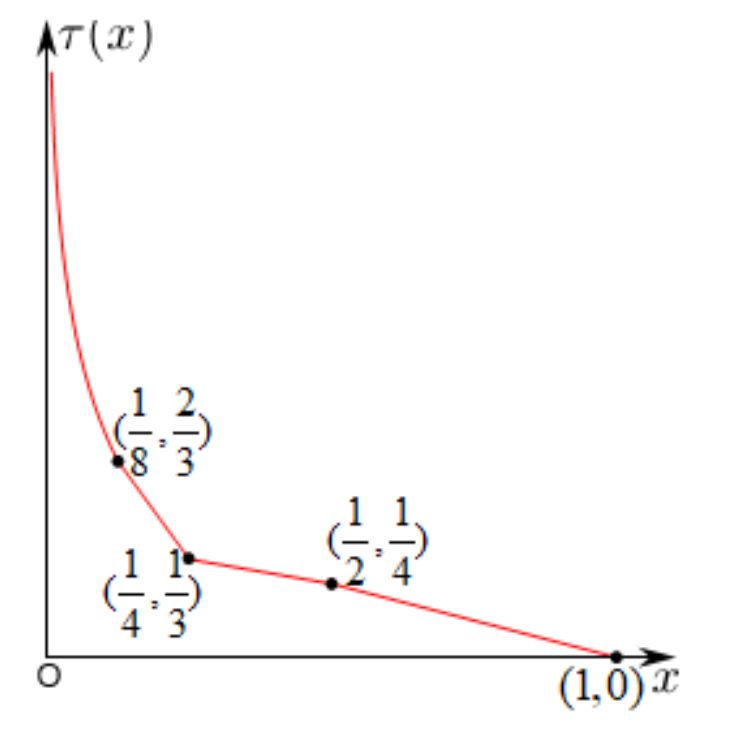}}
\end{center}
\caption{Graph of $\protect\tau$}
\label{Fig-1}
\end{figure}

Applying Definition~\ref{Ga-Operations} and by direct calculation, we have
\begin{small}\begin{align*}
&\quad 2\alpha_1=\alpha_1\oplus \alpha_2\\
& =\left\langle \zeta^{-1}\left(2\zeta\left(\frac{1}{2}\right)\right),
\tau^{-1}\left(2\tau\left(\frac{1}{4}\right)\right),
\tau^{-1}\left(2\tau\left(\frac{1}{4}\right)\right)\right\rangle \\
& =\left\langle\zeta^{-1}\left(\frac{1}{2}\right), \frac{1}{8}, \frac{1}{8}%
\right\rangle =\left\langle \frac{13}{16}, \frac{1}{8}, \frac{1}{8}%
\right\rangle,
\end{align*}
\end{small}
and
\begin{small}
\begin{align*}
&\quad (\alpha_1)^{2}=\alpha_1\otimes \alpha_2\\
& =\left\langle \tau^{-1}\left(2\tau\left(\frac{1}{2}\right)\right),
\zeta^{-1}\left(2\zeta\left(\frac{1}{4}\right)\right),
\zeta^{-1}\left(2\zeta\left(\frac{1}{4}\right)\right)\right\rangle \\
%%& =\left\langle\tau^{-1}\left(\frac{1}{2}\right), \zeta^{-1}\left(\frac{1}{4}%
%%\right), \zeta^{-1}\left(\frac{1}{4}\right)\right\rangle \\
& =\left\langle \frac{3}{16}, \frac{1}{2}, \frac{1}{2}\right\rangle.
\end{align*}
\end{small}
Clearly, $2\alpha_1=\alpha_1\oplus \alpha_2=\left\langle \frac{13}{16},
\frac{1}{8}, \frac{1}{8}\right\rangle \notin \mathscr{P}$ and $%
(\alpha_1)^{2}=\alpha_1\otimes \alpha_2=\left\langle \frac{3}{16}, \frac{1}{2%
}, \frac{1}{2}\right\rangle \notin \mathscr{P}$ since $\frac{13}{16}+\frac{1%
}{8}+\frac{1}{8}=\frac{17}{16}>1$ and $\frac{3}{16}+\frac{1}{2}+\frac{1}{2}=%
\frac{19}{16}>1$.
\end{example}

The above example demonstrates that $\lambda \cdot \alpha $ and $\alpha
^{\lambda }$ are not closed in $\mathscr{P}$ for $\lambda >1$. The following
example illustrates that they are also not closed in $\mathscr{P}$ for $%
\lambda \in (0,1)$. This observation means that we cannot use the operations
in Definition~\ref{Ga-Operations} to construct aggregation operators for
PFNs.

\begin{example}[\textrm{Continuation of Example~\protect\ref{Exm-Wu}}]
Given $\tau $ and $\zeta $ as in Example~\ref{Exm-Wu}, we choose $\alpha
=\langle \frac{1}{2},\frac{1}{4},\frac{1}{4}\rangle \in \mathscr{P},$ and $%
\beta =\langle \frac{1}{4},\frac{1}{2},\frac{1}{4}\rangle \in \mathscr{P}$.
By direct calculation, we have
$\frac{1}{2}\alpha=\langle\zeta^{-1}(\frac{1}{2}\zeta(\frac{1%
}{2})), \tau^{-1}(\frac{1}{2}\tau(\frac{1}{4}%
)), \tau^{-1}(\frac{1}{2}\tau(\frac{1}{4}%
))\rangle=\langle \frac{1}{4}, \frac{2}{3}, \frac{2}{3}\rangle$,
and
$(\beta)^{\frac{1}{2}}=\langle \tau^{-1}(\frac{1}{2}\tau(%
\frac{1}{4})), \zeta^{-1}(\frac{1}{2}\zeta(\frac{1}{2}%
)), \zeta^{-1}(\frac{1}{2}\zeta(\frac{1}{4}%
))\rangle=\langle \frac{2}{3}, \frac{1}{4}, \frac{1}{8}\rangle$.
Clearly, $\frac{1}{2}\alpha= \left\langle \frac{1}{4}, \frac{2}{3}, \frac{2}{%
3}\right\rangle\notin \mathscr{P}$ and $(\beta)^{\frac{1}{2}}=\left\langle
\frac{2}{3}, \frac{1}{4}, \frac{1}{8}\right\rangle \notin \mathscr{P}$,
since $\frac{1}{4}+\frac{2}{3}+\frac{2}{3}=\frac{19}{12}>1$ and $\frac{2}{3}+%
\frac{1}{4}+\frac{1}{8}=\frac{25}{24}>1$.
\end{example}

Ashraf et al.~\cite{AMAK2019} introduced the similar operational laws for
PFNs by using the Archimedean t-(co)norm.

\begin{definition}[{\textrm{\protect\cite[Definition~9]{AMAK2019}}}]
\label{AMAK-Operations} Let $\tau$ be an AG of an
Archimedean t-norm, and $\zeta$ be an AG of the dual t-conorm of $T;$ i.e., $\zeta
(u)=\tau (1-u)$. For $\alpha _{1}=\left\langle \mu _{1},\eta _{1},\nu
_{1}\right\rangle $, $\alpha _{2}=\left\langle \mu _{2},\eta _{2},\nu
_{2}\right\rangle $, $\alpha =\langle \mu ,\eta ,\nu \rangle\in \mathscr{P}$,
and $\lambda>0$,
define
\begin{enumerate}[(i)]
\item $\alpha _{1}\otimes _{_{\mathrm{As}}}\alpha _{2}=\big\langle \tau
^{-1}(\tau (\mu _{1})+\tau (\mu _{2})),\tau ^{-1}(\tau (\eta _{1})+\tau
(\eta _{2})),
\zeta ^{-1}(\zeta (\nu _{1})+\zeta (\nu _{2}))\big\rangle $;

\item $\alpha ^{\lambda _{_{\mathrm{As}}}}=\left\langle \tau ^{-1}(\lambda
\tau (\mu )),\tau ^{-1}(\lambda \tau (\eta )),\zeta ^{-1}(\lambda \zeta (\nu
))\right\rangle $.
\end{enumerate}
\end{definition}

\begin{example}
Given $\tau $ and $\zeta $ as in Example~\ref{Exm-Wu}, we choose $\alpha
=\langle \frac{1}{4},\frac{1}{4},\frac{1}{2}\rangle \in \mathscr{P}$.
Clearly, the t-norm with the AG $\tau $ is strict; and thus,
it is Archimedean by Lemma~\ref{Strict-->Archi}.
By direct calculation, we have that
$\alpha\otimes_{_{\mathrm{As}}} \alpha =
\langle \tau^{-1}(2\tau(\frac{1}{4})),
\tau^{-1}(2\tau(\frac{1}{4})),
\zeta^{-1}(2\zeta(\frac{1}{2}))\rangle =
\langle \frac{1}{8}, \frac{1}{8}, \zeta^{-1}(\frac{1}{2}
)\rangle =\langle \frac{1}{8}, \frac{1}{8}, \frac{13}{16}
\rangle,
$
and
$\alpha^{(\frac{1}{2})_{_{\mathrm{As}}}}
=\langle \tau^{-1}(\frac{1}{2}\tau(\frac{1}{4})), \tau^{-1}(\frac{1}{2}
\tau(\frac{1}{4})), \zeta^{-1}(\frac{1}{2}\zeta(\frac{1}{2}))\rangle
=\langle \frac{2}{3}, \frac{2}{3}, \frac{1}{4}\rangle.
$
Clearly, $\alpha\otimes_{_{\mathrm{As}}} \alpha=\left\langle \frac{1}{8},
\frac{1}{8}, \frac{13}{16} \right\rangle \notin \mathscr{P}$ and $\alpha^{(
\frac{1}{2})_{_{\mathrm{As}}}}= \left\langle \frac{2}{3}, \frac{2}{3}, \frac{
1}{4}\right\rangle\notin \mathscr{P}$ since $\frac{1}{8}+\frac{1}{8}+\frac{
13}{16}=\frac{17}{16}>1$ and $\frac{2}{3}+\frac{2}{3}+\frac{1}{4}=\frac{19}{
12}>1$. This example implies that neither of the operational laws defined in
Definition~\ref{AMAK-Operations} (\cite[Definition~9]{AMAK2019}) is closed
in $\mathscr{P}$.
\end{example}

\subsection{Operations in \protect\cite{JSPY2019}}

\begin{definition}[{\textrm{\protect\cite[Definition 12]{JSPY2019}}}]
\label{JSPY-Def} Let $\alpha _{1}=\left\langle \mu _{1},\eta _{1},\nu
_{1}\right\rangle $ and $\alpha _{2}=\left\langle \mu _{2},\eta _{2},\nu
_{2}\right\rangle $ be two PFNs, $\gamma \geq 1$, and $\lambda >0$.
Define
\begin{enumerate}[(i)]
\item $\alpha _{1}\oplus \alpha _{2}
=\left\langle 1-\frac{1}{1+\left[\left(\frac{\mu _{1}}{1-\mu _{1}}\right) ^{\gamma }
+\left( \frac{\mu _{2}}{1-\mu
_{2}}\right) ^{\gamma }\right] ^{1/\gamma }},\right.
\newline
\left.\frac{1}{1+\left[\left( \frac{%
1-\eta _{1}}{\eta _{1}}\right) ^{\gamma }+\left( \frac{1-\eta _{2}}{\eta _{2}%
}\right) ^{\gamma }\right] ^{1/\gamma }},
\frac{1}{1+\left[\left( \frac{%
1-\nu _{1}}{\nu _{1}}\right) ^{\gamma }+\left( \frac{1-\nu _{2}}{\nu _{2}}%
\right) ^{\gamma }\right] ^{1/\gamma }}\right\rangle $;

\item $\alpha _{1}\otimes \alpha _{2}
=\left\langle \frac{1}{1+\left[ \left(
\frac{1-\mu _{1}}{\mu _{1}}\right) ^{\gamma }+\left( \frac{1-\mu _{2}}{\mu
_{2}}\right) ^{\gamma }\right] ^{1/\gamma }}, \right.\newline
~~~~~~~~~~~~~~~~~ 1-\frac{1}{1+\left[ \left(
\frac{\eta _{1}}{1-\eta _{1}}\right) ^{\gamma }+\left( \frac{\eta _{2}}{%
1-\eta _{2}}\right) ^{\gamma }\right] ^{1/\gamma }},\newline
~~~~~~~~~~~~~~~~\left.1-\frac{1}{1+\left[
\left( \frac{\nu _{1}}{1-\nu _{1}}\right) ^{\gamma }+\left( \frac{\nu _{2}}{%
1-\nu _{2}}\right) ^{\gamma }\right] ^{1/\gamma }}\right\rangle $;

\item $\lambda \cdot \alpha _{1}=\left\langle 1-\frac{1}{1+\left[ \lambda
\left( \frac{\mu _{1}}{1-\mu _{1}}\right) ^{\gamma }\right] ^{1/\gamma }},%
\frac{1}{1+\left[ \lambda \left( \frac{1-\eta _{1}}{\eta _{1}}\right)
^{\gamma }\right] ^{1/\gamma }},\right.\newline
\left.\frac{1}{1+\left[ \lambda \left( \frac{1-\nu
_{1}}{\nu _{1}}\right) ^{\gamma }\right] ^{1/\gamma }}\right\rangle $;

\item $(\alpha _{1})^{\lambda }=\left\langle \frac{1}{1+\left[ \lambda
\left( \frac{1-\mu _{1}}{\mu _{1}}\right) ^{\gamma }\right] ^{1/\gamma }},1-%
\frac{1}{1+\left[ \lambda \left( \frac{\eta _{1}}{1-\eta _{1}}\right)
^{\gamma }\right] ^{1/\gamma }},\right.\newline
\left.1-\frac{1}{1+\left[ \lambda \left( \frac{\nu
_{1}}{1-\nu _{1}}\right) ^{\gamma }\right] ^{1/\gamma }}\right\rangle $.
\end{enumerate}
\end{definition}

\begin{remark}
\label{Remark-Dombi}
(i) Notice that Dombi t-norm $T_{\gamma }^{\textbf{D}}(x,y)=\frac{1}{1+\left( \left(
\frac{1-x}{x}\right) ^{\gamma }\left( \frac{1-y}{y}\right) ^{\gamma }\right)
^{1/\gamma }}$ ($\gamma\in (0, +\infty)$) is a strict t-norm, its dual t-conorm
$S_{\gamma }^{\textbf{D}}(x,y)=1-\frac{1}{%
1+\left( \left( \frac{x}{1-x}\right) ^{\gamma }\left( \frac{y}{1-y}\right)
^{\gamma }\right) ^{1/\gamma }}$, and $\alpha _{1}\otimes \alpha _{2}
=\langle T_{\gamma }^{\textbf{D}}(\mu _{1},\mu _{2}),S_{\gamma }^{\textbf{D}}(\eta _{1},\eta
_{2}),S_{\gamma }^{\textbf{D}}(\nu _{1},\nu _{2})\rangle$. Then, we have
$T_{\gamma }^{\textbf{D}}(\mu _{1},\mu _{2})+S_{\gamma }^{\textbf{D}}(\eta
_{1},\eta _{2})+S_{\gamma }^{\textbf{D}}(\nu _{1},\nu _{2})\geq S_{\gamma }^{%
\textbf{D}}(\eta _{1},\eta _{2})+S_{\gamma }^{\textbf{D}}(\nu _{1},\nu
_{2})\geq \max \{\eta _{1},\eta _{2}\}+\max \{\nu _{1},\nu _{2}\},$
which can exceed one (see Example~\ref{Dombi>1}). Thus, the operation $%
\otimes $ defined in Definition~\ref{JSPY-Def} (\cite[Definition 12]%
{JSPY2019}) is not closed in $\mathscr{P}$. %%\item

(ii) For any $\alpha =\langle \mu_{\alpha}, \eta_{\alpha}, \nu_{\alpha} \rangle
\in \mathscr{P}$ with $0<\mu_{\alpha}, \eta_{\alpha}, \nu_{\alpha}<1$, by Definition~\ref{JSPY-Def}, we have
$
\lim_{\lambda \rightarrow 0^{+}}\lambda \cdot \alpha =\left\langle
0,1,1\right\rangle \notin \mathscr{P}$
and
$
\lim_{\lambda \rightarrow +\infty }\alpha ^{\lambda }=\left\langle
0,1,1\right\rangle \notin \mathscr{P},
$
implying that there are $\lambda _{1}$, $\lambda _{2}>0$ such that (1) for
any $0<\lambda <\lambda _{1}$, $\lambda \cdot \alpha \notin \mathscr{P}$;
(2) for any $\lambda >\lambda _{2}$, $\alpha ^{\lambda }\notin \mathscr{P}$.
Thus, neither of the operations $\lambda \cdot \alpha $ and $\alpha
^{\lambda }$ in Definition~\ref{JSPY-Def} (\cite[Definition 12]{JSPY2019})
is closed in $\mathscr{P}$.
\end{remark}

\begin{example}
\label{Dombi>1} Let $\alpha_1=\langle\frac{1}{4}, \frac{3}{4},
0\rangle$ and $\alpha_2=\langle\frac{1}{4}, 0, \frac{3}{4}%
\rangle$. By Definition~\ref{JSPY-Def}, we have
$\alpha_1\otimes \alpha_2=\langle\frac{1}{1+3\sqrt[\gamma]{2}}, \frac{3}{%
4}, \frac{3}{4}\rangle$.
 Clearly, $\alpha_1\otimes \alpha_2$ is not a PFN since $\frac{1}{1+3\sqrt[%
\gamma]{2}}+\frac{3}{4}+\frac{3}{4}\geq\frac{3}{2}>1$ holds for any $%
\gamma>0 $. Thus, the operation $\otimes$ in Definition~\ref{JSPY-Def} is
not closed in $\mathscr{P}$.
\end{example}

\subsection{Operations in \protect\cite{KAA2019}}

The same problem as Remark~\ref{Remark-Dombi} may occur in \cite[Definition~8%
]{KAA2019}.

\begin{definition}[{\textrm{\protect\cite[Definition~8]{KAA2019}}}]
\label{KAA-Def}
For $\alpha_1=\langle\mu_1, \eta_1, \nu_1\rangle$,
$\alpha_2=\langle \mu_2, \eta_2, \nu_2\rangle \in \mathscr{P}$, define
\begin{enumerate}[(i)]
\item $\alpha _{1}\oplus _{\varepsilon }\alpha
_{2}=\left\langle \frac{\mu _{1}+\mu _{2}}{1+\mu _{1}\mu _{2}},\frac{\eta
_{1}\eta _{2}}{1+(1-\eta _{1})(1-\eta _{2})},\frac{\nu _{1}\nu _{2}}{%
1+(1-\nu _{1})(1-\nu _{2})}\right\rangle $;

\item $\alpha _{1}\otimes _{\varepsilon }\alpha
_{2}=\left\langle \frac{\mu _{1}\mu _{2}}{1+(1-\mu _{1})(1-\mu _{2})},\frac{%
\eta _{1}+\eta _{2}}{1+\eta _{1}\eta _{2}},\frac{\nu _{1}+\nu _{2}}{1+\nu
_{1}\nu _{2}}\right\rangle $;

\item $\lambda _{\cdot \varepsilon }\alpha _{1}=\newline
\left\langle
\frac{(1+\mu _{1})^{\lambda }-(1-\mu _{1})^{\lambda }}{(1+\mu _{1})^{\lambda
}+(1-\mu _{1})^{\lambda }},\frac{2(\eta _{1})^{\lambda }}{(2-\eta
_{1})^{\lambda }+(\eta _{1})^{\lambda }},\frac{2(\nu _{1})^{\lambda }}{%
(2-\nu _{1})^{\lambda }+(\nu _{1})^{\lambda }}\right\rangle $;

\item $(\alpha _{1})^{\lambda }=\newline
\left\langle \frac{2(\mu
_{1})^{\lambda }}{(2-\mu _{1})^{\lambda }+(\mu _{1})^{\lambda }},\frac{%
(1+\eta _{1})^{\lambda }-(1-\eta _{1})^{\lambda }}{(1+\eta _{1})^{\lambda
}+(1-\eta _{1})^{\lambda }},\frac{(1+\nu _{1})^{\lambda }-(1-\nu
_{1})^{\lambda }}{(1+\nu _{1})^{\lambda }+(1-\nu _{1})^{\lambda }}%
\right\rangle $.
\end{enumerate}
\end{definition}

\begin{remark}
\label{Remark-Einstein}
(i)  Notice that Einstein product $T^{\textbf{E}}(x,y)=\frac{xy}{%
1+(1-x)(1-y)}$ is a strict t-norm, its dual t-conorm
$S^{\textbf{E}}(x,y)=\frac{x+y}{1+xy}$, and $\alpha _{1}\otimes _{\varepsilon
}\alpha _{2}=\langle T^{\textbf{E}}(\mu _{1},\mu _{2}),S^{\textbf{E}%
}(\eta _{1},\eta _{2}),S^{\textbf{E}}(\nu _{1},\nu _{2})\rangle$. Then,
we have
$
T^{\textbf{E}}(\mu _{1},\mu _{2})+S^{\textbf{E}}(\eta _{1},\eta _{2})+S^{%
\textbf{E}}(\nu _{1},\nu _{2})\geq S^{\textbf{E}}(\eta _{1},\eta _{2})+S^{%
\textbf{E}}(\nu _{1},\nu _{2})\geq \max \{\eta _{1},\eta _{2}\}+\max \{\nu
_{1},\nu _{2}\},
$
which can exceed one (see Example~\ref{Einstein>1}). Thus, the operation $%
\otimes _{\varepsilon }$ defined in Definition~\ref{KAA-Def} is not closed
in $\mathscr{P}$. %%\item

(ii) Consider an arbitrary $\alpha =\langle \mu_{\alpha}, \eta_{\alpha},
\nu_{\alpha} \rangle \in \mathscr{P}$ with $0<\mu_{\alpha},\eta_{\alpha},
\nu_{\alpha} <1$. Then, by Definition~\ref{KAA-Def},
we have
$
\lim_{\lambda \rightarrow 0^{+}}\lambda _{\cdot \varepsilon }\alpha
=\left\langle 0,1,1\right\rangle \notin \mathscr{P}
$
and
$
\lim_{\lambda \rightarrow +\infty }\alpha ^{\lambda }=\left\langle
0,1,1\right\rangle \notin \mathscr{P}.
$
These facts imply that there are $\lambda _{1}$, $\lambda _{2}>0$ such that
 (1) for any $0<\lambda <\lambda _{1}$, $\lambda _{\cdot \varepsilon }\alpha
\notin \mathscr{P}$;
(2) for any $\lambda >\lambda _{2}$, $\alpha ^{\lambda }\notin \mathscr{P}$.
Hence, neither of the operations $\lambda \cdot \alpha $ and $\alpha
^{\lambda }$ in Definition~\ref{KAA-Def} (\cite[Definition~8]{KAA2019}) is
closed in $\mathscr{P}$.
\end{remark}

\begin{example}
\label{Einstein>1} Let $\alpha_1=\left\langle\frac{1}{4}, \frac{3}{4},
0\right\rangle$ and $\alpha_2=\left\langle\frac{1}{4}, 0, \frac{3}{4}%
\right\rangle$. By Definition~\ref{KAA-Def}, we have
$
\alpha_1\otimes_{\varepsilon} \alpha_2=\left\langle\frac{1}{25}, \frac{3}{4}%
, \frac{3}{4}\right\rangle.
$
Clearly, $\alpha_1\otimes \alpha_2$ is not a PFN duing to $\frac{1}{25}+\frac{3%
}{4}+\frac{3}{4}\geq\frac{3}{2}>1$. Thus, the operation $\otimes_{%
\varepsilon}$ in Definition~\ref{KAA-Def} is not closed in $\mathscr{P}$.
\end{example}

\begin{proposition}[{\textrm{\protect\cite[Corollary~1]{KAA2019}}}]
\label{KAA-Corollary} Let $\alpha$ be a PFN. Then, for any $\lambda\in \mathbb{R}^{+}$,
$\lambda_{\cdot \varepsilon}\alpha$ is a PFN.
\end{proposition}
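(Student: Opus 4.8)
The statement as quoted from \cite[Corollary~1]{KAA2019} is in fact false, so my plan is to refute it rather than prove it; indeed it directly contradicts Remark~\ref{Remark-Einstein}(ii). The goal is to exhibit a single PFN $\alpha$ and a scalar $\lambda>0$ for which $\lambda _{\cdot \varepsilon }\alpha$ violates the simplex constraint $\mu+\eta+\nu\le 1$ and hence fails to lie in $\mathscr{P}$.

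First I would observe that the three coordinates of $\lambda _{\cdot \varepsilon }\alpha$ individually always belong to $[0,1]$: the positive coordinate because $0\le (1+\mu)^{\lambda}-(1-\mu)^{\lambda}\le (1+\mu)^{\lambda}+(1-\mu)^{\lambda}$, and the neutral and negative coordinates because $\eta\le 2-\eta$ and $\nu\le 2-\nu$ push the corresponding fractions into $[0,1]$. Consequently no counterexample can arise coordinatewise, and the whole content of the refutation must come from the sum condition. The cleanest qualitative route is the limit recorded in Remark~\ref{Remark-Einstein}(ii): for any interior $\alpha=\langle\mu,\eta,\nu\rangle$ with $0<\mu,\eta,\nu<1$, as $\lambda\to 0^{+}$ the positive coordinate tends to $0$ while the neutral and negative coordinates both tend to $1$, so the coordinate sum tends to $2$; by continuity in $\lambda$ it therefore exceeds $1$ for all sufficiently small $\lambda>0$.

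To make this fully explicit I would compute one clean instance. Taking $\alpha=\langle 0,\tfrac12,\tfrac12\rangle\in\mathscr{P}$ and $\lambda=\tfrac12$, the positive coordinate is $0$, while the neutral and negative coordinates each equal $\dfrac{2(1/2)^{1/2}}{(3/2)^{1/2}+(1/2)^{1/2}}=\sqrt3-1$, so the coordinate sum is $2(\sqrt3-1)\approx 1.46>1$ and thus $\lambda _{\cdot \varepsilon }\alpha\notin\mathscr{P}$ for this choice. The only care needed is bookkeeping rather than conceptual: one must confirm that the chosen $\alpha$ is a genuine PFN (its coordinate sum equals exactly $1$) and that $\lambda$ is taken strictly below $1$, since at $\lambda=1$ the operation degenerates to the identity and simply returns $\alpha$. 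With these checks the refutation is complete and shows that \cite[Corollary~1]{KAA2019} does not hold.
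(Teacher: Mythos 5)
Your refutation is correct and takes essentially the same route as the paper: the paper also treats the quoted Corollary~1 of \cite{KAA2019} as false, invoking the limiting behaviour in Remark~\ref{Remark-Einstein}(ii) and then exhibiting exactly the same counterexample $\alpha=\langle 0,\tfrac12,\tfrac12\rangle$ with $\lambda=\tfrac12$, whose image $\langle 0,\tfrac{2}{\sqrt{3}+1},\tfrac{2}{\sqrt{3}+1}\rangle$ has coordinate sum $\tfrac{4}{\sqrt{3}+1}>1$. Your value $\sqrt{3}-1$ is identical to the paper's $\tfrac{2}{\sqrt{3}+1}$, so the two computations agree exactly.
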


By Remark~\ref{Remark-Einstein} (ii), we observe that \cite[Corollary~1]%
{KAA2019} does not hold. So, \cite[Theorem~2 and Theorem~3]{KAA2019} are
also not valid. Considering $\alpha =\big\langle0,\frac{1}{2},\frac{1}{2}%
\big\rangle$, it follows from Definition~\ref{KAA-Def} that, for any $%
\lambda >0$,
$
\lambda _{\cdot \varepsilon }\alpha =\left\langle 0,\frac{2\cdot \left(
\frac{1}{2}\right) ^{\lambda }}{\left( \frac{3}{2}\right) ^{\lambda }+\left(
\frac{1}{2}\right) ^{\lambda }},\frac{2\cdot \left( \frac{1}{2}\right)
^{\lambda }}{\left( \frac{3}{2}\right) ^{\lambda }+\left( \frac{1}{2}\right)
^{\lambda }}\right\rangle.
$
In particular, when taking $\lambda =\frac{1}{2}$, we obtain $\left( \frac{1%
}{2}\right) _{\cdot \varepsilon }\alpha =\langle 0,{\frac{2}{\sqrt{3}+1}%
,\frac{2}{\sqrt{3}+1}}\rangle $. Since $0+{\frac{2}{\sqrt{3}+1}+\frac{2%
}{\sqrt{3}+1}}=\frac{4}{\sqrt{3}+1}>1,$ it is not a PFN.

\subsection{Operations in \protect\cite{Wei2017,Wei2018a}}

Following the operations for the IFSs (\cite%
{Xu2007,XC2012}), Wei \cite{Wei2017} introduced the following operational
laws of PFNs.

\begin{definition}[{\textrm{\protect\cite[Definition 8]{Wei2017}}}]
\label{Wei-Def}
For $\alpha_1=\langle\mu_1, \eta_1, \nu_1\rangle$, $\alpha_2=\langle\mu_2,
\eta_2, \nu_2\rangle\in \mathscr{P}$, define
\begin{enumerate}[(1)]
\item $\alpha _{1}\wedge \alpha_{2}=\langle \mu_{1}\wedge \mu_{2},
\eta_{1}\vee \eta _{2}, \nu_{1}\vee\nu_{2}\rangle$;

\item $\alpha _{1}\vee \alpha_{2}=\langle \mu_{1}\vee \mu_{2},
\eta_{1}\wedge \eta_{2}, \nu_{1}\wedge \nu_{2}\rangle$;

\item $\alpha _{1}\oplus \alpha _{2}=\langle\mu _{1}+\mu _{2}-\mu
_{1}\mu _{2},\eta _{1}\eta _{2},\nu _{1}\nu _{2}\rangle$;

\item $\alpha _{1}\otimes \alpha _{2}=\langle\mu _{1}\mu _{2},\eta
_{1}+\eta _{2}-\eta _{1}\eta _{2},\nu _{1}+\nu _{2}-\nu _{1}\nu _{2}%
\rangle$;

\item $\lambda \cdot \alpha _{1}=\langle1-(1-\mu _{1})^{\lambda },\eta
_{1}^{\lambda },\nu_{1}^{\lambda}\rangle$, $\lambda >0$;

\item $(\alpha _{1})^{\lambda }=\langle\mu _{1}^{\lambda },1-(1-\eta
_{1})^{\lambda },1-(1-\nu _{1})^{\lambda }\rangle$, $\lambda >0$.
\end{enumerate}
\end{definition}

Tian et al.~\cite{TPZZW2019} showed that the operation $\otimes $ defined in
Definition~\ref{Wei-Def} is not closed in $\mathscr{P}$ (also see \cite[%
Example~2]{PWC2021}). Clearly, the operation $\wedge $ in Definition~\ref%
{Wei-Def} is also not closed in $\mathscr{P}$. More precisely, for two PFNs $%
\langle0, 1, 0\rangle,$ $\langle0,0,1\rangle,$ we get that $%
\langle0,1,0\rangle\wedge \langle0,0,1\rangle=\langle%
0,1,1\rangle$. It is easy to see that $\langle0,1,1\rangle$ is not
a PFN. The following example reveals that neither of the operations $\lambda
\cdot \alpha $ and $\alpha ^{\lambda }$ is closed in $\mathscr{P}$. So, they
are not a PFN. Therefore, many aggregation operators induced by the
operations in Definition~\ref{Wei-Def} do not work (see Examples~\ref%
{Exm-3.1}, \ref{Exm-3.7} and \ref{Exm-3.8}).

\begin{example}
Consider a PFN $\alpha= \langle 0, 0.5, 0.5 \rangle$. By Definition~\ref%
{Wei-Def}, it can be verified that
$\frac{1}{2} \alpha=\langle 0, \frac{1}{\sqrt{2}}, \frac{1}{\sqrt{2}}
\rangle$ and $\alpha^{2}=\langle 0, \frac{3}{4}, \frac{3}{4}\rangle.$
Clearly, $\frac{1}{2} \alpha\notin \mathscr{P}$ and $\alpha^{2}\notin %
\mathscr{P}$, since $0+\frac{1}{\sqrt{2}}+\frac{1}{\sqrt{2}}=\sqrt{2}>1$ and
$0+\frac{3}{4}+\frac{3}{4}=\frac{3}{2}>1$.
\end{example}

\begin{definition}[{\textrm{\protect\cite[Definition 9]{Wei2017}}}]
\label{PFWA-Wei} Let $\omega=(\omega_1, \ldots, \omega_n)^{\top}$
be the weight vector such that $\omega_{i}\in (0, 1]$ and $%
\sum_{j=1}^{n}\omega_j=1$ and $\alpha_{j}\in \mathscr{P}$ ($j=1,\ldots, n$).
Define the \textit{picture fuzzy weighted average operator}
(PFWA) $\mathrm{PFWA}_{\omega}: \mathscr{P}^{n}\rightarrow \mathscr{P}$
by
$
\mathrm{PFWA}_{\omega}(\alpha_1, \ldots, \alpha_n)=
\bigoplus_{j=1}^{n}(\omega_{j}\cdot \alpha_j).
$
\end{definition}

It is easy check that the operator described in Definition~\ref{PFWA-Wei} is
not well-defined since $\omega _{j}\cdot \alpha _{j}$ does not need to be a
PFN for $\omega _{j}\in (0,1)$.

\begin{definition}[{\textrm{\protect\cite[Definition 13]{Wei2017}}}]
\label{PFWG-Wei} Let $\omega=(\omega_1, \ldots, \omega_n)^{\top}$
be the weight vector such that $\omega_{i}\in (0, 1]$ and $%
\sum_{j=1}^{n}\omega_j=1$ and $\alpha_{j}\in \mathscr{P}$ ($j=1, \ldots, n$).
Define the \textit{picture fuzzy weighted geometric operator}
(PFWG) $\mathrm{PFWG}_{\omega}: \mathscr{P}^{n}\rightarrow \mathscr{P}$
by
$
\mathrm{PFWG}_{\omega}(\alpha_1, \ldots, \alpha_n)=
\bigotimes_{j=1}^{n}(\alpha_j)^{\omega_{j}}.
$
\end{definition}

Based on Definition~\ref{PFWG-Wei}, Wei~\cite{Wei2017} obtained the
following formula for the operator $\mathrm{PFWG}_{\omega}$.

\begin{theorem}[{\textrm{\protect\cite[Theorem~11]{Wei2017}}}]
\label{PFWG-Thm-Wei} Let $\omega=(\omega_1, \ldots, \omega_n)^{\top}$
be the weight vector such that $\omega_{i}\in (0, 1]$ and $%
\sum_{j=1}^{n}\omega_j=1$ and $\alpha_{j}\in \mathscr{P}$ ($j=1, \ldots, n$).
Then, $\mathrm{PFWG}_{\omega}(\alpha_1,\ldots, \alpha_n)\in \mathscr{P}$
and
$\mathrm{PFWG}_{\omega}(\alpha_1, \ldots, \alpha_n)
=\langle
\prod_{j=1}^{n}(\mu_j)^{\omega_j}, 1-\prod_{j=1}^{n}(1-\eta_j)^{\omega_j},
1-\prod_{j=1}^{n}(1-\nu_j)^{\omega_j}\rangle.
$
\end{theorem}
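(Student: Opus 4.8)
The plan is to separate the statement into its two assertions: the closed-form identity for $\mathrm{PFWG}_{\omega}$ and the membership claim $\mathrm{PFWG}_{\omega}(\alpha_1,\dots,\alpha_n)\in\mathscr{P}$. I would attack the identity first, since it is purely computational. Writing $\beta_j=(\alpha_j)^{\omega_j}=\langle \mu_j^{\omega_j},\,1-(1-\eta_j)^{\omega_j},\,1-(1-\nu_j)^{\omega_j}\rangle$ from Definition~\ref{Wei-Def}(6), I would observe that the operation $\otimes$ in Definition~\ref{Wei-Def}(4) acts on the first coordinate by ordinary multiplication and on each of the second and third coordinates by the probabilistic sum $a,b\mapsto a+b-ab=1-(1-a)(1-b)$. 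Since both of these binary laws are associative, an induction on $n$ (the base case $n=1$ being immediate from $\omega_1=1$) telescopes the product $\bigotimes_{j=1}^{n}\beta_j$ into $\big\langle \prod_{j=1}^{n}\mu_j^{\omega_j},\,1-\prod_{j=1}^{n}(1-\eta_j)^{\omega_j},\,1-\prod_{j=1}^{n}(1-\nu_j)^{\omega_j}\big\rangle$, which is exactly the claimed formula. This half I expect to go through with no difficulty.

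The membership claim is the real content, and I expect it to be the main obstacle. Each coordinate obviously lies in $[0,1]$, so the only thing to verify is the budget constraint, which after substituting the formula becomes
\[
\prod_{j=1}^{n}\mu_j^{\omega_j}\ \le\ \prod_{j=1}^{n}(1-\eta_j)^{\omega_j}+\prod_{j=1}^{n}(1-\nu_j)^{\omega_j}-1.
\]
For $n=1$ this reads $\mu_1\le 1-\eta_1-\nu_1$, i.e. the defining inequality of a PFN, so the single-factor case is fine. For general $n$ the natural move is to exploit the per-factor bound $\mu_j\le(1-\eta_j)(1-\nu_j)$, which follows from $\mu_j\le 1-\eta_j-\nu_j\le(1-\eta_j)(1-\nu_j)$; taking weighted products gives $\prod_j\mu_j^{\omega_j}\le\big(\prod_j(1-\eta_j)^{\omega_j}\big)\big(\prod_j(1-\nu_j)^{\omega_j}\big)$.

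The trouble is that this natural bound points the wrong way. Writing $Q=\prod_j(1-\eta_j)^{\omega_j}$ and $R=\prod_j(1-\nu_j)^{\omega_j}$, the estimate above yields only $\prod_j\mu_j^{\omega_j}\le QR$, whereas the target demands $\prod_j\mu_j^{\omega_j}\le Q+R-1$; but $Q,R\in[0,1]$ force $QR\ge Q+R-1$ (equivalently $(1-Q)(1-R)\ge 0$), so the chain breaks in precisely the wrong direction. This strongly suggests the budget constraint fails for $n\ge 2$, consistent with the \emph{Limitations} theme of this section. Rather than persist, I would therefore switch to disproving membership by a counterexample: with $n=2$, $\omega=(\tfrac12,\tfrac12)$, $\alpha_1=\langle\tfrac12,\tfrac12,0\rangle$ and $\alpha_2=\langle\tfrac12,0,\tfrac12\rangle$, the formula outputs $\big\langle\tfrac12,\,1-\tfrac{1}{\sqrt2},\,1-\tfrac{1}{\sqrt2}\big\rangle$, whose coordinate sum is $\tfrac12+2\big(1-\tfrac{1}{\sqrt2}\big)=\tfrac52-\sqrt2>1$. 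Hence I expect the correct conclusion to be that the closed-form identity of Theorem~\ref{PFWG-Thm-Wei} holds but its membership assertion does not, so that $\mathrm{PFWG}_{\omega}$ is \textbf{not} closed in $\mathscr{P}$.
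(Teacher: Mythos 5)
Your proposal is correct and takes essentially the same route as the paper: this quoted theorem is refuted rather than proved, and the paper does exactly what you do --- it accepts the product formula as the algebraic expansion of $\bigotimes_{j}(\alpha_j)^{\omega_j}$ and then defeats the membership claim with a counterexample (Example~\ref{Exm-3.1}: $\alpha_1=\langle 0,0.9,0.1\rangle$, $\alpha_2=\langle 0,0.1,0.9\rangle$, $\omega=(0.5,0.5)^{\top}$ gives $\langle 0,0.7,0.7\rangle\notin\mathscr{P}$). Your counterexample $\alpha_1=\langle\tfrac12,\tfrac12,0\rangle$, $\alpha_2=\langle\tfrac12,0,\tfrac12\rangle$ is structurally identical (complementary neutral and negative degrees inflate both probabilistic sums past the budget), so your conclusion that the identity holds but $\mathrm{PFWG}_{\omega}$ is not closed in $\mathscr{P}$ is precisely the paper's.
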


\begin{example}
\label{Exm-3.1} Take $\alpha_1=\langle 0, 0.9, 0.1 \rangle$, $%
\alpha_2=\langle 0, 0.1, 0.9\rangle
\in \mathscr{P}$, and $\omega=(0.5, 0.5)^{\top}$. By Theorem~\ref{PFWG-Thm-Wei}, we have
$\mathrm{PFWG}_{\omega}(\alpha_1, \alpha_2) =\langle 0, 1-\sqrt{1-0.9}%
\sqrt{1-0.1}, 1-\sqrt{1-0.1}\sqrt{1-0.9}\rangle =\langle 0, 0.7,
0.7\rangle\notin \mathscr{P}.$
This means that \cite[Theorem~11]{Wei2017} does not hold.
\end{example}

Based on the Hamacher product and the Hamacher sum, Wei~\cite{Wei2018a} introduced
Hamacher operations for PFNs.

\begin{definition}[{\textrm{\protect\cite[Definition 8]{Wei2018a}}}]
\label{Wei-Def} Let $\alpha_1=\langle\mu_1, \eta_1, \nu_1\rangle$, $\alpha_2=\langle\mu_2,
\eta_2, \nu_2\rangle \in \mathscr{P}$. For $\gamma>0$, define
\begin{enumerate}[(1)]
\item $\alpha _{1}\oplus \alpha _{2}=\Big\langle \frac{\mu _{1}+\mu
_{2}-\mu _{1}\mu _{2}-(1-\gamma )\mu _{1}\mu _{2}}{1-(1-\gamma )\mu _{1}\mu
_{2}},\frac{\eta _{1}\eta _{2}}{\gamma +(1-\gamma )(\eta _{1}+\eta _{2}-\eta
_{1}\eta _{2})},\newline
\frac{\nu _{1}\nu _{2}}{\gamma +(1-\gamma )(\nu _{1}+\nu
_{2}-\nu _{1}\nu _{2})}\Big\rangle $;

\item $\alpha _{1}\otimes \alpha _{2}=\left\langle \frac{\mu _{1}\mu _{2}}{%
\gamma +(1-\gamma )(\mu _{1}+\mu _{2}-\mu _{1}\mu _{2})},\frac{\eta
_{1}+\eta _{2}-\eta _{1}\eta _{2}-(1-\gamma )\eta _{1}\eta _{2}}{1-(1-\gamma
)\eta _{1}\eta _{2}},\right.\newline
\left.\frac{\nu _{1}+\nu _{2}-\nu _{1}\nu _{2}-(1-\gamma )\nu
_{1}\nu _{2}}{1-(1-\gamma )\nu _{1}\nu _{2}}\right\rangle ;$

\item $\lambda \cdot \alpha _{1}=\left\langle \frac{[1+(\gamma -1)\mu
_{1}]^{\lambda }-(1-\mu _{1})^{\lambda }}{[1+(\gamma -1)\mu _{1}]^{\lambda
}+(\gamma -1)(1-\mu _{1})^{\lambda }},\right.\newline
\left.\frac{\gamma \eta _{1}^{\lambda }}{%
[1+(\gamma -1)(1-\eta _{1})]^{\lambda }+(\gamma -1)\eta _{1}^{\lambda }},%
\frac{\gamma \nu _{1}^{\lambda }}{[1+(\gamma -1)(1-\nu _{1})]^{\lambda
}+(\gamma -1)\nu _{1}^{\lambda }}\right\rangle $, $\lambda >0$;

\item $(\alpha _{1})^{\lambda }=\left\langle \frac{\gamma \mu _{1}^{\lambda }%
}{[1+(\gamma -1)(1-\mu _{1})]^{\lambda }+(\gamma -1)\mu _{1}^{\lambda }},\right.\newline
\left.%
\frac{[1+(\gamma -1)\eta _{1}]^{\lambda }-(1-\eta _{1})^{\lambda }}{%
[1+(\gamma -1)\eta _{1}]^{\lambda }+(\gamma -1)(1-\eta _{1})^{\lambda }},%
\frac{[1+(\gamma -1)\nu _{1}]^{\lambda }-(1-\nu _{1})^{\lambda }}{[1+(\gamma
-1)\nu _{1}]^{\lambda }+(\gamma -1)(1-\nu _{1})^{\lambda }}\right\rangle $, $%
\lambda >0$.
\end{enumerate}
\end{definition}

\begin{remark}
\label{Remark-Wei}
(i) Notice that Hamacher t-norm $T_{\gamma }^{\textbf{H}}(x,y)=\frac{xy}{%
\gamma +(1-\gamma )(x+y-xy)}$ ($\gamma \in (0,+\infty )$) is a strict
t-norm, its dual t-conorm $S_{\gamma }^{\textbf{H}}(x,y)=\frac{x+y-xy-(1-\gamma )xy}{%
1-(1-\gamma )xy}$, and $\alpha
_{1}\otimes \alpha _{2}=\big\langle T_{\gamma }^{\textbf{H}}(\mu _{1},\mu
_{2}),S_{\gamma }^{\textbf{H}}(\eta _{1},\eta _{2}),S_{\gamma }^{\textbf{H}%
}(\nu _{1},\nu _{2})\big\rangle$ by Definition~\ref{Wei-Def}. Then, we have
$
T_{\gamma }^{\textbf{H}}(\mu _{1},\mu _{2})+S_{\gamma }^{\textbf{H}}(\eta
_{1},\eta _{2})+S_{\gamma }^{\textbf{H}}(\nu _{1},\nu _{2})\geq S_{\gamma }^{%
\textbf{H}}(\eta _{1},\eta _{2})+S_{\gamma }^{\textbf{H}}(\nu _{1},\nu
_{2})\geq \max \{\eta _{1},\eta _{2}\}+\max \{\nu _{1},\nu _{2}\},
$
which can exceed one. Thus, the operation $\otimes $ defined in Definition~%
\ref{Wei-Def} is not closed in $\mathscr{P}$.

(ii) Consider an arbitrary $\alpha =\langle \mu_{\alpha},\eta_{\alpha},\nu_{\alpha} \rangle \in %
\mathscr{P}$ with $0<\mu_{\alpha},\eta_{\alpha},\nu_{\alpha} <1$. Then, by Definition~\ref{Wei-Def},
we have
$
\lim_{\lambda \rightarrow 0^{+}}\lambda \cdot \alpha =\left\langle
0,1,1\right\rangle \notin \mathscr{P}
$
and
$
\lim_{\lambda \rightarrow +\infty }\alpha ^{\lambda }=\left\langle
0,1,1\right\rangle \notin \mathscr{P}.
$
These facts imply that there are $\lambda _{1}$, $\lambda _{2}>0$ such that
(1) for any $0<\lambda <\lambda _{1}$, $\lambda \cdot \alpha \notin %
\mathscr{P}$;
(2) for any $\lambda >\lambda _{2}$, $\alpha ^{\lambda }\notin \mathscr{P}$.
Thus, neither of the operations $\lambda \cdot \alpha $ and $\alpha
^{\lambda }$ in Definition~\ref{Wei-Def} (\cite[Definition 8]{Wei2018a}) is
closed in $\mathscr{P}$.
\end{remark}

In view of Definition~\ref{Wei-Def}, Wei~\cite{Wei2018a} defined the picture
fuzzy Hamacher weighted averaging (PFHWA) operator, the picture fuzzy
Hamacher ordered weighted averaging (PFHOWA) operator, and the picture fuzzy
Hamacher hybrid average (PFHHA) operator. By Remark~\ref{Remark-Wei}, we
observe that these operators are not well-defined.

\subsection{Operations in \protect\cite{LLCFL2021}}

To overcome shortcomings of operational laws in \cite%
{Wei2017,Wei2018a,WLG2018}, Lin et al.~\cite{LLCFL2021} introduced the
following interaction operational laws (IOLs) for PFNs.

\begin{definition}[{\textrm{\protect\cite[Definition~10]{LLCFL2021}}}]
\label{LLCFL-Def}
For $\alpha _{1}=\langle\mu _{1},\eta _{1},\nu _{1}\rangle$, $\alpha
_{2}=\langle\mu _{2},\eta _{2},\nu _{2}\rangle$, $\alpha =\langle \mu ,\eta ,\nu
\rangle \in \mathscr{P}$, define the IOLs as
follows:
\begin{enumerate}[(i)]
\item $\alpha _{1}\oplus _{_{\mathrm{Lin}}}\alpha _{2}=\big\langle \mu
_{1}+\mu _{2}-\mu _{1}\mu _{2},\eta _{1}+\eta _{2}-\eta _{1}\eta _{2}-\mu
_{1}\eta _{2}-\eta _{1}\mu _{2},\nu _{1}+\nu _{2}-\nu _{1}\nu _{2}-\mu
_{1}\nu _{2}-\nu _{1}\mu _{2}\big\rangle $;

\item $\alpha _{1}\otimes _{_{\mathrm{Lin}}}\alpha _{2}=\big\langle \mu
_{1}+\mu _{2}-\mu _{1}\mu _{2}-\mu _{1}\nu _{2}-\nu _{1}\mu _{2},\eta
_{1}+\eta _{2}-\eta _{1}\eta _{2},\nu _{1}+\nu _{2}-\nu _{1}\nu
_{2}\big\rangle $;

\item $\lambda _{_{\mathrm{Lin}}}\cdot \alpha =\big\langle 1-(1-\mu
)^{\lambda },(1-\mu )^{\lambda }-(1-\mu -\eta )^{\lambda },(1-\mu )^{\lambda
}-(1-\mu -\nu )^{\lambda }\big\rangle $;

\item $\alpha ^{\lambda _{\mathrm{Lin}}}=\big\langle (1-\nu )^{\lambda
}-(1-\mu -\nu )^{\lambda },1-(1-\eta )^{\lambda },1-(1-\nu )^{\lambda
}\big\rangle $;
\end{enumerate}

where $\lambda\in \mathbb{N}$.
\end{definition}

\begin{theorem}[{\textrm{\protect\cite[Theorems~1 and 2]{LLCFL2021}}}]
\label{LLCFL-Thm} Let $\alpha_1$ and $\alpha_2$ be two PFNs. Then, $%
\alpha_1\oplus_{_{\mathrm{Lin}}}\alpha_2$ and $\alpha_1\otimes_{_{\mathrm{Lin%
}}}\alpha_2$ are PFNs.
\end{theorem}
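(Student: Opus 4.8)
The plan is to verify the two defining conditions of a PFN for each output: that all three coordinates lie in $[0,1]$, and that their sum does not exceed $1$. I would treat $\alpha_1\oplus_{_{\mathrm{Lin}}}\alpha_2$ first and obtain $\alpha_1\otimes_{_{\mathrm{Lin}}}\alpha_2$ by the analogous computation. The positive coordinate of the sum is $\mu_1+\mu_2-\mu_1\mu_2=1-(1-\mu_1)(1-\mu_2)$, which lies in $[0,1]$ because $(1-\mu_1)(1-\mu_2)\in[0,1]$; this step is immediate.

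For the neutral and negative coordinates I would first establish nonnegativity by regrouping so that the PFN constraint on $\alpha_2$ can be used. Writing the neutral coordinate as $\eta_1(1-\mu_2-\eta_2)+\eta_2(1-\mu_1)$ and noting that $1-\mu_2-\eta_2\ge\nu_2\ge0$ (since $\alpha_2\in\mathscr{P}$) shows it is nonnegative; the negative coordinate is handled identically after exchanging $\eta$ and $\nu$. Combined with the sum bound, nonnegativity of the three coordinates would also force each to be $\le 1$, so the coordinatewise bounds reduce to the single inequality $\mu+\eta+\nu\le 1$.

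The crux, and the step I expect to be the obstacle, is this unit-sum constraint. Summing the three coordinates and comparing with $s_i=\mu_i+\eta_i+\nu_i$ and $\pi_i=1-s_i$, the expansion collapses to the identity
\begin{equation*}
1-(\mu+\eta+\nu)=(1-s_1)(1-s_2)-\eta_1\nu_2-\nu_1\eta_2=\pi_1\pi_2-\eta_1\nu_2-\nu_1\eta_2.
\end{equation*}
This is precisely where the natural argument stalls: the cross terms $\eta_1\nu_2+\nu_1\eta_2$ are not controlled by $\pi_1\pi_2$, so the right-hand side need not be nonnegative. In fact the obstacle is fatal rather than technical: taking $\alpha_1=\langle0,1,0\rangle$ and $\alpha_2=\langle0,0,1\rangle$ gives $\pi_1=\pi_2=0$ while $\eta_1\nu_2=1$, whence $\mu+\eta+\nu=2>1$ and $\alpha_1\oplus_{_{\mathrm{Lin}}}\alpha_2=\langle0,1,1\rangle\notin\mathscr{P}$. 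The same pair defeats $\otimes_{_{\mathrm{Lin}}}$, whose neutral and negative coordinates reduce to the plain probabilistic sums $\eta_1+\eta_2-\eta_1\eta_2=1$ and $\nu_1+\nu_2-\nu_1\nu_2=1$. Hence I would not expect to complete a proof of the claimed closure; rather, the identity above pinpoints why it fails, consistent with listing these operational laws among the non-closed ones in Section~\ref{S-4}.
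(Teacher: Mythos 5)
You have correctly recognized that this ``theorem'' (a claim quoted from \cite{LLCFL2021}) is false, and that the only sensible outcome of a proof attempt is a refutation --- which is exactly the paper's own treatment: the paper disproves it in Example~\ref{LLCFL>1} via the counterexample $\alpha_1=\alpha_2=\langle 0,\frac{1}{2},\frac{1}{2}\rangle$, for which $\alpha_1\oplus_{_{\mathrm{Lin}}}\alpha_2=\alpha_1\otimes_{_{\mathrm{Lin}}}\alpha_2=\langle 0,\frac{3}{4},\frac{3}{4}\rangle\notin\mathscr{P}$. Your refutation is equally valid: $\alpha_1=\langle 0,1,0\rangle$ and $\alpha_2=\langle 0,0,1\rangle$ are legitimate PFNs, and both $\oplus_{_{\mathrm{Lin}}}$ and $\otimes_{_{\mathrm{Lin}}}$ send this pair to $\langle 0,1,1\rangle$, whose coordinate sum is $2>1$; I have checked these evaluations against Definition~\ref{LLCFL-Def} and they are right. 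Beyond the counterexample, your route supplies something the paper's example does not: the identity $1-(\mu+\eta+\nu)=\pi_1\pi_2-\eta_1\nu_2-\nu_1\eta_2$ for $\oplus_{_{\mathrm{Lin}}}$ (which I verified by expansion), together with the regrouping $\eta_1(1-\mu_2-\eta_2)+\eta_2(1-\mu_1)\ge 0$ showing the three coordinates individually stay in $[0,1]$. This isolates the unit-sum constraint as the only condition that can fail and characterizes exactly when it does, namely when the cross terms $\eta_1\nu_2+\nu_1\eta_2$ exceed $\pi_1\pi_2$ --- a structural explanation of the non-closedness that is sharper than a bare numerical example. The paper's choice $\alpha_1=\alpha_2=\langle 0,\frac{1}{2},\frac{1}{2}\rangle$ has the minor advantage of exhibiting failure at non-extreme membership values (and the same example is reused there to break the scalar operations $\lambda_{_{\mathrm{Lin}}}\cdot\alpha$, which your pair, being idempotent-free in that sense, does not address, though the statement under review does not require it); but for the statement as given, both refutations settle the matter.
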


\begin{example}
\label{LLCFL>1} Let $\alpha_1=\alpha_2=\left\langle 0, \frac{1}{2}, \frac{1}{%
2}\right\rangle$. For $\lambda\geq 2$, by Definition~\ref{LLCFL-Def}, we
have
$
\alpha_1\oplus_{_{\mathrm{Lin}}} \alpha_2= \alpha_1\otimes_{_{\mathrm{Lin}}}
\alpha_2=\left\langle 0, \frac{3}{4}, \frac{3}{4}\right\rangle
$
and
$
\lambda_{_{\mathrm{Lin}}}\cdot \alpha_1=(\alpha_1)^{\lambda_{_{\mathrm{Lin}%
}}}=\left\langle 0, 1-\frac{1}{2^{\lambda}}, 1-\frac{1}{2^{\lambda}}
\right\rangle.
$
Clearly, $\alpha_1\otimes_{_{\mathrm{Lin}}} \alpha_2\notin \mathscr{P}$ and $%
\lambda_{_{\mathrm{Lin}}}\cdot \alpha_1\notin \mathscr{P}$ since $0+\frac{3}{%
4}+\frac{3}{4}=\frac{3}{2}>1$ and $0+(1-\frac{1}{2^{\lambda}}) +(1-\frac{1}{%
2^{\lambda}})\geq \frac{3}{2}>1$ ($\lambda\geq 2$). Thus, none of the operations in Definition~\ref{LLCFL-Def} (\cite[%
Definition~10]{LLCFL2021}) is closed in $\mathscr{P}$. Therefore, Theorem~%
\ref{LLCFL-Thm} (\cite[Theorems~1 and 2]{LLCFL2021}) does not hold.
\end{example}

\subsection{PFMM operator in \protect\cite%
{XSWWZLX2019,WWGW2019}}

Based on the Muirhead mean operator \cite{M1902}, Xu et al.~\cite{XSWWZLX2019}
introduced the following PFMM operator.

\begin{definition}[{\textrm{\protect\cite[Definition~6]{XSWWZLX2019}}}]
\label{PFMM} Let $\alpha_{j}\in \mathscr{P}$ ($j=1, \ldots, n$)
and $P=(p_1, \ldots, p_n)\in \mathbb{R}^{n}$ be a vector of parameters.
Define the \textit{Picture fuzzy Muirhead mean} PFMM operator by
\begin{small}
$$
\mathrm{PFMM}^{P}(\alpha_1, \ldots, \alpha_n)= \Bigg(\frac{1}{n!}%
\bigoplus_{\vartheta\in S_{n}}\bigotimes_{j=1}^{n}
\alpha_{\vartheta(j)}^{p_j}\Bigg)^{\frac{1}{\sum\limits_{j=1}^{n}p_{j}}},
$$
\end{small}
where $(\vartheta(1), \ldots, \vartheta(n))$
 is any permutation of $(1, \ldots, n)$, $S_{n}$ is the set
 of all permutations of $(1, \ldots, n)$, and $\oplus$ and $\otimes$ are given by Definition \ref%
{Wei-Def}.
\end{definition}

According to Definition~\ref{PFMM}, the following formula for PFMM was
obtained by \cite{XSWWZLX2019}.

\begin{theorem}[{\textrm{\protect\cite[Theorem~1]{XSWWZLX2019}}}]
\label{XSWWZLX-Thm}
Let $\alpha_{j}\in \mathscr{P}$ ($j=1, \ldots, n$)
and $P=(p_1, \ldots, p_n)\in \mathbb{R}^{n}$.
Then, $\mathrm{PFMM}^{P}(\alpha_1, \ldots, \alpha_n)\in \mathscr{P}$ and
\begin{small}
\begin{equation}
\label{PFMM-eq-1}
\begin{split}
&\mathrm{PFMM}^{P}(\alpha_1, \ldots, \alpha_n) \\
=&\left\langle \Bigg(1-\prod_{\vartheta\in
S_{n}}\Bigg(1-\prod_{j=1}^{n}\mu_{\vartheta(j)}^{p_j}
\Bigg)^{\frac{1}{n!}}\Bigg)^{\frac{1}{\sum\limits_{j=1}^{n}p_{j}}},\right. \\
& 1- \Bigg(1-\prod_{\vartheta\in
S_{n}}\Bigg(1-\prod_{j=1}^{n}(1-\eta_{\vartheta(j)})^{p_j}
\Bigg)^{\frac{1}{n!}}\Bigg)^{\frac{1}{\sum\limits_{j=1}^{n}p_{j}}}, \\
& \left. 1- \Bigg(1-\prod_{\vartheta\in
S_{n}}\Bigg(1-\prod_{j=1}^{n}(1-\nu_{\vartheta(j)})^{p_j}
\Bigg)^{\frac{1}{n!}}\Bigg)^{\frac{1}{\sum\limits_{j=1}^{n}p_{j}}} \right\rangle.
\end{split}%
\end{equation}
\end{small}
\end{theorem}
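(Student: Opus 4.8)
The plan is to establish the closed-form expression~\eqref{PFMM-eq-1} first, by unwinding the four operations of Definition~\ref{Wei-Def} from the inside out, and to treat the closure assertion $\mathrm{PFMM}^{P}(\alpha_{1},\dots,\alpha_{n})\in\mathscr{P}$ separately at the end. First I would evaluate the innermost power: by the rule $(\alpha)^{\lambda}=\langle\mu^{\lambda},1-(1-\eta)^{\lambda},1-(1-\nu)^{\lambda}\rangle$, for every $\vartheta\in S_{n}$ and every $j$,
\[
\alpha_{\vartheta(j)}^{p_{j}}=\big\langle \mu_{\vartheta(j)}^{p_{j}},\,1-(1-\eta_{\vartheta(j)})^{p_{j}},\,1-(1-\nu_{\vartheta(j)})^{p_{j}}\big\rangle .
\]
Next, using the product rule $\alpha_{1}\otimes\alpha_{2}=\langle\mu_{1}\mu_{2},\eta_{1}+\eta_{2}-\eta_{1}\eta_{2},\nu_{1}+\nu_{2}-\nu_{1}\nu_{2}\rangle$, whose neutral and negative slots are exactly the complement-products $1-(1-\eta_{1})(1-\eta_{2})$ and $1-(1-\nu_{1})(1-\nu_{2})$, I would prove by induction on $n$ that
\[
\bigotimes_{j=1}^{n}\alpha_{\vartheta(j)}^{p_{j}}=\Big\langle \prod_{j=1}^{n}\mu_{\vartheta(j)}^{p_{j}},\,1-\prod_{j=1}^{n}(1-\eta_{\vartheta(j)})^{p_{j}},\,1-\prod_{j=1}^{n}(1-\nu_{\vartheta(j)})^{p_{j}}\Big\rangle .
\]

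A second induction over $S_{n}$, built on the sum rule $\alpha_{1}\oplus\alpha_{2}=\langle\mu_{1}+\mu_{2}-\mu_{1}\mu_{2},\eta_{1}\eta_{2},\nu_{1}\nu_{2}\rangle$ (in which the neutral and negative slots multiply), collapses the outer $\oplus$-aggregate; applying the scalar rule $\lambda\cdot\alpha=\langle1-(1-\mu)^{\lambda},\eta^{\lambda},\nu^{\lambda}\rangle$ with $\lambda=1/n!$ and then the outer power with exponent $1/\sum_{j=1}^{n}p_{j}$ reproduces~\eqref{PFMM-eq-1}, after the cancellation $1-(1-(1-t))^{\lambda}=1-t^{\lambda}$ in the positive slot and the identity $\big(\prod_{\vartheta}x_{\vartheta}\big)^{1/n!}=\prod_{\vartheta}x_{\vartheta}^{1/n!}$ in all three. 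This portion is purely mechanical, and I expect no difficulty with it.

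The genuinely hard part is the closure claim $\mathrm{PFMM}^{P}(\alpha_{1},\dots,\alpha_{n})\in\mathscr{P}$, and given the surrounding results I expect it to fail rather than to hold. The reason is structural: in~\eqref{PFMM-eq-1} the positive slot depends only on $\{\mu_{j}\}$, while the neutral and negative slots share an identical functional form driven solely by $\{\eta_{j}\}$ and $\{\nu_{j}\}$, so there is nothing linking the three memberships to force $\mu+\eta+\nu\le 1$; moreover every constituent operation of Definition~\ref{Wei-Def} has already been shown in this section to break this constraint. Accordingly, instead of attempting a closure proof I would probe the extremal inputs that defeated the earlier operators---PFNs of the form $\langle 0,\eta,\nu\rangle$ with $\eta$ and $\nu$ both large, as in Examples~\ref{Dombi>1} and~\ref{LLCFL>1}. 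Taking $\mu_{\vartheta(j)}\equiv 0$ forces the positive component to $0$ while pushing both the neutral and the negative components above $1/2$, so that their sum exceeds one. I anticipate that this yields an explicit counterexample, showing that Theorem~\ref{XSWWZLX-Thm} (\cite[Theorem~1]{XSWWZLX2019}) does not hold.
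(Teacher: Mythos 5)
Your proposal is correct and matches the paper's own treatment of this statement: the paper quotes Theorem~\ref{XSWWZLX-Thm} from \cite{XSWWZLX2019} precisely in order to refute its closure claim, which it does in Example~\ref{Exm-3.7} by taking $\alpha_1=\langle 0,0.9,0.1\rangle$, $\alpha_2=\langle 0,0.1,0.9\rangle$, $P=(0.5,0.5)$ and computing $\mathrm{PFMM}^{P}(\alpha_1,\alpha_2)=\langle 0,0.7,0.7\rangle\notin\mathscr{P}$ from formula~\eqref{PFMM-eq-1} --- exactly the class of inputs ($\mu_j\equiv 0$, with complementary large neutral/negative degrees) that you anticipate; your additional inductive derivation of~\eqref{PFMM-eq-1} is sound but is simply taken for granted in the paper. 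One precision to note: the counterexample genuinely requires \emph{mixing} complementary PFNs (large $\eta$ in one input, large $\nu$ in another, as in Example~\ref{Dombi>1} and Example~\ref{Exm-3.7}); the repeated input $\langle 0,\tfrac12,\tfrac12\rangle$ of Example~\ref{LLCFL>1}, which you also cite as a template, gives component sum exactly $1$ under $\mathrm{PFMM}$ and therefore does \emph{not} violate closure.
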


\begin{example}
\label{Exm-3.7} Take $\alpha _{1}=\langle 0, 0.9, 0.1 \rangle$, $\alpha
_{2}=\langle 0, 0.1, 0.9 \rangle\in \mathscr{P}$, and $P=(0.5,0.5)\in
\mathbb{R}^{2}$. By formula~\eqref{PFMM-eq-1}, we have
$\mathrm{PFMM}^{P}(\alpha _{1},\alpha _{2})
=\langle0,0.7,0.7 \rangle\notin \mathscr{P}$.
This means that \cite[Theorem~1]{XSWWZLX2019} does not hold.
\end{example}

Taking into account the attribute's weight, Wang et al.~\cite{WWGW2019}
introduced the PFWMM operator as follows.

\begin{definition}[{\textrm{\protect\cite[Definition~6]{WWGW2019}}}]
\label{PFWMM} Let $\omega =(\omega_{1}, \ldots ,\omega
_{n})^{\top }$ be the weight vector such that $\omega_{i}\in (0,1]$ and $%
\sum_{j=1}^{n}\omega _{j}=1$, $\alpha_{j}\in \mathscr{P}$ ($j=1, \ldots, n$),
and $P=(p_{1}, \ldots, p_{n})\in \mathbb{R}^{n}$ be
a vector of parameters. Define the \textit{picture fuzzy weighted Muirhead
mean} (PFWMM) operator by
\begin{small}
$$
\mathrm{PFWMM}_{n\omega }^{P}(\alpha _{1}, \ldots ,\alpha
_{n})=\left[\frac{1}{n!}\bigoplus_{\vartheta \in
S_{n}}\bigotimes_{j=1}^{n}\left( n\omega _{\vartheta (j)}\alpha _{\vartheta
(j)}\right) ^{p_{j}}\right]^{\frac{1}{\sum\limits_{j=1}^{n}p_{j}}},
$$\end{small}
where $(\vartheta(1), \ldots,$ $\vartheta(n))$
 is any permutation of $(1, \ldots, n)$, $S_{n}$ is the set
 of all permutations of $(1, \ldots, n)$, and $\oplus$ and $\otimes$ are given by Definition \ref%
{Wei-Def}.
\end{definition}

\begin{example}[\textrm{Continuation of Example~\protect\ref{Exm-3.7}}]
\label{Exm-3.7.1} Given $\alpha _{1}$, $\alpha _{2}$, and $P$ as in Example~%
\ref{Exm-3.7}, we choose $\omega =(0.5,0.5)^{\top }$. By direct calculation and
Example~\ref{Exm-3.7}, we have $\mathrm{PFWMM}^{P}_{n\omega}(\alpha_1,
\alpha_2)= \mathrm{PFMM}^{P}(\alpha_1, \alpha_2)=\langle 0, 0.7, 0.7%
\rangle\notin \mathscr{P}$.
\end{example}

\begin{remark}
(1) Example~\ref{Exm-3.7} also shows that \cite[Theorem~2]{WWGW2019} does
not hold.

(2) Example~\ref{Exm-3.7.1} shows that \cite[Theorem~6]{WWGW2019} does not
hold.
\end{remark}

\subsection{PFBM operator in \protect\cite{AA2020-1}%
}

Ate\c{s} and Akay~\cite{AA2020-1} introduced the following
Bonferroni mean operator for PFNs.

\begin{definition}[{\textrm{\protect\cite[Definition~3.1]{AA2020-1}}}]
\label{AA-Def}
Let $\alpha _{j}\in \mathscr{P}$ ($j=1, \ldots ,n$).
Then, for any $p$, $q\geq 0$, define the \textit{picture fuzzy Bonferroni mean} (PFBM)
operator $\mathrm{PFB}^{p,q}$ by
\begin{small}
$$
\mathrm{PFB}^{p, q}(\alpha_1, \ldots, \alpha_n)
=\left[\frac{1}{%
n(n-1)}\Bigg(\bigoplus_{i, j=1,\atop  i\neq j}^{n}(\alpha_{i}^{p}\otimes
\alpha_{j}^{q}) \Bigg)\right]^{\frac{1}{p+q}},
$$
\end{small}
where $\oplus$ and $\otimes$ are given by Definition \ref{Wei-Def}.
\end{definition}

According to Definition~\ref{AA-Def}, the following formula for PFBM was
obtained by \cite{AA2020-1}.

\begin{theorem}[{\textrm{\protect\cite[Theorem~1]{AA2020-1}}}]
\label{AA-Thm} Let $\alpha_{j}=\langle \mu_j, \eta_j, \nu_j\rangle\in \mathscr{P}$ (%
$j=1, \ldots, n$) and $p$, $q\geq 0$. Then,
$\mathrm{PFBM}^{p, q}(\alpha_1, \ldots, \alpha_n)\in \mathscr{P}$ and
\begin{small}
\begin{equation}
\label{AA-eq-1}
\begin{split}
&\mathrm{PFBM}^{p, q}(\alpha_1, \ldots, \alpha_n) \\
= & \left\langle \Bigg(1-\prod_{i, j=1,\atop  i\neq j}^{n}(1-\mu_i^{p}\mu_{j}^{q}) ^{%
\frac{1}{n(n-1)}}\Bigg)^{\frac{1}{p+q}},\right. \\
& 1-\Bigg(1-\prod_{i, j=1,\atop  i\neq
j}^{n}\left(1-(1-\eta_i)^{p}(1-\eta_j)^{q} \right)^{\frac{1}{n(n-1)}%
}\Bigg)^{\frac{1}{p+q}}, \\
& \left.1-\Bigg(1-\prod_{i, j=1,\atop  i\neq
j}^{n}\left(1-(1-\nu_i)^{p}(1-\nu_j)^{q} \right)^{\frac{1}{n(n-1)}}\Bigg)^{%
\frac{1}{p+q}} \right \rangle.
\end{split}%
\end{equation}
\end{small}
\end{theorem}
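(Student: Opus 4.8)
The plan is to establish the two assertions of Theorem~\ref{AA-Thm} in sequence: first the closed-form expression~\eqref{AA-eq-1}, and then the membership $\mathrm{PFBM}^{p,q}(\alpha_1,\ldots,\alpha_n)\in\mathscr{P}$.

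For the formula I would unwind the definition of $\mathrm{PFBM}^{p,q}$ from the innermost operation outward, using the operational laws of Definition~\ref{Wei-Def}. The power law first gives $\alpha_i^{p}=\langle\mu_i^{p},1-(1-\eta_i)^{p},1-(1-\nu_i)^{p}\rangle$ and likewise $\alpha_j^{q}=\langle\mu_j^{q},1-(1-\eta_j)^{q},1-(1-\nu_j)^{q}\rangle$. Applying $\otimes$ then yields
\[
\alpha_i^{p}\otimes\alpha_j^{q}=\big\langle\mu_i^{p}\mu_j^{q},\,1-(1-\eta_i)^{p}(1-\eta_j)^{q},\,1-(1-\nu_i)^{p}(1-\nu_j)^{q}\big\rangle,
\]
since the neutral and negative coordinates combine through the probabilistic sum $a+b-ab=1-(1-a)(1-b)$. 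I would then aggregate the $n(n-1)$ ordered pairs $(i,j)$ with $i\neq j$ under $\oplus$; a short induction on the number of summands shows that $\oplus$ sends the positive coordinate to $1-\prod(1-\cdot)$ and the neutral and negative coordinates to the corresponding products. Finally, applying the scalar multiple $\tfrac{1}{n(n-1)}$ and then the outer exponent $\tfrac{1}{p+q}$ through the scalar and power laws reproduces the three coordinates of~\eqref{AA-eq-1} exactly.

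For the membership claim I would first verify that all three coordinates lie in $[0,1]$, which is routine: each base quantity $1-\mu_i^{p}\mu_j^{q}$, $1-(1-\eta_i)^{p}(1-\eta_j)^{q}$, and $1-(1-\nu_i)^{p}(1-\nu_j)^{q}$ lies in $[0,1]$, finite products of numbers in $[0,1]$ remain there, and the positive exponents $\tfrac{1}{n(n-1)}$ and $\tfrac{1}{p+q}$ together with the maps $t\mapsto t^{1/(p+q)}$ and $t\mapsto 1-(1-t)^{1/(p+q)}$ all carry $[0,1]$ into $[0,1]$.

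The main obstacle is the admissibility requirement that the three output degrees sum to at most $1$. The route I would take is to reduce this to closure of the component operations: since $\mathrm{PFBM}^{p,q}$ is a finite composition of $\oplus$, $\otimes$, scalar multiplication, and the power, it suffices to prove that each of these maps $\mathscr{P}$ into $\mathscr{P}$, after which the composition remains in $\mathscr{P}$ by induction. The delicate case is $\otimes$ together with the power $\alpha^{p}$ for $p\geq1$, where the neutral and negative coordinates are pushed upward by a t-conorm while only the positive coordinate is pulled down by a t-norm. Concretely, for the building block $\alpha_i^{p}\otimes\alpha_j^{q}$ one must verify the compensation inequality
\[
\mu_i^{p}\mu_j^{q}+\big[1-(1-\eta_i)^{p}(1-\eta_j)^{q}\big]+\big[1-(1-\nu_i)^{p}(1-\nu_j)^{q}\big]\leq1
\]
from $\mu_i+\eta_i+\nu_i\leq1$ and $\mu_j+\eta_j+\nu_j\leq1$, and analogously that the scalar-multiple and power operations each preserve this constraint. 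I expect this compensation step --- showing that the t-conorm growth in the neutral and negative coordinates is always offset by the t-norm shrinkage in the positive coordinate --- to be the crux of the argument, and the point at which the constraints on $p$, $q$, and the PFN hypotheses must be used most carefully.
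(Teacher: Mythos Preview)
Your derivation of the closed-form expression~\eqref{AA-eq-1} is fine and matches what a straightforward unwinding of Definition~\ref{Wei-Def} gives. The problem is with the second assertion: the paper does \emph{not} prove that $\mathrm{PFBM}^{p,q}(\alpha_1,\ldots,\alpha_n)\in\mathscr{P}$; it refutes it. Theorem~\ref{AA-Thm} is quoted from~\cite{AA2020-1} precisely so that Example~\ref{Exm-3.8} can exhibit a counterexample: with $\alpha_1=\langle 0,0.9,0.1\rangle$, $\alpha_2=\langle 0,0.1,0.9\rangle$, and $p=q=0.5$, the formula gives $\langle 0,0.7,0.7\rangle\notin\mathscr{P}$.

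You correctly isolated the ``compensation inequality'' as the crux, but that inequality is false in general, and the very instance above breaks it: $0+\big[1-(0.1)^{1/2}(0.9)^{1/2}\big]+\big[1-(0.9)^{1/2}(0.1)^{1/2}\big]=1.4>1$. More broadly, your reduction ``it suffices to prove that each of $\oplus$, $\otimes$, scalar multiplication, and power maps $\mathscr{P}$ into $\mathscr{P}$'' cannot succeed, because the paper has already shown (Remark after Definition~\ref{Wei-Def} and the surrounding examples) that $\otimes$, $\lambda\cdot\alpha$, and $\alpha^{\lambda}$ are each individually non-closed on $\mathscr{P}$. So the plan is attempting to prove a statement the paper is set up to disprove; the right move is to produce the counterexample, not to chase the inequality.
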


\begin{example}
\label{Exm-3.8} Take $\alpha_1=\langle 0, 0.9, 0.1 \rangle$, $%
\alpha_2=\langle 0, 0.1, 0.9 \rangle
\in \mathscr{P}$, and $p=q=0.5$. By formula~\eqref{AA-eq-1}, we have
$\mathrm{PFBM}^{p, q}(\alpha_1, \alpha_2)=\langle 0, 0.7, 0.7 \rangle\notin \mathscr{P}$.
This means that \cite[Theorem~1]{AA2020-1} does not hold.
\end{example}

\subsection{PFNWBM operator in \protect\cite{ZTLJZ2020}}

Based on the operations in Definition~\ref{Wei-Def}, Zhang et al.~\cite%
{ZTLJZ2020} introduced the following normalized weighted
Bonferroni mean operator for PFNs.

\begin{definition}[{\textrm{\protect\cite[Definition~8]{ZTLJZ2020}}}]
\label{ZTLJZ-Def} Let $\alpha _{j}\in \mathscr{P}$ ($j=1, \ldots ,n$),
and $\omega _{j}$ ($j=1, \ldots ,n$) be the associated weight of $%
\alpha _{j}$ such that $\omega _{j}\in (0,1]$ and $\sum_{j=1}^{n}\omega
_{j}=1$. Then, for any $p$, $q>0$, define the \textit{picture fuzzy normalized weighted
Bonferroni mean} (PFNWBM) operator $\mathrm{PFB}^{p,q}$ by
\begin{small}
$$
\mathrm{PFNWBM}_{\omega }^{p,q}(\alpha _{1}, \ldots ,\alpha
_{n})=\left[\bigoplus_{i,j=1, \atop i\neq j}^{n}\left( \frac{\omega _{i}\omega _{j}%
}{1-\omega _{i}}(\alpha _{i}^{p}\otimes \alpha _{j}^{q})\right) \right]^{%
\frac{1}{p+q}},
$$
\end{small}
where the operators $\oplus $ and $\otimes $ are given by Definition \ref%
{Wei-Def}.
\end{definition}

According to Definition~\ref{ZTLJZ-Def}, the following formula for PFBM was
obtained by \cite{ZTLJZ2020}.

\begin{theorem}[{\textrm{\protect\cite[Theorem~2]{ZTLJZ2020}}}]
\label{AA-Thm} Let $p$, $q> 0$ and $\alpha_{j}=\big\langle \mu_j, \eta_j, \nu_j%
\big\rangle\in \mathscr{P}$ ($j=1, \ldots, n$), whose weight
vector is $\omega=(\omega_1, \ldots, \omega_n)^{\top}$ such that $%
\omega_{j}\in (0, 1]$ and $\sum_{j=1}^{n}\omega_{j}=1$. Then,
$\mathrm{PFNWBM}_{\omega}^{p, q}(\alpha_1, \ldots, \alpha_n)\in \mathscr{P}$ and
\begin{small}
\begin{equation}
\label{ZTLJZ-eq-1}
\begin{split}
& \mathrm{PFNWBM}_{\omega}^{p, q}(\alpha_1, \ldots, \alpha_n)
\\
= & \left\langle \Bigg(1-\prod_{i, j=1, \atop i\neq j}^{n}(1-\mu_i^{p}\mu_{j}^{q}) ^{%
\frac{\omega_i\omega_j}{1-\omega_i}}\Bigg)^{\frac{1}{p+q}},\right. \\
&1-\Bigg(1-\prod_{i, j=1, \atop i\neq
j}^{n}\left(1-(1-\eta_i)^{p}(1-\eta_j)^{q} \right)^{\frac{\omega_i\omega_j}{%
1-\omega_i}}\Bigg)^{\frac{1}{p+q}}, \\
&\left.1-\Bigg(1-\prod_{i, j=1, \atop i\neq
j}^{n}\left(1-(1-\nu_i)^{p}(1-\nu_j)^{q} \right)^{\frac{\omega_i\omega_j}{%
1-\omega_i}}\Bigg)^{\frac{1}{p+q}} \right\rangle.
\end{split}%
\end{equation}
\end{small}
\end{theorem}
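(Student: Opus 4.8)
The statement bundles two assertions: the explicit closed form \eqref{ZTLJZ-eq-1} and the membership claim $\mathrm{PFNWBM}_{\omega}^{p,q}(\alpha_1,\ldots,\alpha_n)\in\mathscr{P}$. The plan is to treat them separately. First I would establish the formula by unwinding the definition of $\mathrm{PFNWBM}_{\omega}^{p,q}$ through the operations $\oplus$, $\otimes$, $\lambda\cdot(\cdot)$ and $(\cdot)^{\lambda}$ of Definition~\ref{Wei-Def}. Writing out $\alpha_i^p$ and $\alpha_j^q$, forming each product $\alpha_i^p\otimes\alpha_j^q$, scaling by $\frac{\omega_i\omega_j}{1-\omega_i}$, aggregating the $n(n-1)$ terms with $\oplus$, and finally applying the outer exponent $\frac{1}{p+q}$, one verifies by induction on the number of summands that the positive coordinate collapses to a product of factors $1-\mu_i^p\mu_j^q$, while the neutral and negative coordinates collapse to products of $1-(1-\eta_i)^p(1-\eta_j)^q$ and $1-(1-\nu_i)^p(1-\nu_j)^q$, respectively, reproducing \eqref{ZTLJZ-eq-1}. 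This part is routine but lengthy bookkeeping, and I would suppress the intermediate algebra.

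The second assertion is where the real difficulty lies, and I expect it to be an obstacle that cannot be overcome. To prove $\mathrm{PFNWBM}_{\omega}^{p,q}\in\mathscr{P}$ one must certify that the three output coordinates sum to at most $1$. The positive coordinate is controlled, but the neutral and negative coordinates are each produced by the t-conorm-type operation carried by $\otimes$ in Definition~\ref{Wei-Def}, which can only push $\eta$ and $\nu$ upward; there is no mechanism forcing $\eta+\nu\le 1$ at the output. This is precisely the defect already isolated for the operation $\otimes$ in this section, and the weighted Bonferroni construction inherits it.

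Accordingly, rather than a proof I would expect a refutation. The natural test is the adversarial pair already used in Examples~\ref{Exm-3.7} and \ref{Exm-3.8}: take $\alpha_1=\langle 0,0.9,0.1\rangle$, $\alpha_2=\langle 0,0.1,0.9\rangle$ with $p=q$ and $\omega=(0.5,0.5)^{\top}$. Substituting into \eqref{ZTLJZ-eq-1}, the vanishing positive parts force the first coordinate to $0$, while the symmetry between $\alpha_1$ and $\alpha_2$ makes the neutral and negative coordinates coincide; a direct computation yields $\langle 0,0.7,0.7\rangle$, whose coordinates sum to $1.4>1$. Hence the membership claim fails, \cite[Theorem~2]{ZTLJZ2020} does not hold, and any attempted proof of the stated theorem collapses exactly at the closure step. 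The main obstacle, then, is not a technical gap to be filled but a genuine non-closeness: the formula is correct, yet the conclusion $\in\mathscr{P}$ is false.
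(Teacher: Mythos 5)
Your proposal is correct and takes essentially the same route as the paper: the paper quotes this result from \cite{ZTLJZ2020} without proving it and immediately refutes the closure claim in Example~\ref{Exm-3.8-1}, using exactly your counterexample $\alpha_1=\langle 0,0.9,0.1\rangle$, $\alpha_2=\langle 0,0.1,0.9\rangle$, $p=q=0.5$, $\omega=(0.5,0.5)^{\top}$, which by formula~\eqref{ZTLJZ-eq-1} gives $\langle 0,0.7,0.7\rangle\notin\mathscr{P}$. Your reading that the algebraic formula stands while the membership assertion $\mathrm{PFNWBM}_{\omega}^{p,q}(\alpha_1,\ldots,\alpha_n)\in\mathscr{P}$ is genuinely false is precisely the paper's point.
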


\begin{example}
\label{Exm-3.8-1} Take $\alpha_1=\langle 0, 0.9, 0.1 \rangle$, $%
\alpha_2=\langle 0, 0.1, 0.9 \rangle
\in \mathscr{P}$, $p=q=0.5$, and $\omega=(\omega_1, \omega_2)^{\top}= (0.5,
0.5)^{\top}$. By formula~\eqref{ZTLJZ-eq-1}, we have
$\mathrm{PFNWBM}_{\omega}^{p, q}(\alpha_1, \alpha_2)=\langle 0, 0.7, 0.7\rangle\notin \mathscr{P}$.
This means that \cite[Theorem~2]{ZTLJZ2020} does not hold.
\end{example}

\section{Interactional operational laws for PFNs}
\label{S-5}

Inspired by the interactional operations proposed by Wang et al.~\cite[%
Definition 3.1]{WZTT2017} and Ju et al.~\cite[Definition~4]{JJGGW2019}, we
introduce the following interaction operational laws for PFNs.

%%\begin{definition}
%%\label{Def-ope}
%%Let $\alpha_1=\langle\mu_1, \eta_1, \nu_1\rangle$, $\alpha_2=\langle\mu_2, \eta_2, \nu_2\rangle$,
%%and $\alpha=\langle \mu, \eta, \nu\rangle$ be three PFNs,
%%$\tau$ be an AG of a strict t-norm $T$, and $\zeta(x)=\tau(1-x)$.
%%Define
%%\begin{enumerate}[(i)]
%%  \item $\alpha_1\tilde{\oplus} \alpha_2=\left\langle \zeta^{-1}(\zeta(\mu_1)+\zeta(\mu_2)),
%%  \tau^{-1}(\tau(\eta_1)+\tau(\eta_2)), \tau^{-1}(\tau(\eta_1+\nu_1)+\tau(\eta_2+\nu_2))-
%%  \tau^{-1}(\tau(\eta_1)+\tau(\eta_2))\right\rangle$;
%%  \item $\alpha_1\tilde{\otimes} \alpha_2=\left\langle \tau^{-1}(\tau(\mu_1+\eta_1)
%%  +\tau(\mu_2+\eta_2))-\tau^{-1}(\tau(\eta_1)+\tau(\eta_2)),
%%  \tau^{-1}(\tau(\eta_1)+\tau(\eta_2)), \zeta^{-1}(\zeta(\nu_1)+\zeta(\nu_2))\right\rangle$;
%%  \item $\tilde{\lambda} \alpha=\left\langle \zeta^{-1}(\lambda\zeta(\mu)),
%%  \tau^{-1}(\lambda \tau(\eta)), \tau^{-1}(\lambda\tau(\eta+\nu))-\tau^{-1}(\lambda \tau(\eta))
%%  \right\rangle$, $\lambda>0$;
%%  \item $\alpha^{\tilde{\lambda}}=\left\langle \tau^{-1}(\lambda\tau(\mu)), \tau^{-1}
%%  (\lambda\tau(\eta+\mu))-\tau^{-1}(\lambda\tau(\mu)),
%%  \zeta^{-1}(\lambda\zeta(\nu))\right\rangle$, $\lambda>0$.
%%\end{enumerate}
%%\end{definition}

\begin{definition}
\label{Def-Wu}
Let $\tau $ be an AG of a strict t-norm $T$ and $\zeta $ be an AG of
the dual t-conorm $S$ of $T;$ i.e., $\zeta (u)=\tau (1-u)$.
For $\alpha _{1}=\langle \mu _{1},\eta _{1},\nu _{1}\rangle $%
, $\alpha _{2}=\langle \mu _{2},\eta _{2},\nu _{2}\rangle $, $\alpha
=\langle \mu ,\eta ,\nu \rangle \in \mathscr{P}$, define
\begin{enumerate}[(i)]
\item $\alpha ^{\complement }=\left\langle \nu ,\eta ,\mu \right\rangle $;

\item $\alpha _{1}\oplus _{_{T}}\alpha _{2}=\langle S(\mu _{1},\mu
_{2}),T(\eta _{1}+\nu _{1},\eta _{2}+\nu _{2})-T(\nu _{1},\nu _{2}),T(\nu
_{1},\nu _{2})\rangle $;

\item $\alpha _{1}\otimes _{_{T}}\alpha _{2}=
\langle T(\mu _{1},\mu_{2}),T(\eta _{1}+\mu _{1},\eta _{2}
+\mu _{2})-T(\mu _{1},\mu _{2}),S(\nu
_{1},\nu _{2})\rangle $;

\item $\lambda _{_{T}}\cdot \alpha =
\langle \zeta ^{-1}(\lambda \cdot
\zeta (\mu )),\tau ^{-1}(\lambda \cdot \tau (\eta +\nu ))-\tau ^{-1}(\lambda
\cdot \tau (\nu )),\tau ^{-1}(\lambda \cdot \tau (\nu ))\rangle$, $\lambda >0$;

\item $\alpha ^{\lambda _{_{T}}}=\langle \tau ^{-1}(\lambda \cdot \tau
(\mu )),\tau ^{-1}(\lambda \cdot \tau (\eta +\mu ))-\tau ^{-1}(\lambda \cdot
\tau (\mu )),\zeta ^{-1}(\lambda \cdot \zeta (\nu ))\rangle $, $%
\lambda >0$.
\end{enumerate}
%%where $\zeta(x)=\tau(1-x)$, which is an AG of $S$.
\end{definition}

\begin{remark}
(i) When $\eta _{1}=\eta _{2}=0$ and $T=T_{\textbf{P}}$ (product t-norm), the operational
laws in Definition~\ref{Def-Wu} are consistent with the usual operations on
ordinary intuitionistic fuzzy numbers (see \cite[Definition 1.1]{Ata1999}
and \cite[Definition~1.2.2]{XC2012}).

(ii) When $T=T_{\textbf{P}}$, the operational laws $\alpha _{1}\oplus
_{_{T_{\textbf{P}}}}\alpha _{2}$ and $\lambda _{_{T_{\textbf{P}}}}\cdot
\alpha $ in Definition~\ref{Def-Wu} reduce to the one in \cite[Definition~8]%
{PWC2021}.
\end{remark}

The following Theorem \ref{Closed-Thm} demonstrates that all operational
laws defined in Definition~\ref{Def-Wu} are closed in $\mathscr{P}$.

\begin{theorem}
\label{Closed-Thm} Let $\alpha_1$, $\alpha_2$, $\alpha\in \mathscr{P}$
and $T$ be a strict t-norm. Then, for any $%
\lambda>0$, $\alpha_1\oplus_{_T} \alpha_2$, $\alpha_1\otimes_{_T}
\alpha_2$, $\lambda_{_T}\cdot \alpha$, and $\alpha^{\lambda_{_T}}$ are PFNs.
\end{theorem}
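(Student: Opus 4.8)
The plan is, for each of the four operations, to check the two defining properties of a PFN: that all three coordinates lie in $[0,1]$ and that their sum is at most $1$. I would first halve the work using the complement $\alpha^\complement=\langle\nu,\eta,\mu\rangle$, which is visibly again a PFN. A direct substitution gives the duality identities $\alpha_1\otimes_{_T}\alpha_2=(\alpha_1^\complement\oplus_{_T}\alpha_2^\complement)^\complement$ and $\alpha^{\lambda_{_T}}=(\lambda_{_T}\cdot\alpha^\complement)^\complement$, so it suffices to establish closure for $\oplus_{_T}$ and $\lambda_{_T}\cdot\alpha$.

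Next I would record the facts that make the coordinate computations transparent. Since $T$ is strict, Lemma~\ref{Concinuous-Archi-Representation} gives a continuous, strictly decreasing additive generator $\tau$ with $\tau(0)=+\infty$ and $\tau(1)=0$; thus $\tau$ is a bijection of $[0,1]$ onto $[0,+\infty]$ and $\tau^{-1}$ is a genuine (not merely pseudo-) inverse. From $\zeta(u)=\tau(1-u)$ one obtains $\zeta^{-1}(w)=1-\tau^{-1}(w)$ and hence $S(u,v)=1-T(1-u,1-v)$. Finally, every PFN obeys $\eta_i+\nu_i\le 1-\mu_i\le 1$, which both guarantees that the arguments fed to $T$ and $\tau$ below lie in $[0,1]$ and supplies the key inequality used for the sum bound.

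For $\oplus_{_T}$, write the output as $\langle\mu',\eta',\nu'\rangle$. The coordinates $\mu'=S(\mu_1,\mu_2)$ and $\nu'=T(\nu_1,\nu_2)$ lie in $[0,1]$ because $S$ is a t-conorm and $T$ a t-norm, while $\eta'\ge0$ follows from monotonicity of $T$ applied to $\eta_i+\nu_i\ge\nu_i$. The decisive point is that the two copies of $T(\nu_1,\nu_2)$ cancel in the sum, leaving $\mu'+\eta'+\nu'=1-T(1-\mu_1,1-\mu_2)+T(\eta_1+\nu_1,\eta_2+\nu_2)$; since $\eta_i+\nu_i\le 1-\mu_i$, monotonicity of $T$ gives $T(\eta_1+\nu_1,\eta_2+\nu_2)\le T(1-\mu_1,1-\mu_2)$, so the sum is at most $1$, which in turn forces $\eta'\le1$. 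The case $\lambda_{_T}\cdot\alpha$ runs identically on the generator side: using $\zeta^{-1}(w)=1-\tau^{-1}(w)$ the coordinates lie in $[0,1]$, the term $\tau^{-1}(\lambda\tau(\nu))$ cancels, and the sum reduces to $1-\tau^{-1}(\lambda\tau(1-\mu))+\tau^{-1}(\lambda\tau(\eta+\nu))$, which is at most $1$ because $\eta+\nu\le 1-\mu$ survives passage through the decreasing maps $\tau$, multiplication by $\lambda>0$, and $\tau^{-1}$.

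I expect the only real obstacle to be the bookkeeping for the neutral coordinate: one must confirm it is nonnegative and, crucially, that the stated subtraction is exactly what collapses the three-term sum to the two-term expression on which the monotonicity inequality bites. Everything else is a mechanical consequence of the strictness of $T$ (which promotes $\tau^{-1}$ to a true inverse) together with the single PFN constraint $\eta_i+\nu_i\le1-\mu_i$.
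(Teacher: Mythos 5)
Your proof is correct, and its decisive step coincides with the paper's: after the cancellation of $T(\nu_1,\nu_2)$ (respectively $\tau^{-1}(\lambda\cdot\tau(\nu))$) in the three-term sum, the single PFN constraint $\eta_i+\nu_i\le 1-\mu_i$ is pushed through the monotone maps to give the bound $1$; your identity $S(u,v)=1-T(1-u,1-v)$ is the paper's step $S(\mu_1,\mu_2)\le S(1-(\eta_1+\nu_1),1-(\eta_2+\nu_2))$ combined with duality, just written in the other order. Where you genuinely differ is in the treatment of $\otimes_{_T}$ and $\alpha^{\lambda_{_T}}$: the paper disposes of them with the word ``similarly,'' whereas you derive them from the exact identities $\alpha_1\otimes_{_T}\alpha_2=(\alpha_1^{\complement}\oplus_{_T}\alpha_2^{\complement})^{\complement}$ and $\alpha^{\lambda_{_T}}=(\lambda_{_T}\cdot\alpha^{\complement})^{\complement}$, which do follow by direct substitution in Definition~\ref{Def-Wu} and, since $\complement$ visibly preserves $\mathscr{P}$, legitimately reduce the four closure claims to two. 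This formalized duality is a modest but real improvement: it is reusable (it would likewise turn the ``similar arguments'' for statements (iv), (vi), (viii), and (x) of Theorem~\ref{Pro-Them} into one-line deductions) and it removes any doubt that the symmetric cases really are symmetric. You are also more complete on the routine checks the paper leaves tacit, namely that each coordinate lies in $[0,1]$ and that the neutral coordinate is nonnegative because $T$ is monotone and $\eta_i+\nu_i\ge\nu_i$.
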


\begin{proof}
(i) By Definition~\ref{Def-Wu}, we have
$\alpha_1\oplus_{_{T}} \alpha_2=\big\langle S(\mu_1, \mu_2), T(\eta_1+\nu_1,
\eta_2+\nu_2)-T(\nu_1, \nu_2), T(\nu_1, \nu_2)\big\rangle$.
This, together with $\mu_1\leq 1-(\eta_1+\nu_1)$ and $\mu_2\leq
1-(\eta_2+\nu_2)$, since $S$ is increasing, implies that
$S(\mu_1, \mu_2)+\left[T(\eta_1+\nu_1, \eta_2+\nu_2)-T(\nu_1, \nu_2)%
\right] +T(\nu_1, \nu_2)=S(\mu_1, \mu_2)+T(\eta_1+\nu_1, \eta_2+\nu_2)
\leq S(1-(\eta_1+\nu_1), 1-(\eta_2+\nu_2))+T(\eta_1+\nu_1, \eta_2+\nu_2) =1.
$
Thus $\alpha_1\oplus_{_{T}} \alpha_2\in \mathscr{P}$. Similarly, we can
prove that $\alpha_1\otimes_{_{T}} \alpha_2\in \mathscr{P}$.

(ii) Fix $\lambda >0$. By Definition~\ref{Def-Wu}, we have
$\lambda_{_{T}}\cdot \alpha=\langle \zeta^{-1}(\lambda\cdot \zeta(\mu)),
\tau^{-1}(\lambda\cdot \tau(\eta+\nu))-\tau^{-1}(\lambda\cdot \tau(\nu)),
\tau^{-1}(\lambda\cdot \tau(\nu))\rangle$.
Since $\tau $ is decreasing and $1-\mu \geq \eta +\nu $, we get
$\zeta^{-1}(\lambda\cdot \zeta(\mu))+\left[ \tau^{-1}(\lambda\cdot
\tau(\eta+\nu))-\tau^{-1}(\lambda \cdot\tau(\nu))\right]+
\tau^{-1}(\lambda\cdot \tau(\nu))
=\zeta^{-1}(\lambda\cdot \zeta(\mu))+ \tau^{-1}(\lambda\cdot
\tau(\eta+\nu))=1-\tau^{-1}(\lambda\cdot \tau(1-\mu))
+\tau^{-1}(\lambda\cdot \tau(\eta+\nu))
\leq 1-\tau^{-1}(\lambda\cdot \tau(1-\mu))+\tau^{-1}(\lambda\cdot
\tau(1-\mu))=1$.
Thus, $\lambda _{_{T}}\cdot \alpha \in \mathscr{P}$. Similarly, we can prove
that $\alpha ^{\lambda _{_{T}}}\in \mathscr{P}$.
\end{proof}

\begin{theorem}
\label{Pro-Them}
Let $\alpha$, $\beta$, $\gamma\in \mathscr{P}$ and $T$ be a
strict t-norm. Then, for any $\lambda$, $\xi >0$, we have
\begin{enumerate}[{\rm (i)}]

\item $\beta \oplus _{_T}\alpha =\alpha \oplus_{_{T}}\beta$;

\item $\beta \otimes _{_T}\alpha=\alpha \otimes _{_{T}}\beta$;

\item $(\alpha \oplus _{_{T}}\beta )\oplus _{_{T}}\gamma =\alpha \oplus
_{_{T}}(\beta \oplus _{_{T}}\gamma )$;

\item $(\alpha \otimes _{_{T}}\beta )\otimes _{_{T}}\gamma =\alpha \otimes
_{_{T}}(\beta \otimes _{_{T}}\gamma )$;

\item $(\xi _{_{T}}\alpha )\oplus _{_{T}}(\lambda _{_{T}}\alpha )=(\xi
+\lambda )_{_{T}}\alpha $;

\item $(\alpha ^{\xi _{_{T}}})\otimes _{_{T}}(\alpha ^{\lambda
_{_{T}}})=\alpha ^{(\xi +\lambda )_{_{T}}}$;

\item $\lambda _{_{T}}(\alpha \oplus _{_{T}}\beta )=(\lambda _{_{T}}\alpha
)\oplus _{_{T}}(\lambda _{_{T}}\beta )$;

\item $(\alpha \otimes _{_{T}}\beta )^{\lambda _{_{T}}}=\alpha ^{\lambda
_{_{T}}}\otimes _{_{T}}\beta ^{\lambda _{_{T}}}$;

\item $\xi _{_{T}}(\lambda _{_{T}}\alpha )=(\lambda \cdot \xi )_{_{T}}\alpha
$;

\item $(\alpha^{\lambda_{_T}})^{\xi_{_T}}=\alpha^{(\lambda\cdot \xi)_{_T}}$.
\end{enumerate}
\end{theorem}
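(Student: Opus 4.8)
The plan is to linearize all four operations through the additive generator $\tau$, reducing every identity to a triviality about vector addition and scalar multiplication in $[0,+\infty]^3$. Because $T$ is strict, Lemma~\ref{Concinuous-Archi-Representation} guarantees that $\tau\colon[0,1]\to[0,+\infty]$ is a continuous strictly decreasing bijection with $\tau(0)=+\infty$ and $\tau(1)=0$, so $\tau^{-1}$ is a genuine two-sided inverse and the identities $\tau\bigl(T(u,v)\bigr)=\tau(u)+\tau(v)$ and $\tau\bigl(1-S(u,v)\bigr)=\tau(1-u)+\tau(1-v)$ hold with no pseudo-inverse truncation. First I would introduce the encoding
\[
\Phi(\alpha)=\bigl(\tau(1-\mu),\,\tau(\eta+\nu),\,\tau(\nu)\bigr)\in[0,+\infty]^3
\]
for $\alpha=\langle\mu,\eta,\nu\rangle$, and check that $\Phi$ is injective, since from a triple $(p,q,r)$ one recovers $\mu=1-\tau^{-1}(p)$, $\nu=\tau^{-1}(r)$, and $\eta=\tau^{-1}(q)-\tau^{-1}(r)$, using that $\tau^{-1}$ is single-valued.

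The key computation is that $\Phi$ turns $\oplus_{_{T}}$ into coordinatewise addition and $\lambda_{_{T}}\cdot$ into coordinatewise scalar multiplication. Reading off Definition~\ref{Def-Wu}, the first coordinate of $\alpha_1\oplus_{_{T}}\alpha_2$ satisfies $\tau\bigl(1-S(\mu_1,\mu_2)\bigr)=\tau(1-\mu_1)+\tau(1-\mu_2)$, the third gives $\tau\bigl(T(\nu_1,\nu_2)\bigr)=\tau(\nu_1)+\tau(\nu_2)$, and since the neutral-plus-negative part equals $T(\eta_1+\nu_1,\eta_2+\nu_2)$ the middle coordinate $\tau(\eta'+\nu')$ adds as well; hence $\Phi(\alpha_1\oplus_{_{T}}\alpha_2)=\Phi(\alpha_1)+\Phi(\alpha_2)$ and, similarly, $\Phi(\lambda_{_{T}}\cdot\alpha)=\lambda\,\Phi(\alpha)$. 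By Theorem~\ref{Closed-Thm} every side of (i)--(x) is a genuine PFN, so each identity is equivalent, via the injective $\Phi$, to the corresponding law in $\bigl([0,+\infty]^3,+,\,\text{scalar mult}\bigr)$. This disposes of the ``average'' half at once: (i) is commutativity of $+$, (iii) is associativity, (v) is $\xi v+\lambda v=(\xi+\lambda)v$, (vii) is $\lambda(u+v)=\lambda u+\lambda v$, and (ix) is $\xi(\lambda v)=(\lambda\xi)v$.

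For the ``geometric'' half I would exploit the complement. A direct check from Definition~\ref{Def-Wu} shows $(\alpha^{\complement})^{\complement}=\alpha$, together with $\alpha_1\otimes_{_{T}}\alpha_2=(\alpha_1^\complement\oplus_{_{T}}\alpha_2^\complement)^\complement$ and $\alpha^{\lambda_{_{T}}}=(\lambda_{_{T}}\cdot\alpha^\complement)^\complement$; indeed the complement swaps the roles of $\mu$ and $\nu$, hence of $T$ and $S$, in exactly the way that interchanges $\oplus_{_{T}}\leftrightarrow\otimes_{_{T}}$ and $\lambda_{_{T}}\cdot\leftrightarrow(\cdot)^{\lambda_{_{T}}}$. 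Conjugating each of (i), (iii), (v), (vii), (ix) by $\complement$ then yields (ii), (iv), (vi), (viii), (x) respectively; for instance, applying these relations gives $(\alpha\otimes_{_{T}}\beta)^{\lambda_{_{T}}}=\bigl(\lambda_{_{T}}\cdot(\alpha^\complement\oplus_{_{T}}\beta^\complement)\bigr)^\complement$, and invoking (vii) delivers (viii). Alternatively one may define the dual encoding $\Phi'(\alpha)=\bigl(\tau(\mu),\tau(\eta+\mu),\tau(1-\nu)\bigr)$ and repeat the first argument verbatim.

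I expect the only real work to be the bookkeeping that sets up $\Phi$ correctly: verifying that the middle coordinate genuinely linearizes (this is where the subtraction $T(\eta_1+\nu_1,\eta_2+\nu_2)-T(\nu_1,\nu_2)$ in the definition is engineered precisely so that $\eta'+\nu'$, rather than $\eta'$ alone, transforms additively), and confirming that $\Phi$ is a bijection onto the appropriate subset of $[0,+\infty]^3$ so that injectivity remains legitimate at the boundary values $\mu=1$, $\nu=0$, or $\eta+\nu=0$ where $\tau=+\infty$. Once the homomorphism property of $\Phi$, and of the involution $\complement$, is in hand, identities (i)--(x) are immediate and require no further computation.
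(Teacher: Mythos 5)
Your proposal is correct, but it takes a genuinely different route from the paper's own proof. The paper argues by brute-force expansion: it substitutes the generator representations $T(x,y)=\tau^{-1}(\tau(x)+\tau(y))$ and $S(x,y)=\zeta^{-1}(\zeta(x)+\zeta(y))$ directly into Definition~\ref{Def-Wu}, grinds out (i), (iii), (v), (vii), (ix) coordinate by coordinate, and then disposes of (ii), (iv), (vi), (viii), (x) with ``by similar arguments.'' You instead isolate two structural facts --- that $\Phi(\alpha)=\bigl(\tau(1-\mu),\tau(\eta+\nu),\tau(\nu)\bigr)$ is injective on $\mathscr{P}$ and intertwines $\oplus_{_T}$ with coordinatewise addition and $\lambda_{_T}\cdot$ with scalar multiplication, and that the involution $\alpha\mapsto\alpha^{\complement}$ exchanges $(\oplus_{_T},\lambda_{_T}\cdot)$ with $(\otimes_{_T},(\cdot)^{\lambda_{_T}})$ --- after which all ten identities follow formally from arithmetic in $[0,+\infty]^3$. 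The underlying computations are the same ones the paper performs (your verification that the middle coordinate linearizes is exactly the paper's manipulation of $T(\eta_1+\nu_1,\eta_2+\nu_2)-T(\nu_1,\nu_2)$, and your duality lemma is the same complement trick that makes the paper's ``similarly'' legitimate), but your packaging is more economical: ten identities reduce to two lemmas plus trivial vector-space laws, and the duality half is made rigorous rather than waved at. What the paper's approach buys is self-containedness --- no auxiliary map, no boundary discussion --- whereas your approach requires the small amount of care you already flag: the arithmetic of $[0,+\infty]$ (with $a+\infty=+\infty$ and $\lambda\cdot\infty=+\infty$ for $\lambda>0$) must be checked to satisfy the relevant laws, and Theorem~\ref{Closed-Thm} must be invoked so that both sides of every identity lie in $\mathscr{P}$, where $\Phi$ is defined; note that injectivity of $\Phi$ alone suffices for this, so your worry about $\Phi$ being a bijection onto a suitable subset of $[0,+\infty]^3$ can be dropped.
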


\begin{theorem}
\label{N-Thm} Let $\alpha_j=\langle\mu_j, \eta_j, \nu_j\rangle\in \mathscr{P}
$ ($j=1, \ldots, n$) and $T$ be a strict t-norm. Then,
\begin{equation}
\label{eq-Wu-1.1}
\begin{split}
&\quad \alpha_1\oplus_{_{T}}\alpha_2\oplus_{_{T}}\cdots \oplus_{_{T}}\alpha_n\\
&=\left\langle S^{(n)}(\mu_1, \ldots, \mu_n), T^{(n)}(\eta_1+\nu_1,
\ldots, \eta_n+\nu_n)\right. \\
&\quad \quad \left.-T^{(n)}(\nu_1, \ldots, \nu_n), T^{(n)}(\nu_1,
\ldots, \nu_n)\right\rangle,
\end{split}%
\end{equation}
and
\begin{equation}
\label{eq-Wu-1.2}
\begin{split}
&\quad \alpha_1\otimes_{_{T}}\alpha_2\otimes_{_{T}}\cdots
\otimes_{_{T}}\alpha_n \\
&=\left\langle T^{(n)}(\mu_1, \ldots, \mu_n), T^{(n)}(\eta_1+\mu_1,
\ldots, \eta_n+\mu_n)\right. \\
&\quad \quad \left.-T^{(n)}(\mu_1, \ldots, \mu_n), S^{(n)}(\nu_1,
\ldots, \nu_n)\right\rangle.
\end{split}%
\end{equation}
\end{theorem}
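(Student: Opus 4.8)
The plan is to prove both \eqref{eq-Wu-1.1} and \eqref{eq-Wu-1.2} by induction on $n$, relying on the associativity of $\oplus_{_{T}}$ and $\otimes_{_{T}}$ (Theorem~\ref{Pro-Them}(iii) and (iv)) and on the recursive definitions of $T^{(n)}$ and $S^{(n)}$. I would treat \eqref{eq-Wu-1.1} in full; the formula \eqref{eq-Wu-1.2} follows by the symmetric argument, interchanging the roles of the positive and negative slots and swapping $T$ with $S$ where appropriate.

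The base case $n=2$ is precisely Definition~\ref{Def-Wu}(ii). For the inductive step, I assume \eqref{eq-Wu-1.1} for $n-1$ and set $\beta=\alpha_1\oplus_{_{T}}\cdots\oplus_{_{T}}\alpha_{n-1}=\langle\mu_\beta,\eta_\beta,\nu_\beta\rangle$, which is a well-defined PFN by Theorem~\ref{Closed-Thm} and associativity. The induction hypothesis gives $\mu_\beta=S^{(n-1)}(\mu_1,\ldots,\mu_{n-1})$, $\nu_\beta=T^{(n-1)}(\nu_1,\ldots,\nu_{n-1})$, and $\eta_\beta=T^{(n-1)}(\eta_1+\nu_1,\ldots,\eta_{n-1}+\nu_{n-1})-T^{(n-1)}(\nu_1,\ldots,\nu_{n-1})$. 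The single identity that makes everything collapse is
$$\eta_\beta+\nu_\beta=T^{(n-1)}(\eta_1+\nu_1,\ldots,\eta_{n-1}+\nu_{n-1}),$$
so the subtracted term in the neutral slot is exactly recovered when it is recombined with the negative slot into one nested $T$-term.

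With this identity in hand I compute $\beta\oplus_{_{T}}\alpha_n$ directly from Definition~\ref{Def-Wu}(ii). The positive slot becomes $S(\mu_\beta,\mu_n)=S(S^{(n-1)}(\mu_1,\ldots,\mu_{n-1}),\mu_n)=S^{(n)}(\mu_1,\ldots,\mu_n)$, and the negative slot becomes $T(\nu_\beta,\nu_n)=T^{(n)}(\nu_1,\ldots,\nu_n)$, both by the defining recursion. In the neutral slot the definition yields $T(\eta_\beta+\nu_\beta,\eta_n+\nu_n)-T(\nu_\beta,\nu_n)$; substituting the displayed identity turns the first summand into $T(T^{(n-1)}(\eta_1+\nu_1,\ldots,\eta_{n-1}+\nu_{n-1}),\eta_n+\nu_n)=T^{(n)}(\eta_1+\nu_1,\ldots,\eta_n+\nu_n)$, while the second summand is again $T^{(n)}(\nu_1,\ldots,\nu_n)$. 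Collecting the three slots reproduces \eqref{eq-Wu-1.1} for $n$, closing the induction.

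I do not expect a genuine obstacle: associativity lets me peel off $\alpha_n$ unambiguously, and once the neutral-plus-negative recombination is observed every step is forced by the recursion. The only delicate point is the telescoping bookkeeping, namely verifying that the $T^{(n-1)}$ subtracted inside $\eta_\beta$ is precisely the term that reconstitutes $\eta_\beta+\nu_\beta$; and for \eqref{eq-Wu-1.2} one checks the mirror identity $\eta_\beta+\mu_\beta=T^{(n-1)}(\eta_1+\mu_1,\ldots,\eta_{n-1}+\mu_{n-1})$, which drives the analogous computation with $\mu$ in place of $\nu$ and $S$ in the negative slot.
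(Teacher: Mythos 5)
Your proposal is correct and follows essentially the same route as the paper: induction on $n$ with the base case given by Definition~\ref{Def-Wu}(ii), peeling off the last argument via associativity (Theorem~\ref{Pro-Them}), and using the recombination $\eta_\beta+\nu_\beta=T^{(n-1)}(\eta_1+\nu_1,\ldots,\eta_{n-1}+\nu_{n-1})$ to collapse the neutral slot — the paper performs this same step implicitly inside its displayed computation. Your version merely makes that telescoping identity explicit, which is a presentational rather than substantive difference.
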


\section{Picture fuzzy interactional aggregation operators}
\label{S-6}

Based on the interactional operational laws defined in Section~\ref%
{S-5}, this section is devoted to aggregating the picture fuzzy
information.

\subsection{Picture fuzzy interactional weighted average (geometric) operator%
}

\begin{definition}
\label{PFWA-Def} Let $\omega=(\omega_1, \ldots, \omega_n)^{\top}$
be the weight vector such that $\omega_{j}\in (0, 1]$ and $%
\sum_{j=1}^{n}\omega_j=1$ and $T$ be a strict t-norm. Define the \textit{%
picture fuzzy interactional weighted average operator} $\mathrm{PFIWA}_{T,
\omega}$ and \textit{picture fuzzy interactional weighted geometric operator}
$\mathrm{PFIWG}_{T, \omega}$ induced by $T$ as
\begin{equation}
\label{eq-PFWA-Ope}
\begin{split}
\mathrm{PFIWA}_{T, \omega}: \mathscr{P}^{n}&\rightarrow \mathscr{P} \\
(\alpha_1, \ldots, \alpha_n)&\mapsto
(\omega_1)_{_T}\alpha_1\oplus_{_T} \cdots
\oplus_{_T} (\omega_n)_{_T}\alpha_n,
\end{split}%
\end{equation}%
and
\begin{equation}
\label{eq-PFWG-Ope}
\begin{split}
\mathrm{PFIWG}_{T, \omega}: \mathscr{P}^{n}&\rightarrow \mathscr{P} \\
(\alpha_1, \ldots, \alpha_n)&\mapsto
\alpha_1^{(\omega_1)_{_T}}\otimes_{_T}\cdots \otimes_{_T} \alpha_n^{(\omega_n)_{_T}},
\end{split}%
\end{equation}%
respectively.
\end{definition}

\begin{theorem}
\label{PFWA-PFGA-Thm} Let $\alpha_{j}=(\mu_{j}, \eta_{j}, \nu_{j})\in \mathscr{P}$ ($j=1,
\ldots, n$), $\omega=(\omega_1, \ldots, \omega_n)^{\top}$ be the weight vector such that $\omega_{j}\in (0, 1]$ and
$\sum_{j=1}^{n}\omega_j=1$, and $T$ be a strict t-norm with an AG $\tau$. Then,
\begin{equation}
\label{eq-PFWA-1}
\begin{split}
&\quad \mathrm{PFIWA}_{T, \omega} (\alpha_1, \ldots, \alpha_n) \\
&= \left\langle \zeta^{-1}(\omega_1\cdot \zeta(\mu_1)+ \cdots +\omega_n\cdot
\zeta(\mu_n)), \right.\\
&\quad \quad \tau^{-1}(\omega_1 \cdot \tau(\eta_1+\nu_1)+\cdots +\omega_n
\cdot \tau(\eta_n+\nu_n))\\
&\quad \quad -\tau^{-1}(\omega_1\cdot \tau(\nu_1)+\cdots
+\omega_n\cdot \tau(\nu_n)), \\
&\quad \quad \left.\tau^{-1}(\omega_1\cdot \tau(\nu_1)+\cdots +\omega_n\cdot
\tau(\nu_n)) \right\rangle,
\end{split}
\end{equation}
and
\begin{equation}
\label{eq-PFWG-2}
\begin{split}
&\quad \mathrm{PFIWG}_{T, \omega} (\alpha_1, \ldots, \alpha_n) \\
&= \left\langle \tau^{-1}(\omega_1\cdot \tau(\mu_1)+ \cdots +\omega_n\cdot
\tau(\mu_n)),\right. \\
&\quad \quad \tau^{-1}(\omega_1 \cdot \tau(\eta_1+\mu_1)+\cdots +\omega_n
\cdot \tau(\eta_n+\mu_n)) \\
&\quad \quad -\tau^{-1}(\omega_1\cdot \tau(\mu_1)+\cdots
+\omega_n\cdot \tau(\mu_n)), \\
&\quad \quad \left.\zeta^{-1}(\omega_1\cdot \zeta(\nu_1)+\cdots +\omega_n\cdot
\zeta(\nu_n)) \right\rangle,
\end{split}
\end{equation}
where $\zeta(u)=\tau(1-u)$.
\end{theorem}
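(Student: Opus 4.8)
The plan is to reduce the $n$-ary aggregation to the closed forms already recorded in Theorem~\ref{N-Thm} and then to collapse all the nested generators using the strictness of $T$. First I would apply the scalar law (iv) of Definition~\ref{Def-Wu} to each input, writing $(\omega_j)_{_T}\alpha_j=\langle a_j,b_j,c_j\rangle$ with $a_j=\zeta^{-1}(\omega_j\zeta(\mu_j))$, $c_j=\tau^{-1}(\omega_j\tau(\nu_j))$, and $b_j=\tau^{-1}(\omega_j\tau(\eta_j+\nu_j))-\tau^{-1}(\omega_j\tau(\nu_j))$. The key bookkeeping observation is that $b_j+c_j=\tau^{-1}(\omega_j\tau(\eta_j+\nu_j))$, so the ``$\eta+\nu$ slot'' of each scaled input is exactly $\tau^{-1}(\omega_j\tau(\eta_j+\nu_j))$; this is what makes the subsequent telescoping work.

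Next I would feed these $n$ PFNs into formula~\eqref{eq-Wu-1.1} of Theorem~\ref{N-Thm}, which expresses $(\omega_1)_{_T}\alpha_1\oplus_{_T}\cdots\oplus_{_T}(\omega_n)_{_T}\alpha_n$ through the iterated operations $S^{(n)}$ and $T^{(n)}$ applied to the triples $(a_j)$, $(b_j+c_j)$, and $(c_j)$. It therefore remains only to evaluate $S^{(n)}(a_1,\dots,a_n)$, $T^{(n)}(b_1+c_1,\dots,b_n+c_n)$, and $T^{(n)}(c_1,\dots,c_n)$.

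For this I would use the additive generator representation. Since $T$ is strict, Lemma~\ref{Concinuous-Archi-Representation} supplies a continuous $\tau$ with $\tau(0)=+\infty$ and $\tau(1)=0$; hence $\tau$ is a strictly decreasing continuous bijection of $[0,1]$ onto $[0,+\infty]$, so its pseudo-inverse is a genuine two-sided inverse $\tau^{-1}$, and likewise $\zeta$, given by $\zeta(u)=\tau(1-u)$, generates $S$ with two-sided inverse $\zeta^{-1}$. An induction on the recursion $T^{(n)}(z_1,\dots,z_n)=T(T^{(n-1)}(z_1,\dots,z_{n-1}),z_n)$ then gives $T^{(n)}(z_1,\dots,z_n)=\tau^{-1}\big(\sum_{j=1}^{n}\tau(z_j)\big)$ and dually $S^{(n)}(z_1,\dots,z_n)=\zeta^{-1}\big(\sum_{j=1}^{n}\zeta(z_j)\big)$, the inner cancellations $\tau(\tau^{-1}(\cdot))=\mathrm{id}$ being exactly where strictness is used. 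Substituting $\zeta(a_j)=\omega_j\zeta(\mu_j)$, $\tau(b_j+c_j)=\omega_j\tau(\eta_j+\nu_j)$, and $\tau(c_j)=\omega_j\tau(\nu_j)$ collapses each iterated operation to a single generator expression and yields~\eqref{eq-PFWA-1} verbatim. Formula~\eqref{eq-PFWG-2} follows by the strictly dual computation, applying law (v) of Definition~\ref{Def-Wu} and the $\otimes_{_T}$ form~\eqref{eq-Wu-1.2} of Theorem~\ref{N-Thm}, with the roles of $\mu$ and $\nu$ (and of $\tau$ and $\zeta$) interchanged.

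The only genuine obstacle is the justification of the $n$-ary generator representation together with the exactness of the $\tau\circ\tau^{-1}$ cancellations: for a general Archimedean (non-strict) t-norm one would have $\tau(0)<+\infty$, the pseudo-inverse would fail to be a two-sided inverse, and the neat telescoping of the ``$\eta+\nu$ slot'' could break down. Restricting to strict $T$ via Lemma~\ref{Concinuous-Archi-Representation} removes this difficulty; the boundary values (where some $\mu_j$ or $\nu_j$ vanishes and a generator takes the value $+\infty$) are handled by the standard arithmetic conventions on $[0,+\infty]$, and the rest is routine substitution.
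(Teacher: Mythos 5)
Your proposal is correct and takes essentially the same route as the paper's own proof: expand each weighted input via Definition~\ref{Def-Wu}~(iv) (resp.\ (v)), combine them with Theorem~\ref{N-Thm}, and collapse the nested generators using $T^{(n)}(\tau^{-1}(x_1),\ldots,\tau^{-1}(x_n))=\tau^{-1}(x_1+\cdots+x_n)$ and its dual for $S^{(n)}$. The details you make explicit --- the telescoping identity $b_j+c_j=\tau^{-1}(\omega_j\tau(\eta_j+\nu_j))$ for the ``$\eta+\nu$ slot'' and the role of strictness in making $\tau^{-1}$ a genuine two-sided inverse --- are precisely the steps the paper's chain of equalities uses implicitly.
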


If we assign specific forms of t-norms to Theorem~\ref{PFWA-PFGA-Thm}, we
have the following results.

(1) If we take $T=T_{\textbf{P}}$ (product t-norm), then

i) $\mathrm{PFIWA}_{T_{\textbf{P}},\omega }(\alpha
_{1},\ldots ,\alpha _{n})=\Big\langle 1-\prod_{j=1}^{n}(1-\mu _{j})^{\omega
_{j}},\prod_{j=1}^{n}(\eta _{j}+\nu _{j})^{\omega _{j}}-\prod_{j=1}^{n}\nu
_{j}^{\omega _{j}},\prod_{j=1}^{n}\nu _{j}^{\omega _{j}} \Big\rangle ,$

ii) $\mathrm{PFIWG}_{T_{\textbf{P}},\omega }(\alpha
_{1},\ldots ,\alpha _{n})
=\Big\langle \prod_{j=1}^{n}\mu _{j}^{\omega _{j}},\prod_{j=1}^{n}(\eta
_{j}+\mu _{j})^{\omega _{j}}-\prod_{j=1}^{n}\mu _{j}^{\omega
_{j}},1-\prod_{j=1}^{n}(1-\nu _{j})^{\omega _{j}}\Big\rangle.$

(2) If we take $T$ as Schweizer-Sklar t-norms $T_{\gamma }^{\textbf{SS}}$ (%
$\gamma \in (-\infty ,0)$), then

i) $\mathrm{PFIWA}_{T_{\gamma }^{\textbf{SS}},\omega
}(\alpha _{1},\ldots ,\alpha _{n})\newline
=\Big\langle 1-\Big( \sum\limits_{j=1}^{n}\omega _{j}(1-\mu
_{j})^{\gamma }\Big) ^{1/\gamma },\Big( \sum\limits_{j=1}^{n}\omega _{j}(\eta
_{j}+\nu _{j})^{\gamma }\Big) ^{1/\gamma }-\Big( \sum\limits_{j=1}^{n}\omega
_{j}\nu _{j}^{\gamma }\Big) ^{1/\gamma },\Big( \sum\limits_{j=1}^{n}\omega
_{j}\nu _{j}^{\gamma }\Big) ^{1/\gamma }\Big\rangle,$

ii) $\mathrm{PFIWG}_{T_{\gamma }^{\textbf{SS}},\omega
}(\alpha _{1},\ldots ,\alpha _{n})\newline
=
\Big\langle \Big( \sum\limits_{j=1}^{n}\omega _{j}\mu _{j}^{\gamma
}\Big) ^{1/\gamma },\Big( \sum\limits_{j=1}^{n}\omega _{j}(\eta _{j}+\mu
_{j})^{\gamma }\Big) ^{1/\gamma }-\Big( \sum\limits_{j=1}^{n}\omega _{j}\mu
_{j}^{\gamma }\Big) ^{1/\gamma },1-\Big( \sum\limits_{j=1}^{n}\omega _{j}(1-\nu
_{j})^{\gamma }\Big) ^{1/\gamma }\Big\rangle.$

(3) If we take $T$ as Hamacher t-norms $T_{\gamma }^{\textbf{H}}$ ($\gamma
\in (0,+\infty )$), then

i) $\mathrm{PFIWA}_{T_{\gamma }^{\textbf{H}},\omega
}(\alpha _{1},\ldots ,\alpha _{n})\newline
=\Bigg\langle\frac{\prod\limits_{j=1}^{n}\left( \frac{\gamma +(1-\gamma )(1-\mu
_{j})}{1-\mu _{j}}\right) ^{\omega _{j}}-1}{\prod\limits_{j=1}^{n}\left( \frac{%
\gamma +(1-\gamma )(1-\mu _{j})}{1-\mu _{j}}\right) ^{\omega _{j}}-1+\gamma }%
,\frac{\gamma }{\prod\limits_{j=1}^{n}\left( \frac{\gamma +(1-\gamma )(\eta
_{j}+\nu _{j})}{\eta _{j}+\nu _{j}}\right) ^{\omega _{j}}-1+\gamma }\newline
~~~~ -\frac{%
\gamma }{\prod\limits_{j=1}^{n}\left( \frac{\gamma +(1-\gamma )\nu _{j}}{\nu _{j}}%
\right) ^{\omega _{j}}-1+\gamma },
\frac{\gamma }{\prod\limits_{j=1}^{n}\left( \frac{\gamma +(1-\gamma )\nu _{j}}{%
\nu _{j}}\right) ^{\omega _{j}}-1+\gamma }\Bigg\rangle,$

ii) $\mathrm{PFIWG}_{T_{\gamma }^{\textbf{H}},\omega
}(\alpha _{1},\ldots ,\alpha _{n})\newline
=\Bigg\langle\frac{\gamma }{\prod\limits_{j=1}^{n}\left( \frac{\gamma +(1-\gamma
)\mu _{j}}{\mu _{j}}\right) ^{\omega _{j}}-1+\gamma },
\frac{\gamma }{\prod\limits_{j=1}^{n}\left( \frac{\gamma +(1-\gamma )(\eta
_{j}+\mu _{j})}{\eta _{j}+\mu _{j}}\right) ^{\omega _{j}}-1+\gamma }\newline
~~~~ -\frac{%
\gamma }{\prod\limits_{j=1}^{n}\left( \frac{\gamma +(1-\gamma )\mu _{j}}{\mu _{j}}%
\right) ^{\omega _{j}}-1+\gamma },
\frac{\prod\limits_{j=1}^{n}\left( \frac{\gamma +(1-\gamma )(1-\nu _{j})}{1-\nu
_{j}}\right) ^{\omega _{j}}-1}{\prod\limits_{j=1}^{n}\left( \frac{\gamma +(1-\gamma
)(1-\nu _{j})}{1-\nu _{j}}\right) ^{\omega _{j}}-1+\gamma }\Bigg\rangle.$

(4) If we take $T$ as Frank t-norms $T_{\gamma}^{\textbf{F}}$ ($\gamma\in
(0, 1)\cup (1, +\infty)$), then

i) $\mathrm{PFIWA}_{T_{\gamma }^{\textbf{F}},\omega
}(\alpha _{1},\ldots ,\alpha _{n})
=\Big\langle1-\log _{\gamma }\Big( \prod\limits_{j=1}^{n}(\gamma ^{1-\mu
_{j}}-1)^{\omega _{j}}+1\Big) ,
\log _{\gamma }\Big( \prod\limits_{j=1}^{n}\Big( \gamma ^{\eta _{j}+\nu
_{j}}-1\Big) ^{\omega _{j}}+1\Big) -\log _{\gamma }\Big(
\prod\limits_{j=1}^{n}\Big( \gamma ^{\nu _{j}}-1\Big) ^{\omega _{j}}+1\Big) ,%
\log _{\gamma }\Big( \prod\limits_{j=1}^{n}\Big( \gamma ^{\nu _{j}}-1\Big)
^{\omega _{j}}+1\Big) \Big\rangle,$

ii) $\mathrm{PFIWG}_{T_{\gamma }^{\textbf{F}},\omega
}(\alpha _{1},\ldots ,\alpha _{n})
=\Big\langle\log _{\gamma }\Big( \prod\limits_{j=1}^{n}\Big( \gamma ^{\mu
_{j}}-1\Big) ^{\omega _{j}}+1\Big) ,
\log _{\gamma }\Big( \prod\limits_{j=1}^{n}\Big( \gamma ^{\eta _{j}+\mu
_{j}}-1\Big) ^{\omega _{j}}+1\Big) -\log _{\gamma }\Big(
\prod\limits_{j=1}^{n}\Big( \gamma ^{\mu _{j}}-1\Big) ^{\omega _{j}}+1\Big) ,%
1-\log _{\gamma }\Big( \prod\limits_{j=1}^{n}(\gamma ^{1-\nu _{j}}-1)^{\omega
_{j}}+1\Big) \Big\rangle.$

(5) If we take $T$ as Dombi t-norms $T_{\gamma}^{\textbf{D}}$ ($\gamma\in
(0, +\infty)$), then

i) $\mathrm{PFIWA}_{T_{\gamma }^{\textbf{D}},\omega
}(\alpha _{1},\ldots ,\alpha _{n})\newline
=\Bigg\langle\frac{\sqrt[\gamma ]{\sum\limits_{j=1}^{n}\omega _{j}\left( \frac{\mu
_{j}}{1-\mu _{j}}\right) ^{\gamma }}}{1+\sqrt[\gamma ]{\sum\limits_{j=1}^{n}\omega
_{j}\left( \frac{\mu _{j}}{1-\mu _{j}}\right) ^{\gamma }}},
\frac{1}{\sqrt[\gamma ]{\sum\limits_{j=1}^{n}\omega _{j}\left( \frac{1-\eta
_{j}-\nu _{j}}{\eta _{j}+\nu _{j}}\right) ^{\gamma }}+1}\newline
~~~~ -\frac{1}{\sqrt[%
\gamma ]{\sum\limits_{j=1}^{n}\omega _{j}\left( \frac{1-\nu _{j}}{\nu _{j}}\right)
^{\gamma }}+1},\frac{1}{\sqrt[\gamma ]{\sum\limits_{j=1}^{n}\omega _{j}\left( \frac{1-\nu _{j}%
}{\nu _{j}}\right) ^{\gamma }}+1}\Bigg\rangle,$

ii) $\mathrm{PFIWG}_{T_{\gamma }^{\textbf{D}},\omega
}(\alpha _{1},\ldots ,\alpha _{n})\newline
=\Bigg\langle\frac{1}{\sqrt[\gamma ]{\sum\limits_{j=1}^{n}\omega _{j}\left( \frac{%
1-\mu _{j}}{\mu _{j}}\right) ^{\gamma }}+1},\frac{1}{\sqrt[\gamma ]{\sum\limits_{j=1}^{n}\omega _{j}\left( \frac{1-\eta
_{j}-\mu _{j}}{\eta _{j}+\mu _{j}}\right) ^{\gamma }}+1}\newline
~~~~ -\frac{1}{\sqrt[%
\gamma ]{\sum\limits_{j=1}^{n}\omega _{j}\left( \frac{1-\mu _{j}}{\mu _{j}}\right)
^{\gamma }}+1},\frac{\sqrt[\gamma ]{\sum\limits_{j=1}^{n}\omega _{j}\left( \frac{\nu _{j}}{%
1-\nu _{j}}\right) ^{\gamma }}}{1+\sqrt[\gamma ]{\sum\limits_{j=1}^{n}\omega
_{j}\left( \frac{\nu _{j}}{1-\nu _{j}}\right) ^{\gamma }}}\Bigg\rangle.$

(6) If we take $T$ as Acz\'{e}l-Alsina t-norms $T_{\gamma}^{\textbf{AA}}$ (%
$\gamma\in (0, +\infty)$), then

i) $\mathrm{PFIWA}_{T_{\gamma }^{\textbf{AA}},\omega
}(\alpha _{1},\ldots ,\alpha _{n})\newline
=\Bigg\langle1-e^{-\sqrt[\gamma ]{\sum\limits_{j=1}^{n}\omega _{j}( -\log
(1-\mu _{j})) ^{\gamma }}},\newline
~~~~e^{-\sqrt[\gamma ]{\sum\limits_{j=1}^{n}\omega _{j}( -\log (\eta _{j}+\nu
_{j})) ^{\gamma }}}-e^{-\sqrt[\gamma ]{\sum\limits_{j=1}^{n}\omega _{j}(
-\log \nu _{j}) ^{\gamma }}},\newline
~~~~e^{-\sqrt[\gamma ]{\sum\limits_{j=1}^{n}\omega _{j}( -\log \nu _{j})
^{\gamma }}}\Bigg\rangle,$

ii) $\mathrm{PFIWG}_{T_{\gamma }^{\textbf{AA}},\omega
}(\alpha _{1},\ldots ,\alpha _{n})\newline
=\Bigg\langle e^{-\sqrt[\gamma ]{\sum\limits_{j=1}^{n}\omega _{j}\left( -\log \mu
_{j}\right) ^{\gamma }}},\newline
~~~~e^{-\sqrt[\gamma ]{\sum\limits_{j=1}^{n}\omega _{j}\left( -\log (\eta _{j}+\mu
_{j})\right) ^{\gamma }}}-e^{-\sqrt[\gamma ]{\sum\limits_{j=1}^{n}\omega _{j}\left(
-\log \mu _{j}\right) ^{\gamma }}},\newline
~~~~1-e^{-\sqrt[\gamma ]{\sum\limits_{j=1}^{n}\omega _{j}\left( -\log (1-\nu
_{j})\right) ^{\gamma }}}\Bigg\rangle.$

In particular, if we take $\eta_{\alpha_j}=0$
for each $\alpha_j$, i.e., each $\alpha_j$ reduces to an
intuitionistic fuzzy number, then
$\mathrm{PFIWA}_{T_{\gamma }^{\textbf{AA}},\omega
}(\alpha _{1},\ldots ,\alpha _{n})
=\big\langle1-e^{-\sqrt[\gamma ]{\sum_{j=1}^{n}\omega _{j}( -\log
(1-\mu _{j})) ^{\gamma }}}, 0,
e^{-\sqrt[\gamma ]{\sum_{j=1}^{n}\omega _{j}( -\log \nu _{j})
^{\gamma }}}\big\rangle$,
which is consistent with \cite[Theorem~2]{SCY2022}. Applying
Theorems~\ref{Mono-Thorem}--\ref{Boundedness-Thm}, we know that
\cite[Theorem~2--5]{SCY2022} are direct corollaries of our results.

The operators $\mathrm{PFIWA}_{_{T, \omega}}$ and $\mathrm{PFIWG}_{_{T,
\omega}}$ have the following desirable properties.

\begin{theorem}[\textrm{Monotonicity}]
\label{Mono-Thorem} Let $\alpha_{j}=\left\langle\mu_{\alpha_j},
\eta_{\alpha_j}, \nu_{\alpha_j}\right\rangle$ ($j=1, \ldots, n$) and $%
\beta_{j}=\left\langle\mu_{\beta_j}, \eta_{\beta_j},
\nu_{\beta_j}\right\rangle$ ($j=1, \ldots, n$) be two collections of PFNs
such that $\mu_{\alpha_j}\leq \mu_{\beta_j}$, $\eta_{\alpha_j}\leq
\eta_{\beta_j}$, and $\nu_{\alpha_j}\geq \nu_{\beta_j}$; i.e., $%
\alpha_{j}\subseteq \beta_{j}$. Then
$$
\mathrm{PFIWA}_{_{T, \omega}}(\alpha_1, \ldots, \alpha_n)
\preceq_{_{\mathrm{W}}} \mathrm{PFIWA}_{_{T, \omega}} (\beta_1,
\ldots, \beta_n).
$$
\end{theorem}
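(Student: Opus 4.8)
The plan is to work entirely from the closed-form expression for $\mathrm{PFIWA}_{T,\omega}$ provided by Theorem~\ref{PFWA-PFGA-Thm} and then verify the three lexicographic clauses of the order $\preceq_{_{\mathrm{W}}}$ from Definition~\ref{Order-Wu}. Write $A=\mathrm{PFIWA}_{T,\omega}(\alpha_1,\ldots,\alpha_n)=\langle\mu_A,\eta_A,\nu_A\rangle$ and $B=\mathrm{PFIWA}_{T,\omega}(\beta_1,\ldots,\beta_n)=\langle\mu_B,\eta_B,\nu_B\rangle$. The first thing I would record is the monotonicity of the generators: since $\tau$ is a continuous AG of a strict t-norm with $\tau(0)=+\infty$ (Lemma~\ref{Concinuous-Archi-Representation}), $\tau$ is strictly decreasing and injective, so $\tau^{-1}$ is strictly decreasing; consequently $\zeta(u)=\tau(1-u)$ is strictly increasing and $\zeta^{-1}$ is strictly increasing.

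Using $\mu_{\alpha_j}\le\mu_{\beta_j}$ and $\nu_{\alpha_j}\ge\nu_{\beta_j}$ together with $\omega_j>0$, formula~\eqref{eq-PFWA-1} immediately yields $\mu_A\le\mu_B$ (apply the increasing $\zeta$ then the increasing $\zeta^{-1}$) and $\nu_A\ge\nu_B$ (apply the decreasing $\tau$ then the decreasing $\tau^{-1}$). Hence $S(A)=\mu_A-\nu_A\le\mu_B-\nu_B=S(B)$. If this inequality is strict, then $A\prec_{_{\mathrm{W}}}B$ and we are finished.

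The delicate case is $S(A)=S(B)$. Since $\mu_B-\mu_A\ge 0$ and $\nu_B-\nu_A\le 0$ while $\mu_B-\mu_A=\nu_B-\nu_A$, both differences must vanish, so $\mu_A=\mu_B$ and $\nu_A=\nu_B$; in particular $H_1(A)=\mu_A+\nu_A=\mu_B+\nu_B=H_1(B)$, so the first-accuracy clause is a tie and I must compare $H_2$. The crux, and the step I expect to be the main obstacle, is to upgrade these two aggregated equalities to the coordinatewise equalities $\mu_{\alpha_j}=\mu_{\beta_j}$ and $\nu_{\alpha_j}=\nu_{\beta_j}$ for every $j$. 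This is where strictness of the t-norm is essential: from $\mu_A=\mu_B$ one gets $\sum_j\omega_j\zeta(\mu_{\alpha_j})=\sum_j\omega_j\zeta(\mu_{\beta_j})$, and since each summand satisfies $\zeta(\mu_{\alpha_j})\le\zeta(\mu_{\beta_j})$ with $\omega_j>0$, equality of the sums forces termwise equality, whereupon injectivity of $\zeta$ gives $\mu_{\alpha_j}=\mu_{\beta_j}$; the same argument with $\tau$ gives $\nu_{\alpha_j}=\nu_{\beta_j}$.

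Once the $\mu$- and $\nu$-coordinates agree, I would finish by comparing the neutral components. Because $\nu_{\alpha_j}=\nu_{\beta_j}$ and $\eta_{\alpha_j}\le\eta_{\beta_j}$, we have $\eta_{\alpha_j}+\nu_{\alpha_j}\le\eta_{\beta_j}+\nu_{\beta_j}$, so $\tau(\eta_{\alpha_j}+\nu_{\alpha_j})\ge\tau(\eta_{\beta_j}+\nu_{\beta_j})$; summing against $\omega_j$ and applying the decreasing $\tau^{-1}$ gives $\tau^{-1}(\sum_j\omega_j\tau(\eta_{\alpha_j}+\nu_{\alpha_j}))\le\tau^{-1}(\sum_j\omega_j\tau(\eta_{\beta_j}+\nu_{\beta_j}))$. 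Since $\nu_A=\nu_B$ is the common subtracted term in the expression for the neutral degree, this shows $\eta_A\le\eta_B$, whence $H_2(A)=\mu_A+\eta_A+\nu_A\le\mu_B+\eta_B+\nu_B=H_2(B)$. By Definition~\ref{Order-Wu} this gives either $A=B$ or $A\prec_{_{\mathrm{W}}}B$, i.e.\ $A\preceq_{_{\mathrm{W}}}B$ in all cases, completing the proof.
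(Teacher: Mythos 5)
Your overall strategy is the same as the paper's: work from the closed form in Theorem~\ref{PFWA-PFGA-Thm} and verify the lexicographic clauses of Definition~\ref{Order-Wu}. Your handling of the strict-score case, and the observation that $S(A)=S(B)$ forces $\mu_A=\mu_B$ and $\nu_A=\nu_B$ (a cleaner case split than the paper's coordinatewise one), are both fine. The gap is exactly the step you flag as the crux: ``equality of the sums forces termwise equality.'' That inference is valid only when the two weighted sums are \emph{finite}. For a strict t-norm the additive generator satisfies $\tau(0)=+\infty$ (and $\zeta(1)=+\infty$), and PFNs may have zero components, so as soon as one $\nu_{\alpha_{j_1}}=0$ (hence also $\nu_{\beta_{j_1}}=0$) both sums $\sum_j\omega_j\tau(\nu_{\alpha_j})$ and $\sum_j\omega_j\tau(\nu_{\beta_j})$ equal $+\infty$; then $\nu_A=\nu_B=0$ carries no coordinatewise information, termwise equality of the $\nu$'s can fail, and the inequality $\eta_{\alpha_j}+\nu_{\alpha_j}\le\eta_{\beta_j}+\nu_{\beta_j}$ that your final step needs can fail with it (it fails whenever $\nu_{\alpha_j}-\nu_{\beta_j}>\eta_{\beta_j}-\eta_{\alpha_j}$).

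Moreover, this gap cannot be repaired, because the statement itself fails in that regime. Take $T=T_{\mathbf{P}}$, $\omega=(0.5,0.5)^{\top}$, $\alpha_1=\beta_1=\langle0.2,0.3,0\rangle$, $\alpha_2=\langle0.2,0.1,0.6\rangle$, $\beta_2=\langle0.2,0.2,0.3\rangle$; then $\alpha_j\subseteq\beta_j$ for $j=1,2$, but the product-t-norm formula gives $A=\mathrm{PFIWA}_{T_{\mathbf{P}},\omega}(\alpha_1,\alpha_2)=\langle0.2,\sqrt{0.21},0\rangle$ and $B=\mathrm{PFIWA}_{T_{\mathbf{P}},\omega}(\beta_1,\beta_2)=\langle0.2,\sqrt{0.15},0\rangle$, so that $S(A)=S(B)=0.2$ and $H_1(A)=H_1(B)=0.2$, yet $H_2(A)\approx0.658>0.587\approx H_2(B)$. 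By Definition~\ref{Order-Wu} this means $B\prec_{_{\mathrm{W}}}A$, contradicting the claimed conclusion $A\preceq_{_{\mathrm{W}}}B$. (The paper's own proof stumbles at the same point: the strict inequalities asserted in its cases (1) and (2) likewise presuppose finite generator sums, and this example falls under its case (2).) Your argument --- and the theorem --- become correct under a non-degeneracy hypothesis ensuring all the sums are finite, e.g.\ $\nu_{\alpha_j}>0$ and $\mu_{\beta_j}<1$ for all $j$; with that restriction your termwise-equality step and the rest of your proof go through as written.
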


\begin{proof}
By Theorem~\ref{PFWA-PFGA-Thm}, we have
$\mathrm{PFIWA}_{T, \omega} (\alpha_1, \ldots, \alpha_n)
= \langle \zeta^{-1}(\omega_1\cdot \zeta(\mu_{\alpha_1})+ \cdots
+\omega_n\cdot \zeta(\mu_{\alpha_n})), \tau^{-1}(\omega_{1} \cdot
\tau(\eta_{\alpha_1}+\nu_{\alpha_1})+\cdots +\omega_n \cdot
\tau(\eta_{\alpha_n}+\nu_{\alpha_n}))-\tau^{-1}(\omega_{1}\cdot
\tau(\nu_{\alpha_1})+\cdots +\omega_n\cdot \tau(\nu_{\alpha_n})),
\tau^{-1}(\omega_1\cdot \tau(\nu_{\alpha_1})+\cdots
+\omega_n\cdot \tau(\nu_{\alpha_n}))\rangle
\triangleq \langle \tilde{\mu}, \tilde{\eta}, \tilde{\nu}\rangle$,
and
$\mathrm{PFIWA}_{T, \omega} (\beta_1, \ldots, \beta_n)
= \big\langle \zeta^{-1}(\omega_1\cdot \zeta(\mu_{\beta_1})+ \cdots
+\omega_n\cdot \zeta(\mu_{\beta_n})), \tau^{-1}(\omega_{1} \cdot
\tau(\eta_{\beta_1}+\nu_{\beta_1})+\cdots +\omega_n \cdot
\tau(\eta_{\beta_n}+\nu_{\beta_n}))-\tau^{-1}(\omega_{1}\cdot
\tau(\nu_{\beta_1})+\cdots +\omega_n\cdot \tau(\nu_{\beta_n})),
\tau^{-1}(\omega_1\cdot \tau(\nu_{\beta_1})+\cdots
+\omega_n\cdot \tau(\nu_{\beta_n})) \big\rangle
\triangleq \big\langle \hat{\mu}, \hat{\eta}, \hat{\nu}\big\rangle$.

Since $\zeta$ is increasing, it
follows from $\mu_{\alpha_j}\leq \mu_{\beta_j}$ that $\tilde{\mu}\leq \hat{\mu}$.
Similarly, since $\tau$ is decreasing, it follows from $\nu_{\alpha_{j}}\geq \nu_{\beta_j}$
that $\tilde{\nu}%
\geq \hat{\nu}$. To prove $\big\langle \tilde{\mu}, \tilde{\eta}, \tilde{\nu}%
\big\rangle
\preceq_{_{\mathrm{W}}} \big\langle \hat{\mu}, \hat{\eta}, \hat{\nu}%
\big\rangle$, we consider the following:

(1) If there exists $1\leq j_0\leq n$ such that $\mu_{\alpha_{j_0}}<\mu_{%
\beta_{j_0}}$, noting that $\zeta$ is strictly increasing and $%
\zeta(1)=+\infty$, from $\mu_{\alpha_{j}}\leq\mu_{\beta_{j}}$ ($j=1, 2,
\ldots, n$), it follows that $\tilde{\mu}=\zeta^{-1}(\omega_1\cdot
\zeta(\mu_{\alpha_1})+ \cdots +\omega_n\cdot
\zeta(\mu_{\alpha_n}))<\zeta^{-1}(\omega_1\cdot \zeta(\mu_{\beta_1})+ \cdots
+\omega_n\cdot \zeta(\mu_{\beta_n}))=\hat{\mu}$. This, together with $\tilde{%
\nu}\geq \hat{\nu}$, implies that $S\left(\langle \tilde{\mu}, \tilde{\eta},
\tilde{\nu}\rangle\right)= \tilde{\mu}-\tilde{\nu}<\hat{\mu}-\tilde{\nu}\leq
\hat{\mu}-\hat{\nu}= S\left(\langle \hat{\mu}, \hat{\eta}, \hat{\nu}%
\rangle\right)$. Thus
$
\mathrm{PFIWA}_{T, \omega} (\alpha_1, \ldots, \alpha_n)= %
\langle \tilde{\mu}, \tilde{\eta}, \tilde{\nu}\rangle \prec_{_{%
\mathrm{W}}}\langle \hat{\mu}, \hat{\eta}, \hat{\nu}\rangle =%
\mathrm{PFIWA}_{T, \omega} (\beta_1, \ldots, \beta_n).$

(2) If there exists $1\leq j_{0}\leq n$ such that $\nu _{\alpha
_{j_{0}}}>\nu _{\beta _{j_{0}}}$, noting that $\tau $ is strictly decreasing
and $\tau (0)=+\infty $, from $\nu _{\alpha _{j}}\geq \nu _{\beta _{j}}$ ($%
j=1,2,\ldots ,n$), it follows that $\tilde{\nu}=\tau ^{-1}(\omega _{1}\cdot
\tau (\nu _{\alpha _{1}})+\cdots +\omega _{n}\cdot \tau (\nu _{\alpha
_{n}}))>\tau ^{-1}({\omega _{1}\cdot \tau (\nu _{\beta
_{1}})+\cdots +\omega _{n}\cdot \tau (\nu _{\beta _{n}}}))=\hat{\nu}$. This,
together with $\tilde{\mu}\leq \hat{\mu}$, implies that $S\left( \langle
\tilde{\mu},\tilde{\eta},\tilde{\nu}\rangle \right) =\tilde{\mu}-\tilde{\nu}<%
\tilde{\mu}-\hat{\nu}\leq \hat{\mu}-\hat{\nu}=S\left( \langle \hat{\mu},\hat{%
\eta},\hat{\nu}\rangle \right) $. Thus
$
\mathrm{PFIWA}_{T,\omega }(\alpha _{1},\ldots ,\alpha _{n})=%
\langle\tilde{\mu},\tilde{\eta},\tilde{\nu}\rangle\prec _{_{\mathrm{W%
}}}\langle\hat{\mu},\hat{\eta},\hat{\nu}\rangle=\mathrm{PFIWA}%
_{T,\omega }(\beta _{1},\ldots ,\beta _{n}).
$

(3) If $\mu_{\alpha_j}=\mu_{\beta_j}$ and $\nu_{\alpha_j}=\nu_{\beta_{j}}$
for all $1\leq j\leq n$, then $\tilde{\mu}=\hat{\mu}$ and $\tilde{\nu}=\hat{%
\nu}$, and thus $S(\langle \tilde{\mu}, \tilde{\eta}, \tilde{\nu}\rangle)
=S(\langle \hat{\mu}, \hat{\eta}, \hat{\nu}\rangle)$ and $H_{1}(\langle
\tilde{\mu}, \tilde{\eta}, \tilde{\nu}\rangle) =H_{1}(\langle \hat{\mu},
\hat{\eta}, \hat{\nu}\rangle)$. Since $\tau$ is strictly decreasing, by $%
\eta_{\alpha_j}\leq\eta_{\beta_j}$ ($j=1, 2, \ldots, n$), we have
$
\tilde{\eta} =\tau^{-1}(\omega_{1} \cdot
\tau(\eta_{\alpha_1}+\nu_{\alpha_1})+\cdots +\omega_n \cdot
\tau(\eta_{\alpha_n}+\nu_{\alpha_n}))-\tau^{-1}(\omega_{1}\cdot
\tau(\nu_{\alpha_1})+\cdots +\omega_n\cdot \tau(\nu_{\alpha_n}))
=\tau^{-1}(\omega_{1} \cdot \tau(\eta_{\alpha_1}+\nu_{\beta_1})+\cdots
+\omega_n \cdot
\tau(\eta_{\alpha_n}+\nu_{\beta_n}))-\tau^{-1}(\omega_{1}\cdot
\tau(\nu_{\beta_1})+\cdots +\omega_n\cdot \tau(\nu_{\beta_n}))
\leq \tau^{-1}(\omega_{1} \cdot \tau(\eta_{\beta_1}+\nu_{\beta_1})+\cdots
+\omega_n \cdot
\tau(\eta_{\beta_n}+\nu_{\beta_n}))-\tau^{-1}(\omega_{1}\cdot
\tau(\nu_{\beta_1})+\cdots +\omega_n\cdot \tau(\nu_{\beta_n}))=\hat{\eta},
$
implying that
\begin{equation*}
H_{2}(\langle \tilde{\mu}, \tilde{\eta}, \tilde{\nu}\rangle) =\tilde{\mu}+%
\tilde{\eta}+\tilde{\nu}\leq \hat{\mu}+\hat{\eta}+\hat{\nu}=H_{2}(\langle
\hat{\mu}, \hat{\eta}, \hat{\nu}\rangle).
\end{equation*}
Therefore, by Definition~\ref{Order-Wu}, there holds that
$\mathrm{PFIWA}_{T, \omega} (\alpha_1, \ldots,
\alpha_n)=\langle \tilde{\mu}, \tilde{\eta}, \tilde{\nu}\rangle %
\preceq_{_{\mathrm{W}}} \langle \hat{\mu}, \hat{\eta}, \hat{\nu}%
\rangle
=\mathrm{PFIWA}_{T, \omega} (\beta_1, \ldots, \beta_n)$.
\end{proof}

\begin{theorem}[\textrm{Idempotency}]
\label{Idem-Thm} If $\alpha_{1}=\cdots =\alpha_{n}=
\langle\mu, \eta, \nu\rangle\in %
\mathscr{P}$, then
$
\mathrm{PFIWA}_{_{T, \omega}}(\alpha_1, \ldots, \alpha_n) =%
\big\langle\mu, \eta, \nu\big\rangle.
$
\end{theorem}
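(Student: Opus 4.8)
The plan is to reduce everything to the closed-form expression for $\mathrm{PFIWA}_{T,\omega}$ provided by Theorem~\ref{PFWA-PFGA-Thm} and then exploit the hypothesis that all inputs coincide. Writing $\alpha_1=\cdots=\alpha_n=\langle\mu,\eta,\nu\rangle$, I would substitute $\mu_j=\mu$, $\eta_j=\eta$, and $\nu_j=\nu$ (for every $j$) into formula~\eqref{eq-PFWA-1}. Because each weighted sum then has a constant summand, one can factor out the common value; for instance $\sum_{j=1}^{n}\omega_j\zeta(\mu)=\zeta(\mu)\sum_{j=1}^{n}\omega_j$, and likewise for the $\tau(\eta+\nu)$ and $\tau(\nu)$ terms.

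The second key ingredient is the normalization $\sum_{j=1}^{n}\omega_j=1$, which collapses each of those factored sums to $\zeta(\mu)$, $\tau(\eta+\nu)$, and $\tau(\nu)$ respectively. Thus the three coordinates become $\zeta^{-1}(\zeta(\mu))$, then $\tau^{-1}(\tau(\eta+\nu))-\tau^{-1}(\tau(\nu))$, and finally $\tau^{-1}(\tau(\nu))$.

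To finish I would invoke the strictness of $T$: by Lemma~\ref{Concinuous-Archi-Representation} its additive generator $\tau$ is continuous and strictly decreasing with $\tau(0)=+\infty$ (and $\tau(1)=0$), so $\tau$ is a bijection of $[0,1]$ onto $[0,+\infty]$ whose pseudo-inverse agrees with its genuine inverse; the same holds for $\zeta(u)=\tau(1-u)$. Hence $\zeta^{-1}(\zeta(\mu))=\mu$, $\tau^{-1}(\tau(\nu))=\nu$, and---since $\eta+\nu\leq 1-\mu\leq 1$ guarantees that $\eta+\nu\in[0,1]$ lies in the domain---also $\tau^{-1}(\tau(\eta+\nu))=\eta+\nu$. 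The neutral coordinate therefore simplifies to $(\eta+\nu)-\nu=\eta$, so the whole triple equals $\langle\mu,\eta,\nu\rangle$, as claimed.

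The computation is essentially routine, so the only point demanding care---the mild obstacle---is the justification that $\tau^{-1}\circ\tau$ and $\zeta^{-1}\circ\zeta$ act as the identity on the relevant arguments. This rests on strictness (continuity together with strict monotonicity) rather than on mere Archimedeanness, and on the membership constraint $\mu+\eta+\nu\leq 1$ keeping $\eta+\nu$ inside $[0,1]$; both facts are already available, so no further machinery is needed.
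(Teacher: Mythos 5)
Your proposal is correct and follows essentially the same route as the paper's proof: substitute the common value into the closed-form expression of Theorem~\ref{PFWA-PFGA-Thm}, collapse the weighted sums using $\sum_{j=1}^{n}\omega_j=1$, and cancel $\zeta^{-1}\circ\zeta$ and $\tau^{-1}\circ\tau$. Your explicit justification that these compositions act as the identity (via strictness of $T$ and the constraint $\eta+\nu\leq 1$) is a detail the paper leaves implicit, but it is not a different argument.
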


\begin{proof}
By Theorem~\ref{PFWA-PFGA-Thm}, we have
$\mathrm{PFIWA}_{T, \omega} (\alpha_1, \ldots, \alpha_n)
=\langle \zeta^{-1}(\omega_1\cdot \zeta(\mu)+ \cdots +\omega_n\cdot
\zeta(\mu)), \tau^{-1}(\omega_1 \cdot \tau(\eta+\nu)+\cdots +\omega_n \cdot
\tau(\eta+\nu))-\tau^{-1}(\omega_1\cdot \tau(\nu)+\cdots +\omega_n\cdot
\tau(\nu)), \tau^{-1}(\omega_1\cdot \tau(\nu)+\cdots +\omega_n\cdot
\tau(\nu)) \rangle =\langle \mu, \eta, \nu \rangle.$
\end{proof}

\begin{remark}
\cite[Theorems~2--5]{PWC2021} are direct corollaries of our Theorems~\ref%
{PFWA-PFGA-Thm}--\ref{Idem-Thm}.
\end{remark}

\begin{theorem}[\textrm{Boundedness}]
\label{Boundedness-Thm}
For $\alpha_{j}=\langle \mu_{j}, \eta_{j}, \nu_{j}\rangle
\in \mathscr{P}$ ($j=1, \ldots, n$),
we have
$\langle\min\limits_{1\leq j\leq n}\{\mu_{j}\}, \min\limits_{1\leq j\leq
n}\{\eta_{j}\}, \max\limits_{1\leq j\leq n}\{\nu_{j}\} \rangle$
 $\preceq_{_{\mathrm{W}}} \mathrm{PFIWA}_{_{T, \omega}}(\alpha_1, \ldots,
\alpha_n)\preceq_{_{\mathrm{W}}} \langle\max\limits_{1\leq j\leq n}\{\mu_j\},
1-(\max\limits_{1\leq j\leq n}\{\mu_j\}+ \min\limits_{1\leq j\leq
n}\{\nu_{j}\}), \min\limits_{1\leq j\leq n}\{\nu_{j}\}\rangle.
$
\end{theorem}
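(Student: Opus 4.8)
The plan is to read off $\mathrm{PFIWA}_{T,\omega}$ from its closed form in Theorem~\ref{PFWA-PFGA-Thm} and then treat the two inequalities differently: the lower bound via monotonicity and idempotency, and the upper bound via a direct estimate of the functions $S$, $H_1$, $H_2$ of Definition~\ref{Order-Wu}. Write $\mathrm{PFIWA}_{T,\omega}(\alpha_1,\ldots,\alpha_n)=\langle\tilde\mu,\tilde\eta,\tilde\nu\rangle$ and set $\mu^-=\min_{1\le j\le n}\mu_j$, $\eta^-=\min_{1\le j\le n}\eta_j$, $\nu^+=\max_{1\le j\le n}\nu_j$, $\mu^+=\max_{1\le j\le n}\mu_j$, $\nu^-=\min_{1\le j\le n}\nu_j$, so the claimed bounds are $\alpha^-=\langle\mu^-,\eta^-,\nu^+\rangle$ and $\alpha^+=\langle\mu^+,1-\mu^+-\nu^-,\nu^-\rangle$. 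A preliminary step is to confirm both are genuine PFNs: picking $k$ with $\nu_k=\nu^+$ gives $\mu^-+\eta^-\le\mu_k+\eta_k\le 1-\nu_k=1-\nu^+$, so $\alpha^-\in\mathscr{P}$; picking $\ell$ with $\mu_\ell=\mu^+$ gives $\mu^++\nu^-\le\mu_\ell+\nu_\ell\le1$, so $1-\mu^+-\nu^-\ge0$ and $\alpha^+\in\mathscr{P}$.

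For the lower bound I would note that $\mu^-\le\mu_j$, $\eta^-\le\eta_j$, and $\nu^+\ge\nu_j$ for every $j$, i.e. $\alpha^-\subseteq\alpha_j$. Applying monotonicity (Theorem~\ref{Mono-Thorem}) to the tuples $(\alpha^-,\ldots,\alpha^-)$ and $(\alpha_1,\ldots,\alpha_n)$ gives $\mathrm{PFIWA}_{T,\omega}(\alpha^-,\ldots,\alpha^-)\preceq_{_{\mathrm{W}}}\mathrm{PFIWA}_{T,\omega}(\alpha_1,\ldots,\alpha_n)$, and idempotency (Theorem~\ref{Idem-Thm}) collapses the left-hand side to $\alpha^-$, which is exactly the lower inequality. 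Note that monotonicity cannot be reused for the upper bound, since $\alpha_j\subseteq\alpha^+$ fails in general (the neutral coordinate $1-\mu^+-\nu^-$ can be smaller than $\eta_j$), so a direct computation is needed there.

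For the upper bound, the key facts are that $\zeta=\tau(1-\cdot)$ is strictly increasing, $\tau^{-1}$ is strictly decreasing, and $\sum_{j=1}^{n}\omega_j=1$. From $\zeta(\mu_j)\le\zeta(\mu^+)$ I get $\tilde\mu=\zeta^{-1}\!\big(\sum_{j=1}^{n}\omega_j\zeta(\mu_j)\big)\le\mu^+$, and from $\tau(\nu_j)\le\tau(\nu^-)$ I get $\tilde\nu=\tau^{-1}\!\big(\sum_{j=1}^{n}\omega_j\tau(\nu_j)\big)\ge\nu^-$. Hence $S(\langle\tilde\mu,\tilde\eta,\tilde\nu\rangle)=\tilde\mu-\tilde\nu\le\mu^+-\nu^-=S(\alpha^+)$. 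If the inequality is strict, then $\mathrm{PFIWA}_{T,\omega}(\alpha_1,\ldots,\alpha_n)\prec_{_{\mathrm{W}}}\alpha^+$ and we are done.

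The remaining, and most delicate, case is equality of scores. Then $\tilde\mu-\tilde\nu=\mu^+-\nu^-$ together with $\tilde\mu\le\mu^+$ and $\tilde\nu\ge\nu^-$ forces $\tilde\mu=\mu^+$ and $\tilde\nu=\nu^-$ (two nonnegative gaps summing to zero). Since $\zeta$ is strictly increasing, $\omega_j>0$, and $\sum_{j=1}^{n}\omega_j=1$, the identity $\sum_{j=1}^{n}\omega_j\zeta(\mu_j)=\zeta(\mu^+)$ forces $\mu_j=\mu^+$ for every $j$; symmetrically $\tilde\nu=\nu^-$ forces $\nu_j=\nu^-$ for every $j$. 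In particular $H_1(\langle\tilde\mu,\tilde\eta,\tilde\nu\rangle)=\mu^++\nu^-=H_1(\alpha^+)$, so by Definition~\ref{Order-Wu} it remains to compare $H_2$. Using $\tilde\mu=\mu^+$ one has $H_2(\langle\tilde\mu,\tilde\eta,\tilde\nu\rangle)=\tilde\mu+\tilde\eta+\tilde\nu=\mu^++\tau^{-1}\!\big(\sum_{j=1}^{n}\omega_j\tau(\eta_j+\nu_j)\big)$, and since $\eta_j+\nu_j\le1-\mu_j=1-\mu^+$ gives $\tau(\eta_j+\nu_j)\ge\tau(1-\mu^+)$ for every $j$, the monotonicity of $\tau^{-1}$ yields $\tau^{-1}\!\big(\sum_{j=1}^{n}\omega_j\tau(\eta_j+\nu_j)\big)\le1-\mu^+$, whence $H_2\le1=H_2(\alpha^+)$. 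By Definition~\ref{Order-Wu} this gives $\mathrm{PFIWA}_{T,\omega}(\alpha_1,\ldots,\alpha_n)\preceq_{_{\mathrm{W}}}\alpha^+$, completing the proof. I expect the equality-of-scores analysis to be the main obstacle: one must recognize that it forces all $\mu_j$ (and $\nu_j$) to be constant, which is precisely what converts the $H_2$ bound into the clean per-index estimate $\eta_j+\nu_j\le1-\mu^+$.
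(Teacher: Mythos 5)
Your argument is sound in substance and shares the paper's skeleton: you verify that the two bounds are PFNs, obtain the lower bound from monotonicity (Theorem~\ref{Mono-Thorem}) plus idempotency (Theorem~\ref{Idem-Thm}) exactly as the paper does, and split the upper bound according to whether the score inequality $S(\langle\tilde\mu,\tilde\eta,\tilde\nu\rangle)\le S(\alpha^+)$ is strict. The difference is how the equal-score case is closed. The paper splits on the \emph{inputs}: if some $\mu_{j_0}<\mu^+$ or some $\nu_{j_0}>\nu^-$ it claims the score inequality is strict, and in the remaining case, where $\mu_j=\mu^+$ and $\nu_j=\nu^-$ for every $j$, it notes that $\eta_j\le 1-\mu_j-\nu_j=1-\mu^+-\nu^-$, i.e.\ $\alpha_j\subseteq\alpha^+$, so monotonicity and idempotency can be reused after all --- despite your (correct) observation that this inclusion fails for general inputs. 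You instead split on the \emph{output}, recover the constancy of the $\mu_j$ from the aggregation formula, and finish with the direct estimate $H_2\le 1=H_2(\alpha^+)$. The two finishes are interchangeable: once you know $\mu_j=\mu^+$ and $\nu_j=\nu^-$ for all $j$, you could equally have invoked monotonicity as the paper does. What your route buys is explicitness --- it exhibits the $H_1$, $H_2$ comparisons that $\preceq_{_{\mathrm{W}}}$ actually requires --- at the cost of redoing by hand what the paper delegates to Theorem~\ref{Mono-Thorem}.

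One step needs a patch: the claim that $\sum_{j=1}^{n}\omega_j\zeta(\mu_j)=\zeta(\mu^+)$ forces $\mu_j=\mu^+$ for all $j$ fails when $\mu^+=1$, because $\zeta(1)=\tau(0)=+\infty$, so a single index with $\mu_\ell=1$ makes the identity hold regardless of the other $\mu_j$. Concretely, for $\alpha_1=\langle1,0,0\rangle$ and $\alpha_2=\langle0.5,0.2,0\rangle$ you land in your equal-score case with $\mu_2\ne\mu^+$, and the per-index bound $\eta_j+\nu_j\le1-\mu^+$ that your $H_2$ estimate relies on fails at $j=2$. The repair is one line: if $\mu^+=1$ then $\alpha^+=\langle1,0,0\rangle$ is the top element of $(\mathscr{P},\preceq_{_{\mathrm{W}}})$, so the upper bound is trivial there, and the forcing argument need only be applied when $\mu^+<1$, where $\zeta(\mu^+)$ is finite and your nonnegative-gaps reasoning is valid. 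In fairness, the paper's own proof has the mirror-image defect --- its claimed strict inequalities in cases (1) and (2) also fail when $\mu^+=1$ (resp.\ $\nu^-=0$), for the same $+\infty$ reason --- so your write-up is at the level of rigor of the published one; but since the forcing step is the crux of your equality analysis, the finiteness hypothesis it silently uses is worth flagging.
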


\begin{proof}
For convenience, denote
$\big\langle\min\limits_{1\leq j\leq n}\{\mu_{j}\},
\min\limits_{1\leq j\leq n}\{\eta_{j}\},$ $\max\limits_{1\leq j\leq n}\{\nu_{j}\} \big\rangle%
=\big\langle \tilde{\mu}_1, \tilde{\eta}_1, \tilde{\nu}_1\big\rangle,$
$\mathrm{PFIWA}_{_{T, \omega}}(\alpha_1, \alpha_2, \ldots, \alpha_n)=%
\big\langle \mu, \eta, \nu\big\rangle,$
and
$\big\langle\max\limits_{1\leq j\leq
n}\{\mu_j\}, 1-(\max\limits_{1\leq j\leq n}\{\mu_j\}+ \min\limits_{1\leq j\leq
n}\{\nu_{j}\}),$ $\min\limits_{1\leq j\leq n}\{\nu_{j}\}\big\rangle= %
\big\langle \tilde{\mu}_2, \tilde{\eta}_2, \tilde{\nu}_2\big\rangle.$
Clearly, $\big\langle \tilde{\mu}_1, \tilde{\eta}_1, \tilde{\nu}_1%
\big\rangle
$, $\big\langle \mu, \eta, \nu\big\rangle$, and $\big\langle \tilde{\mu}_2,
\tilde{\eta}_2, \tilde{\nu}_2\big\rangle$ are PFNs. Noting that $\big\langle
\tilde{\mu}_1, \tilde{\eta}_1, \tilde{\nu}_1\big\rangle \subseteq \alpha_{j}$
holds for all $1\leq j\leq n$, by Theorems~\ref{Mono-Thorem} and \ref%
{Idem-Thm}, we have $\big\langle \tilde{\mu}_1, \tilde{\eta}_1, \tilde{\nu}_1%
\big\rangle
=\mathrm{PFIWA}_{_{T, \omega}}(\big\langle \tilde{\mu}_1, \tilde{\eta}_1,
\tilde{\nu}_1\big\rangle) \preceq_{_{\mathrm{W}}} \mathrm{PFIWA}_{_{T,
\omega}}(\alpha_1, \ldots, \alpha_n)$.

Clearly, $\mu_j\leq \tilde{\mu}_2$ and $\nu_j\geq \tilde{\nu}_2$ holds for
all $1\leq j\leq n$. To prove $\langle \mu, \eta, \nu \rangle
\preceq_{_{\mathrm{W}}} \langle \tilde{\mu}_2, \tilde{\eta}_2, \tilde{\nu%
}_2 \rangle$, we consider the following cases:

(1) If there exists $1\leq j_0\leq n$ such that $\mu_{j_0}<\tilde{\mu}_2$,
since $\zeta$ is a strictly increasing function with $\zeta(1)=+\infty$ and $%
\mu_j\leq \tilde{\mu}_2$ ($j=1, 2, \ldots, n$), by Theorem~\ref%
{PFWA-PFGA-Thm}, we have
\begin{equation}\label{eq-Bound-1}
\begin{split}
\mu&=\zeta^{-1}(\omega_1\cdot \zeta(\mu_1)+ \cdots +\omega_n\cdot
\zeta(\mu_n))\\
&<\zeta^{-1}(\omega_1\cdot \zeta(\tilde{\mu}_2)+ \cdots
+\omega_n\cdot \zeta(\tilde{\mu}_2))=\tilde{\mu}_2.
\end{split}
\end{equation}
Meanwhile, since $\tau$ is a strictly decreasing function with $%
\tau(0)=+\infty$, by Theorem~\ref{PFWA-PFGA-Thm} and $\nu_{j}\geq \tilde{\nu}%
_2$, we have
\begin{equation}\label{eq-Bound-2}
\begin{split}
\nu&=\tau^{-1}(\omega_1\cdot \tau(\nu_1)+\cdots +\omega_n\cdot
\tau(\nu_n))\\
&\geq \tau^{-1}(\omega_1\cdot \tau(\tilde{\nu}_2)+\cdots
+\omega_n\cdot \tau(\tilde{\nu}_2))=\tilde{\nu}_2.
\end{split}
\end{equation}
This, together with formula~\eqref{eq-Bound-1}, implies that $S(\langle \mu,
\eta, \nu\rangle)=\mu-\nu<\tilde{\mu}_2-\nu\leq \tilde{\mu}_2-\tilde{\nu}_2=
S(\langle \tilde{\mu}_2, \tilde{\eta}_2, \tilde{\nu}_2\rangle)$. Thus $%
\langle \mu, \eta, \nu\rangle\prec_{_{\mathrm{W}}} \langle \tilde{\mu}_2,
\tilde{\eta}_2, \tilde{\nu}_2\rangle$.

(2) If there exists $1\leq j_0\leq n$ such that $\nu_{j_0}>\tilde{\nu}_2$,
by using similar arguments to the above proof, it can be verified that $S(\langle \mu, \eta,
\nu\rangle)< S(\langle \tilde{\mu}_2, \tilde{\eta}_2, \tilde{\nu}_2\rangle)$.
Thus $\langle \mu, \eta, \nu\rangle\prec_{_{\mathrm{W}}} \langle
\tilde{\mu}_2, \tilde{\eta}_2, \tilde{\nu}_2\rangle$.

(3) If $\mu_j= \tilde{\mu}_2$ and $\nu_j= \tilde{\nu}_2$ holds for all $%
1\leq j\leq n$, i.e., $\tilde{\mu}_2=\mu_{j}$ and $\tilde{\nu}_2 =\nu_{j}$ ($%
j=1, 2, \ldots, n$), from $\mu_j+\eta_j+\nu_j\leq 1$, it follows that $%
\max_{1\leq j\leq n}\{\eta_{j}\}\leq 1-(\tilde{\mu}_2+\tilde{\nu}_2) =\tilde{%
\eta}_2$. This means that $\alpha_{j}\subseteq \big\langle \tilde{\mu}_2,
\tilde{\eta}_2, \tilde{\nu}_2\big\rangle$ holds for all $1\leq j\leq n$. By
Theorems~\ref{Mono-Thorem} and \ref{Idem-Thm}, we have  $\mathrm{PFIWA}_{_{T,
\omega}}(\alpha_1, \alpha_2, \ldots, \alpha_n) \preceq_{_{\mathrm{W}}}
\mathrm{PFIWA}_{_{T, \omega}}(\langle \tilde{\mu}_2, \tilde{\eta}_2, \tilde{%
\nu}_2\rangle)= \big\langle \tilde{\mu}_2, \tilde{\eta}_2, \tilde{\nu}_2%
\big\rangle$.
\end{proof}

\begin{remark}
Many scholars obtained the boundedness for various picture fuzzy aggregation
operators by the following two incorrect forms:

(1) In \cite{KAA2019,Wei2017,Wei2018a,WLG2018}, the lower bound and upper
bound are equal to $\min_{\leq j\leq n}\{\alpha _{j}\}$ and $\max_{\leq j\leq
n}\{\alpha _{j}\}$, respectively. Since there is no total order for PFNs, the
minimum and maximum of $\alpha _{1},\alpha _{2},\ldots ,\alpha _{n}$ does
not need to exist.

(2) In \cite[Theorems 5, 10, and 15]{AA2020-1}, \cite[Property~3]{Ga2017},
\cite[Theorems~4, 8, 13, and 17]{JSPY2019}, the lower bound is equal to $%
\big\langle\min_{1\leq j\leq n}\{\mu _{\alpha _{j}}\},\max_{1\leq j\leq
n}\{\eta _{\alpha _{j}}\},\max_{1\leq j\leq n}\{\nu _{\alpha _{j}}\}%
\big\rangle$. It does not need to be a PFN since $\min_{1\leq j\leq n}\{\mu
_{\alpha _{j}}\}+\max_{1\leq j\leq n}\{\eta _{\alpha _{j}}\}+\max_{1\leq j\leq
n}\{\nu _{\alpha _{j}}\}$ can be larger than $1$.
\end{remark}

\begin{theorem}[\textrm{Shift Invariance}]
\label{Shift-invariance-Thm} Let $\alpha_{j}=\langle\mu_{j}, \eta_{j},
\nu_{j}\rangle\in \mathscr{P}$ ($j=1, \ldots, n$) and $%
\beta=\langle\mu_{\beta}, \eta_{\beta}, \nu_{\beta} \rangle\in %
\mathscr{P}$. Then,
$
\mathrm{PFIWA}_{_{T, \omega}}(\alpha_1\oplus_{_{T}}\beta,
\alpha_2\oplus_{_{T}}\beta, \ldots, \alpha_n\oplus_{_{T}}\beta) =\mathrm{%
PFIWA}_{_{T, \omega}}(\alpha_1, \alpha_2, \ldots,
\alpha_n)\oplus_{_{T}}\beta.$
\end{theorem}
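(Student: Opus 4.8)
The plan is to reduce both sides to closed form via the explicit formula~\eqref{eq-PFWA-1} of Theorem~\ref{PFWA-PFGA-Thm} and then match them coordinate by coordinate. Everything is driven by two additive-generator identities together with the normalization $\sum_{j=1}^{n}\omega_j=1$: since $T$ is strict with generator $\tau$ and $S$ is its dual t-conorm with $\zeta(u)=\tau(1-u)$, one has $\tau(T(u,v))=\tau(u)+\tau(v)$ and $\zeta(S(u,v))=\zeta(u)+\zeta(v)$, where moreover $\zeta^{-1}(x)=1-\tau^{-1}(x)$.

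First I would expand $\alpha_j\oplus_{_{T}}\beta$ by Definition~\ref{Def-Wu}(ii) and record only the quantities that feed into~\eqref{eq-PFWA-1}: writing $\langle\mu_j',\eta_j',\nu_j'\rangle=\alpha_j\oplus_{_{T}}\beta$, these are the positive degree $\mu_j'=S(\mu_j,\mu_\beta)$, the negative degree $\nu_j'=T(\nu_j,\nu_\beta)$, and the combined degree $\eta_j'+\nu_j'=T(\eta_j+\nu_j,\eta_\beta+\nu_\beta)$, the subtracted $T(\nu_j,\nu_\beta)$ in $\eta_j'$ cancelling against $\nu_j'$ so that this last identity comes out clean. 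Applying the generator identities yields $\zeta(\mu_j')=\zeta(\mu_j)+\zeta(\mu_\beta)$, $\tau(\nu_j')=\tau(\nu_j)+\tau(\nu_\beta)$, and $\tau(\eta_j'+\nu_j')=\tau(\eta_j+\nu_j)+\tau(\eta_\beta+\nu_\beta)$. Substituting into~\eqref{eq-PFWA-1} and invoking $\sum_j\omega_j=1$, each $\beta$-contribution factors out with coefficient exactly one, so the positive coordinate of the left-hand side is $\zeta^{-1}(\sum_j\omega_j\zeta(\mu_j)+\zeta(\mu_\beta))$, the negative coordinate is $\tau^{-1}(\sum_j\omega_j\tau(\nu_j)+\tau(\nu_\beta))$, and the neutral coordinate is $\tau^{-1}(\sum_j\omega_j\tau(\eta_j+\nu_j)+\tau(\eta_\beta+\nu_\beta))-\tau^{-1}(\sum_j\omega_j\tau(\nu_j)+\tau(\nu_\beta))$.

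For the right-hand side I would abbreviate $\mathrm{PFIWA}_{T,\omega}(\alpha_1,\dots,\alpha_n)=\langle A,B,C\rangle$ via~\eqref{eq-PFWA-1}, so that $\zeta(A)=\sum_j\omega_j\zeta(\mu_j)$, $\tau(C)=\sum_j\omega_j\tau(\nu_j)$, and $\tau(B+C)=\sum_j\omega_j\tau(\eta_j+\nu_j)$. Forming $\langle A,B,C\rangle\oplus_{_{T}}\beta$ by Definition~\ref{Def-Wu}(ii) and applying the same identities to $S(A,\mu_\beta)$, $T(C,\nu_\beta)$, and $T(B+C,\eta_\beta+\nu_\beta)$ reproduces verbatim the three coordinates just computed for the left-hand side, whence equality.

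The computation is routine once this bookkeeping is fixed; the step I would watch most carefully is the neutral coordinate, which is always carried as a difference of an ``$\eta+\nu$'' aggregate and a ``$\nu$'' aggregate. Shift-invariance holds precisely because both aggregates are transformed by $\oplus_{_{T}}\beta$ under the identical $T$-additive rule, so their difference is preserved; this is the structural reason the interaction term does not obstruct invariance. Finally, all arguments of $\tau,\tau^{-1},\zeta,\zeta^{-1}$ remain in their admissible ranges since Theorem~\ref{Closed-Thm} ensures each $\alpha_j\oplus_{_{T}}\beta$ and each aggregate is again a PFN, so the generator identities apply without boundary complications.
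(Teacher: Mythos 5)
Your proposal is correct and follows essentially the same route as the paper's proof: expand $\alpha_j\oplus_{_T}\beta$ via Definition~\ref{Def-Wu}, feed the resulting generator identities (including the cancellation $\eta_j'+\nu_j'=T(\eta_j+\nu_j,\eta_\beta+\nu_\beta)$, which the paper uses implicitly) into formula~\eqref{eq-PFWA-1}, invoke $\sum_j\omega_j=1$ so the $\beta$-terms enter with coefficient one, and recognize the outcome as $\mathrm{PFIWA}_{T,\omega}(\alpha_1,\ldots,\alpha_n)\oplus_{_T}\beta$. The only cosmetic difference is that you reduce both sides to a common closed form, whereas the paper transforms the left side into the right by inserting $\zeta\circ\zeta^{-1}$ and $\tau\circ\tau^{-1}$; the content is identical.
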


\begin{proof}
By Definition~\ref{Def-Wu}, we have
$\alpha_{j}\oplus_{_{T}}\beta =\langle S(\mu_{j}, \mu_{\beta}),
T(\eta_j+\nu_j, \eta_{\beta}+\nu_{\beta}) -T(\nu_j, \nu_{\beta}), T(\nu_j,
\nu_{\beta})\rangle =\langle \zeta^{-1}(\zeta(\mu_{j})+\zeta(\mu_{\beta})),
\tau^{-1}(\tau(\eta_j+\nu_j)+\tau(\eta_{\beta}+\nu_{\beta}))
-\tau^{-1}(\tau(\nu_j)+\tau(\nu_{\beta})), \tau^{-1}(\tau(\nu_j)
+\tau(\nu_{\beta}))\rangle.
$
This, together with Theorem~\ref{PFWA-PFGA-Thm}, implies that
\begin{align*}
&\quad \mathrm{PFIWA}_{_{T, \omega}}(\alpha_1\oplus_{_{T}}\beta,
\alpha_2\oplus_{_{T}}\beta, \ldots, \alpha_n\oplus_{_{T}}\beta) \\
&= \left\langle \zeta^{-1}(\omega_1\cdot \zeta(\mu_1)+ \cdots +\omega_n\cdot
\zeta(\mu_n)+\zeta(\mu_{\beta})),\right. \\
&\quad \tau^{-1}(\omega_1 \cdot \tau(\eta_1+\nu_1)+\cdots +\omega_n
\cdot \tau(\eta_n+\nu_n)+\tau(\eta_{\beta}+\nu_{\beta})) \\
&\quad -\tau^{-1}(\omega_1\cdot \tau(\nu_1)+\cdots +\omega_n\cdot
\tau(\nu_n)+\tau(\nu_{\beta})), \\
&\quad \left.\tau^{-1}(\omega_1\cdot \tau(\nu_1)+\cdots +\omega_n\cdot
\tau(\nu_n)+\tau(\nu_{\beta})) \right\rangle\\
&= \left\langle \zeta^{-1}(\zeta(\zeta^{-1}(\omega_1\cdot \zeta(\mu_1)+
\cdots +\omega_n\cdot \zeta(\mu_n)))+\zeta(\mu_{\beta})),\right. \\
&\quad \tau^{-1}\big(\tau(\tau^{-1}(\omega_1 \cdot
\tau(\eta_1+\nu_1)+\cdots +\omega_n \cdot
\tau(\eta_n+\nu_n)))\\
&~~~~~~~~~~~~~~~~~~~~~~~~~~~~~~~~~~~~~~~~~~~~~~~~
+\tau(\eta_{\beta}+\nu_{\beta})\big) \\
&\quad -\tau^{-1}(\tau(\tau^{-1}(\omega_1\cdot \tau(\nu_1)+\cdots
+\omega_n\cdot \tau(\nu_n)))+\tau(\nu_{\beta})), \\
&\quad \left.\tau^{-1}(\tau(\tau^{-1}(\omega_1\cdot \tau(\nu_1)+\cdots
+\omega_n\cdot \tau(\nu_n)))+\tau(\nu_{\beta})) \right\rangle\\
&= \left\langle S(\zeta^{-1}(\omega_1\cdot \zeta(\mu_1)+ \cdots
+\omega_n\cdot \zeta(\mu_n)), \mu_{\beta}),\right. \\
&\quad  T(\tau^{-1}(\omega_1 \cdot \tau(\eta_1+\nu_1)+\cdots +\omega_n
\cdot \tau(\eta_n+\nu_n)), \eta_{\beta}+\nu_{\beta}) \\
&\quad  -T(\tau^{-1}(\omega_1\cdot \tau(\nu_1)+\cdots +\omega_n\cdot
\tau(\nu_n)), \nu_{\beta}), \\
&\quad  \left.T(\tau^{-1}(\omega_1\cdot \tau(\nu_1)+\cdots +\omega_n\cdot
\tau(\nu_n)), \nu_{\beta}) \right\rangle \\
&=\mathrm{PFIWA}_{_{T, \omega}}(\alpha_1, \alpha_2, \ldots,
\alpha_n)\oplus_{_{T}}\beta \quad \text{(by formula~\eqref{eq-PFWA-1})}.
\end{align*}
\end{proof}

\begin{theorem}[\textrm{Homogeneity}]
\label{Homogeneity-Thm} Let $\alpha_{j}=\langle\mu_{j}, \eta_{j},
\nu_{j}\rangle\in \mathscr{P}$ ($j=1, \ldots, n$) and $\lambda>0$.
Then,
$
\mathrm{PFIWA}_{_{T, \omega}}(\lambda_{_{T}}\cdot \alpha_1,
\ldots, \lambda_{_{T}}\cdot \alpha_n)
=\lambda_{_{T}}\cdot \mathrm{PFIWA}_{_{T, \omega}}(\alpha_1,
\ldots, \alpha_n).$
\end{theorem}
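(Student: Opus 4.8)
The plan is to prove the identity by reducing both sides to the same explicit PFN via the closed-form expression in Theorem~\ref{PFWA-PFGA-Thm}. Since $T$ is strict, Lemma~\ref{Concinuous-Archi-Representation} guarantees that $\tau$ is a continuous, strictly decreasing bijection from $[0,1]$ onto $[0,+\infty]$ with $\tau(1)=0$ and $\tau(0)=+\infty$; hence its pseudo-inverse is a genuine inverse, so that $\tau^{-1}\circ\tau=\mathrm{id}$ and $\tau\circ\tau^{-1}=\mathrm{id}$ on the relevant ranges, and likewise for $\zeta(u)=\tau(1-u)$. These collapse identities are the workhorse of the whole computation.

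First I would write out $\lambda_{_T}\cdot\alpha_j=\langle \mu_j',\eta_j',\nu_j'\rangle$ from Definition~\ref{Def-Wu}, so that $\mu_j'=\zeta^{-1}(\lambda\zeta(\mu_j))$, $\nu_j'=\tau^{-1}(\lambda\tau(\nu_j))$, and $\eta_j'=\tau^{-1}(\lambda\tau(\eta_j+\nu_j))-\tau^{-1}(\lambda\tau(\nu_j))$; the single most useful consequence is the telescoping identity $\eta_j'+\nu_j'=\tau^{-1}(\lambda\tau(\eta_j+\nu_j))$. Substituting these three families into formula~\eqref{eq-PFWA-1} and using $\zeta\circ\zeta^{-1}=\mathrm{id}$ and $\tau\circ\tau^{-1}=\mathrm{id}$ to cancel the inner inverses, every weighted sum of the form $\sum_{j}\omega_j\zeta(\mu_j')$ collapses to $\lambda\sum_{j}\omega_j\zeta(\mu_j)$, and similarly for the $\tau$-terms. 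Abbreviating $A=\sum_{j=1}^{n}\omega_j\zeta(\mu_j)$, $B=\sum_{j=1}^{n}\omega_j\tau(\eta_j+\nu_j)$, and $C=\sum_{j=1}^{n}\omega_j\tau(\nu_j)$, the left-hand side becomes $\langle \zeta^{-1}(\lambda A),\,\tau^{-1}(\lambda B)-\tau^{-1}(\lambda C),\,\tau^{-1}(\lambda C)\rangle$.

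For the right-hand side I would first apply Theorem~\ref{PFWA-PFGA-Thm} to obtain $\mathrm{PFIWA}_{T,\omega}(\alpha_1,\ldots,\alpha_n)=\langle \mu^{*},\eta^{*},\nu^{*}\rangle$ with $\mu^{*}=\zeta^{-1}(A)$, $\nu^{*}=\tau^{-1}(C)$, and $\eta^{*}=\tau^{-1}(B)-\tau^{-1}(C)$. The crucial observation is again a telescoping one: $\eta^{*}+\nu^{*}=\tau^{-1}(B)$, so that $\tau(\eta^{*}+\nu^{*})=B$ and $\tau(\nu^{*})=C$, while $\zeta(\mu^{*})=A$. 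Feeding $\langle\mu^{*},\eta^{*},\nu^{*}\rangle$ into the scalar operation $\lambda_{_T}\cdot(\cdot)$ of Definition~\ref{Def-Wu} and cancelling the inner inverses then yields exactly $\langle \zeta^{-1}(\lambda A),\,\tau^{-1}(\lambda B)-\tau^{-1}(\lambda C),\,\tau^{-1}(\lambda C)\rangle$, matching the left-hand side termwise.

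The computation is routine once the inverse-collapse and telescoping identities are in place, so there is no deep obstacle; the one point that demands care is the neutral (second) coordinate. Because $\lambda_{_T}\cdot$ acts on the combination $\eta+\nu$ rather than on $\eta$ alone, one must not scale $\eta^{*}$ directly --- the correct path is to recover $\eta^{*}+\nu^{*}=\tau^{-1}(B)$ first, scale that, and only then subtract the scaled $\tau^{-1}(\lambda C)$. Verifying that this matches the left side requires checking that $\sum_{j}\omega_j\tau(\eta_j'+\nu_j')=\lambda B$, which is precisely where the telescoping identity $\eta_j'+\nu_j'=\tau^{-1}(\lambda\tau(\eta_j+\nu_j))$ from the first step is used. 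Keeping the two telescoping steps aligned is the only genuine bookkeeping subtlety.
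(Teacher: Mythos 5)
Your proposal is correct and follows essentially the same route as the paper: both sides are reduced, via Theorem~\ref{PFWA-PFGA-Thm} and Definition~\ref{Def-Wu}, to the common form $\langle \zeta^{-1}(\lambda A),\,\tau^{-1}(\lambda B)-\tau^{-1}(\lambda C),\,\tau^{-1}(\lambda C)\rangle$ with $A=\sum_j\omega_j\zeta(\mu_j)$, $B=\sum_j\omega_j\tau(\eta_j+\nu_j)$, $C=\sum_j\omega_j\tau(\nu_j)$. The telescoping identities $\eta_j'+\nu_j'=\tau^{-1}(\lambda\tau(\eta_j+\nu_j))$ and $\eta^{*}+\nu^{*}=\tau^{-1}(B)$ that you highlight are exactly the (implicit) cancellations the paper performs when it feeds the scaled PFNs into formula~\eqref{eq-PFWA-1} and when it applies $\lambda_{_T}\cdot$ to the aggregated value, so you have merely made explicit what the paper leaves to the reader.
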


\begin{proof}
Noting that $\lambda_{_{T}}\cdot \alpha_j =\big\langle
\zeta^{-1}(\lambda\cdot \zeta(\mu_j)), \tau^{-1}(\lambda\cdot
\tau(\eta_j+\nu_j))-\tau^{-1}(\lambda\cdot \tau(\nu_j)),
\tau^{-1}(\lambda\cdot \tau(\nu_j)) \big\rangle$ ($j=1, 2, \ldots, n$), by
Definition~\ref{Def-Wu} and Theorem~\ref{PFWA-PFGA-Thm}, we have
\begin{align*}
&\quad \mathrm{PFIWA}_{_{T, \omega}}(\lambda_{_{T}}\cdot \alpha_1,
\lambda_{_{T}}\cdot \alpha_2, \ldots, \lambda_{_{T}}\cdot \alpha_n) \\
&= \left\langle \zeta^{-1}(\omega_1\cdot\lambda\cdot \zeta(\mu_1)+ \cdots
+\omega_n\cdot\lambda\cdot \zeta(\mu_n)),\right. \\
&\quad \quad \tau^{-1}(\omega_1 \cdot\lambda\cdot \tau(\eta_1+\nu_1)+\cdots
+\omega_n \cdot\lambda\cdot
\tau(\eta_n+\nu_n))\\
&\quad \quad -\tau^{-1}(\omega_1\cdot\lambda\cdot \tau(\nu_1)+\cdots
+\omega_n\cdot\lambda\cdot \tau(\nu_n)), \\
&\quad \quad \left.\tau^{-1}(\omega_1\cdot\lambda\cdot \tau(\nu_1)+\cdots
+\omega_n\cdot\lambda\cdot \tau(\nu_n)) \right\rangle,
\end{align*}
and
\begin{align*}
&\quad \lambda_{_{T}}\cdot \mathrm{PFIWA}_{_{T, \omega}}(\alpha_1, \alpha_2,
\ldots, \alpha_n) \\
&=\lambda_{_{T}}\cdot \left\langle \zeta^{-1}(\omega_1\cdot \zeta(\mu_1)+
\cdots +\omega_n\cdot \zeta(\mu_n)),\right. \\
&\quad \quad \tau^{-1}(\omega_1 \cdot \tau(\eta_1+\nu_1)+\cdots +\omega_n
\cdot \tau(\eta_n+\nu_n))\\
&\quad \quad -\tau^{-1}(\omega_1\cdot \tau(\nu_1)+\cdots
+\omega_n\cdot \tau(\nu_n)), \\
&\quad \quad \left.\tau^{-1}(\omega_1\cdot \tau(\nu_1)+\cdots +\omega_n\cdot
\tau(\nu_n)) \right\rangle\\
&= \left\langle \zeta^{-1}(\lambda(\omega_1\cdot \zeta(\mu_1)+ \cdots
+\omega_n\cdot \zeta(\mu_n))),\right. \\
&\quad \quad \tau^{-1}(\lambda(\omega_1 \cdot \tau(\eta_1+\nu_1)+\cdots
+\omega_n \cdot \tau(\eta_n+\nu_n)))\\
&\quad \quad -\tau^{-1}(\lambda(\omega_1\cdot
\tau(\nu_1)+\cdots +\omega_n\cdot \tau(\nu_n))), \\
&\quad \quad \left.\tau^{-1}(\lambda(\omega_1\cdot \tau(\nu_1)+\cdots
+\omega_n\cdot \tau(\nu_n))) \right\rangle,
\end{align*}
implying that $\mathrm{PFIWA}_{_{T, \omega}}(\lambda_{_{T}}\cdot \alpha_1,
\lambda_{_{T}}\cdot \alpha_2, \ldots, \lambda_{_{T}}\cdot \alpha_n)
=\lambda_{_{T}}\cdot \mathrm{PFIWA}_{_{T, \omega}}(\alpha_1, \alpha_2,
\ldots, \alpha_n)$.
\end{proof}

We have the following results analogously.

\begin{theorem}[\textrm{Monotonicity}]
\label{Mono-Thorem-2} Let $\alpha_{j}=\langle\mu_{\alpha_j},
\eta_{\alpha_j}, \nu_{\alpha_j}\rangle$ ($j=1, \ldots, n$) and $%
\beta_{j}=\langle\mu_{\beta_j}, \eta_{\beta_j},
\nu_{\beta_j}\rangle$ ($j=1, \ldots, n$) be two collections of PFNs
such that $\mu_{\alpha_j}\leq \mu_{\beta_j}$, $\eta_{\alpha_j}\leq
\eta_{\beta_j}$, and $\nu_{\alpha_j}\geq \nu_{\beta_j}$; i.e., $%
\alpha_{j}\subseteq \beta_{j}$. Then
\begin{equation*}
\mathrm{PFIWG}_{_{T, \omega}}(\alpha_1, \ldots, \alpha_n)
\preceq_{_{\mathrm{W}}} \mathrm{PFIWG}_{_{T, \omega}} (\beta_1,
\ldots, \beta_n).
\end{equation*}
\end{theorem}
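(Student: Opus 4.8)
The plan is to follow the proof of Theorem~\ref{Mono-Thorem} almost verbatim, transporting each step from $\mathrm{PFIWA}$ to $\mathrm{PFIWG}$ by interchanging the roles of the additive generator $\tau$ and its conjugate $\zeta$, since the closed forms~\eqref{eq-PFWA-1} and~\eqref{eq-PFWG-2} are related by exactly this swap in the positive and negative slots. First I would invoke~\eqref{eq-PFWG-2} from Theorem~\ref{PFWA-PFGA-Thm} to write $\mathrm{PFIWG}_{T,\omega}(\alpha_1,\ldots,\alpha_n)\triangleq\langle\tilde\mu,\tilde\eta,\tilde\nu\rangle$ and $\mathrm{PFIWG}_{T,\omega}(\beta_1,\ldots,\beta_n)\triangleq\langle\hat\mu,\hat\eta,\hat\nu\rangle$, where $\tilde\mu=\tau^{-1}(\sum_{j}\omega_j\tau(\mu_{\alpha_j}))$, $\tilde\nu=\zeta^{-1}(\sum_{j}\omega_j\zeta(\nu_{\alpha_j}))$, and the neutral slot $\tilde\eta$ is the corresponding difference of two $\tau^{-1}$ terms (and similarly for the hatted quantities).

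Next I would establish the two baseline inequalities $\tilde\mu\le\hat\mu$ and $\tilde\nu\ge\hat\nu$. For the positive component, since $\tau$ is decreasing, $\mu_{\alpha_j}\le\mu_{\beta_j}$ gives $\tau(\mu_{\alpha_j})\ge\tau(\mu_{\beta_j})$; summing against the weights and applying the decreasing map $\tau^{-1}$ yields $\tilde\mu\le\hat\mu$. For the negative component, since $\zeta$ is increasing, $\nu_{\alpha_j}\ge\nu_{\beta_j}$ gives $\zeta(\nu_{\alpha_j})\ge\zeta(\nu_{\beta_j})$, and applying the increasing map $\zeta^{-1}$ yields $\tilde\nu\ge\hat\nu$. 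These are the same inequalities that drive Theorem~\ref{Mono-Thorem}; the only bookkeeping difference is that here the positive slot is governed by $\tau$ rather than $\zeta$, so the strictness edge cases rely on $\tau(0)=+\infty$ where the earlier proof used $\zeta(1)=+\infty$.

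With these in hand I would split into the three cases dictated by Definition~\ref{Order-Wu}. In Case~(1), if $\mu_{\alpha_{j_0}}<\mu_{\beta_{j_0}}$ for some $j_0$, then since $\tau$ is strictly decreasing with $\tau(0)=+\infty$ the inequality becomes strict, so $S(\langle\tilde\mu,\tilde\eta,\tilde\nu\rangle)=\tilde\mu-\tilde\nu<\hat\mu-\tilde\nu\le\hat\mu-\hat\nu=S(\langle\hat\mu,\hat\eta,\hat\nu\rangle)$, giving $\langle\tilde\mu,\tilde\eta,\tilde\nu\rangle\prec_{_{\mathrm{W}}}\langle\hat\mu,\hat\eta,\hat\nu\rangle$. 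In Case~(2), if $\nu_{\alpha_{j_0}}>\nu_{\beta_{j_0}}$ for some $j_0$, then $\zeta$ strictly increasing with $\zeta(1)=+\infty$ makes $\tilde\nu\ge\hat\nu$ strict, and combined with $\tilde\mu\le\hat\mu$ the scores again separate strictly.

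The delicate step, and the one I expect to be the main obstacle, is Case~(3): $\mu_{\alpha_j}=\mu_{\beta_j}$ and $\nu_{\alpha_j}=\nu_{\beta_j}$ for all $j$, so $\tilde\mu=\hat\mu$, $\tilde\nu=\hat\nu$, and both $S$ and $H_1$ coincide; the order must then be decided by $H_2$, i.e.\ by the neutral degrees. The key observation is that the neutral component of $\mathrm{PFIWG}$ is a difference $\tau^{-1}(\sum_j\omega_j\tau(\eta_j+\mu_j))-\tau^{-1}(\sum_j\omega_j\tau(\mu_j))$, and because $\mu_{\alpha_j}=\mu_{\beta_j}$ the subtracted term is identical for $\alpha$ and $\beta$ and cancels. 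It then remains only to compare the first term: from $\eta_{\alpha_j}\le\eta_{\beta_j}$ and $\mu_{\alpha_j}=\mu_{\beta_j}$ we get $\eta_{\alpha_j}+\mu_{\alpha_j}\le\eta_{\beta_j}+\mu_{\beta_j}$, whence $\tau(\eta_{\alpha_j}+\mu_{\alpha_j})\ge\tau(\eta_{\beta_j}+\mu_{\beta_j})$ and, after applying $\tau^{-1}$, $\tilde\eta\le\hat\eta$. Therefore $H_2(\langle\tilde\mu,\tilde\eta,\tilde\nu\rangle)=\tilde\mu+\tilde\eta+\tilde\nu\le\hat\mu+\hat\eta+\hat\nu=H_2(\langle\hat\mu,\hat\eta,\hat\nu\rangle)$, and Definition~\ref{Order-Wu} yields $\mathrm{PFIWG}_{T,\omega}(\alpha_1,\ldots,\alpha_n)\preceq_{_{\mathrm{W}}}\mathrm{PFIWG}_{T,\omega}(\beta_1,\ldots,\beta_n)$, completing the proof.
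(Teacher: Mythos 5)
Your proposal is correct and is exactly the argument the paper intends: the paper proves the PFIWA case (Theorem~\ref{Mono-Thorem}) in detail and states Theorem~\ref{Mono-Thorem-2} ``analogously,'' and you carry out precisely that analogy, using formula~\eqref{eq-PFWG-2}, swapping the roles of $\tau$ and $\zeta$ in the positive and negative slots, and resolving the decisive Case~(3) by the same cancellation trick (here the subtracted term $\tau^{-1}(\sum_j\omega_j\tau(\mu_j))$ cancels because the positive degrees coincide, just as the $\nu$-term cancels in the paper's proof). The case structure, the strictness arguments via the unbounded generator, and the final comparison through $S$, $H_1$, $H_2$ all match the paper's template.
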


\begin{theorem}[\textrm{Idempotency}]
\label{Idem-Thm-2} If $\alpha_{1}=\cdots =\alpha_n=
\langle\mu, \eta, \nu \rangle\in %
\mathscr{P}$, then
$
\mathrm{PFIWG}_{_{T, \omega}}(\alpha_1, \ldots, \alpha_n) =%
\big\langle\mu, \eta, \nu\big\rangle.
$
\end{theorem}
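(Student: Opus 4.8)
The plan is to proceed exactly as in the proof of Theorem~\ref{Idem-Thm}, namely by substituting the common value $\langle \mu, \eta, \nu\rangle$ into the closed-form expression \eqref{eq-PFWG-2} for $\mathrm{PFIWG}_{T,\omega}$ supplied by Theorem~\ref{PFWA-PFGA-Thm}, and then collapsing the weighted sums using $\sum_{j=1}^{n}\omega_j=1$ together with the invertibility of the additive generator.

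First I would record the structural facts about $\tau$ and $\zeta$. By Lemma~\ref{Concinuous-Archi-Representation}, since $T$ is a strict t-norm, $\tau$ is a continuous, strictly decreasing additive generator with $\tau(1)=0$ and $\tau(0)=+\infty$; hence $\tau$ is a bijection of $[0,1]$ onto $[0,+\infty]$ and its pseudo-inverse agrees with its genuine inverse, so that $\tau^{-1}(\tau(x))=x$ for every $x\in[0,1]$. The same holds for $\zeta(u)=\tau(1-u)$, giving $\zeta^{-1}(\zeta(x))=x$. Note also that $\mu+\eta\le 1$ because $\langle\mu,\eta,\nu\rangle$ is a PFN, so that $\tau(\eta+\mu)$ is well defined.

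Next, putting $\mu_j=\mu$, $\eta_j=\eta$, $\nu_j=\nu$ for all $j$ in \eqref{eq-PFWG-2}, each weighted sum factors through $\sum_{j=1}^n\omega_j=1$: for instance $\omega_1\tau(\mu)+\cdots+\omega_n\tau(\mu)=\big(\sum_{j=1}^n\omega_j\big)\tau(\mu)=\tau(\mu)$, and likewise $\sum_{j}\omega_j\tau(\eta+\mu)=\tau(\eta+\mu)$ and $\sum_{j}\omega_j\zeta(\nu)=\zeta(\nu)$. Applying $\tau^{-1}\circ\tau=\mathrm{id}$ and $\zeta^{-1}\circ\zeta=\mathrm{id}$ then turns the first component into $\tau^{-1}(\tau(\mu))=\mu$, the third into $\zeta^{-1}(\zeta(\nu))=\nu$, and the middle one into $\tau^{-1}(\tau(\eta+\mu))-\tau^{-1}(\tau(\mu))=(\eta+\mu)-\mu=\eta$. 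Hence $\mathrm{PFIWG}_{T,\omega}(\alpha_1,\ldots,\alpha_n)=\langle\mu,\eta,\nu\rangle$, as claimed.

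I do not anticipate any genuine obstacle: the argument is a direct evaluation of the formula, entirely parallel to Theorem~\ref{Idem-Thm}. The only point requiring a word of care is the use of $\tau^{-1}\circ\tau=\mathrm{id}$, which is legitimate precisely because strictness of $T$ forces $\tau(0)=+\infty$ and continuity, so that no truncation occurs in the pseudo-inverse; the middle-component cancellation then hinges on the fact that $\eta+\mu$ and $\mu$ both lie in $[0,1]$, which is guaranteed by the PFN constraint.
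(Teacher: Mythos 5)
Your proposal is correct and follows essentially the same route as the paper: the paper proves this result ``analogously'' to Theorem~\ref{Idem-Thm}, i.e., by substituting the common PFN into the closed-form expression~\eqref{eq-PFWG-2} of Theorem~\ref{PFWA-PFGA-Thm}, collapsing the weighted sums via $\sum_{j=1}^{n}\omega_j=1$, and cancelling with $\tau^{-1}\circ\tau=\mathrm{id}$ and $\zeta^{-1}\circ\zeta=\mathrm{id}$. Your added remarks on strictness of $T$ guaranteeing $\tau(0)=+\infty$ (so the pseudo-inverse is a genuine inverse) and on $\mu+\eta\leq 1$ making $\tau(\eta+\mu)$ well defined are correct and only make explicit what the paper leaves implicit.
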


\begin{theorem}[\textrm{Boundedness}]
For $\alpha_{j}=\langle\mu_{j}, \eta_{j}, \nu_{j}\rangle\in
\mathscr{P}$ ($j=1, \ldots, n$), we have
$\langle\min\limits_{1\leq j\leq n}\{\mu_{j}\}, \min\limits_{1\leq j\leq
n}\{\eta_{j}\}, \max\limits_{1\leq j\leq n}\{\nu_{j}\} \rangle$
$\preceq_{_\mathrm{W}} \mathrm{PFIWG}_{_{T, \omega}}(\alpha_1, \ldots,
\alpha_n)\preceq_{_{\mathrm{W}}}\langle\max\limits_{1\leq j\leq n}\{\mu_j\},
1-(\max\limits_{1\leq j\leq n}\{\mu_j\}+ \min\limits_{1\leq j\leq
n}\{\nu_{j}\}), \min\limits_{1\leq j\leq n}\{\nu_{j}\}\rangle.
$
\end{theorem}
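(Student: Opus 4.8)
The plan is to mirror the proof of Theorem~\ref{Boundedness-Thm} for $\mathrm{PFIWA}_{_{T,\omega}}$, replacing every appeal to the monotonicity and idempotency of $\mathrm{PFIWA}_{_{T,\omega}}$ by the corresponding geometric statements, Theorem~\ref{Mono-Thorem-2} and Theorem~\ref{Idem-Thm-2}. Write $\langle\tilde\mu_1,\tilde\eta_1,\tilde\nu_1\rangle=\langle\min_j\mu_j,\min_j\eta_j,\max_j\nu_j\rangle$, $\langle\mu,\eta,\nu\rangle=\mathrm{PFIWG}_{_{T,\omega}}(\alpha_1,\ldots,\alpha_n)$, and $\langle\tilde\mu_2,\tilde\eta_2,\tilde\nu_2\rangle=\langle\max_j\mu_j,\,1-(\max_j\mu_j+\min_j\nu_j),\,\min_j\nu_j\rangle$.

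First I would confirm that the two candidate bounds are genuine PFNs. Choosing an index $j^\ast$ attaining $\max_j\nu_j$ gives $\min_j\mu_j+\min_j\eta_j+\max_j\nu_j\le\mu_{j^\ast}+\eta_{j^\ast}+\nu_{j^\ast}\le 1$, so the lower bound lies in $\mathscr{P}$; choosing an index $j_1$ attaining $\max_j\mu_j$ gives $\max_j\mu_j+\min_j\nu_j\le\mu_{j_1}+\nu_{j_1}\le 1$, whence $\tilde\eta_2\ge 0$, and since the three coordinates of the upper bound sum to exactly $1$, it too lies in $\mathscr{P}$.

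For the lower inequality, note $\langle\tilde\mu_1,\tilde\eta_1,\tilde\nu_1\rangle\subseteq\alpha_j$ for every $j$, so Theorems~\ref{Mono-Thorem-2} and \ref{Idem-Thm-2} yield $\langle\tilde\mu_1,\tilde\eta_1,\tilde\nu_1\rangle=\mathrm{PFIWG}_{_{T,\omega}}(\langle\tilde\mu_1,\tilde\eta_1,\tilde\nu_1\rangle,\ldots)\preceq_{_{\mathrm{W}}}\mathrm{PFIWG}_{_{T,\omega}}(\alpha_1,\ldots,\alpha_n)$. For the upper inequality I would use formula~\eqref{eq-PFWG-2}, which gives $\mu=\tau^{-1}(\sum_j\omega_j\tau(\mu_j))$ and $\nu=\zeta^{-1}(\sum_j\omega_j\zeta(\nu_j))$, together with $\mu_j\le\tilde\mu_2$ and $\nu_j\ge\tilde\nu_2$ for all $j$, and split into three cases. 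If some $\mu_{j_0}<\tilde\mu_2$, strict monotonicity of $\tau$ forces $\mu<\tilde\mu_2$ while $\nu\ge\tilde\nu_2$, so $S(\langle\mu,\eta,\nu\rangle)=\mu-\nu<\tilde\mu_2-\tilde\nu_2$ and $\langle\mu,\eta,\nu\rangle\prec_{_{\mathrm{W}}}\langle\tilde\mu_2,\tilde\eta_2,\tilde\nu_2\rangle$. If some $\nu_{j_0}>\tilde\nu_2$, strict monotonicity of $\zeta$ forces $\nu>\tilde\nu_2$ while $\mu\le\tilde\mu_2$, again giving a strict score gap. In the remaining case $\mu_j=\tilde\mu_2$ and $\nu_j=\tilde\nu_2$ for all $j$, the constraint $\mu_j+\eta_j+\nu_j\le 1$ gives $\max_j\eta_j\le 1-(\tilde\mu_2+\tilde\nu_2)=\tilde\eta_2$, hence $\alpha_j\subseteq\langle\tilde\mu_2,\tilde\eta_2,\tilde\nu_2\rangle$ for all $j$, and Theorems~\ref{Mono-Thorem-2} and \ref{Idem-Thm-2} close the argument.

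The main obstacle is conceptual rather than computational: because $\preceq_{_{\mathrm{W}}}$ is decided lexicographically through $S$, then $H_1$, then $H_2$, the coordinatewise inequalities $\mu\le\tilde\mu_2$ and $\nu\ge\tilde\nu_2$ do not by themselves yield $\preceq_{_{\mathrm{W}}}$, and the neutral degree $\eta$ is not directly controlled, so the upper bound cannot be extracted from a single inclusion $\subseteq$. The case split is exactly what resolves this: whenever a positive or negative degree is strictly interior one already wins at the level of the score $S$, and only on the fully saturated boundary configuration does $\alpha_j\subseteq\langle\tilde\mu_2,\tilde\eta_2,\tilde\nu_2\rangle$ hold, permitting a clean appeal to monotonicity and idempotency. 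One should also verify that the boundary values cause no trouble in the strictness steps, which here they do not, since the generator attached to $\tilde\mu_2$ satisfies $\tau(1)=0$ and the one attached to $\tilde\nu_2$ satisfies $\zeta(0)=0$, both finite.
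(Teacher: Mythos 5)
Your proposal is correct and follows essentially the same route as the paper: the paper proves the PFIWA version (Theorem~\ref{Boundedness-Thm}) by the identical scheme---lower bound via inclusion plus monotonicity and idempotency, upper bound via the three-case split on whether some $\mu_{j_0}<\tilde\mu_2$ or some $\nu_{j_0}>\tilde\nu_2$ or neither---and declares the PFIWG statement analogous, which is exactly the transposition (swapping the roles of $\tau$ and $\zeta$ in the positive and negative coordinates via formula~\eqref{eq-PFWG-2}, and invoking Theorems~\ref{Mono-Thorem-2} and \ref{Idem-Thm-2}) that you carry out. Your explicit check that both bounds are PFNs and your remark on finiteness of the generators at the comparison points are small refinements the paper leaves implicit, but they do not change the argument.
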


\begin{theorem}[\textrm{Shift Invariance}]
\label{Shift-invariance-Thm-1} Let $\alpha_{j}=\langle \mu_{j},
\eta_{j}, \nu_{j}\rangle\in \mathscr{P}$ ($j=1, \ldots, n$) and $%
\beta=\langle\mu_{\beta}, \eta_{\beta}, \nu_{\beta} \rangle\in %
\mathscr{P}$. Then,
$\mathrm{PFIWG}_{_{T, \omega}}(\alpha_1\otimes_{_{T}}\beta,
\alpha_2\otimes_{_{T}}\beta, \ldots, \alpha_n\otimes_{_{T}}\beta) =\mathrm{%
PFIWG}_{_{T, \omega}}(\alpha_1, \alpha_2, \ldots,
\alpha_n)\otimes_{_{T}}\beta.$
\end{theorem}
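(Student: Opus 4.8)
The plan is to mirror the proof of Theorem~\ref{Shift-invariance-Thm} almost verbatim, replacing the additive operator $\oplus_{_{T}}$ by the geometric operator $\otimes_{_{T}}$ and the averaging formula~\eqref{eq-PFWA-1} by the geometric formula~\eqref{eq-PFWG-2}. First I would expand each shifted input $\alpha_{j}\otimes_{_{T}}\beta$ by Definition~\ref{Def-Wu}(iii), which gives $\alpha_{j}\otimes_{_{T}}\beta=\langle T(\mu_{j},\mu_{\beta}),\,T(\eta_{j}+\mu_{j},\eta_{\beta}+\mu_{\beta})-T(\mu_{j},\mu_{\beta}),\,S(\nu_{j},\nu_{\beta})\rangle$. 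Rewriting $T(u,v)=\tau^{-1}(\tau(u)+\tau(v))$ and $S(u,v)=\zeta^{-1}(\zeta(u)+\zeta(v))$, the three coordinates become expressions of the shape $\tau^{-1}(\tau(\cdot)+\tau(\cdot))$ and $\zeta^{-1}(\zeta(\cdot)+\zeta(\cdot))$, exactly as the $\oplus_{_{T}}$ coordinates did in the earlier proof.

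Next I would substitute these into formula~\eqref{eq-PFWG-2}. The crucial simplification is that, since $T$ is strict, its additive generator $\tau$ is by Lemma~\ref{Concinuous-Archi-Representation} a continuous strictly decreasing bijection of $[0,1]$ onto $[0,+\infty]$ with $\tau(0)=+\infty$; hence $\tau^{-1}$ is a genuine two-sided inverse and $\tau(\tau^{-1}(x))=x$, $\zeta(\zeta^{-1}(x))=x$ on the relevant ranges. Writing $\alpha_{j}\otimes_{_{T}}\beta=\langle\mu_{j}',\eta_{j}',\nu_{j}'\rangle$, these identities collapse the nested terms into $\tau(\mu_{j}')=\tau(\mu_{j})+\tau(\mu_{\beta})$, $\tau(\eta_{j}'+\mu_{j}')=\tau(\eta_{j}+\mu_{j})+\tau(\eta_{\beta}+\mu_{\beta})$, and $\zeta(\nu_{j}')=\zeta(\nu_{j})+\zeta(\nu_{\beta})$.

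Then I would use the normalization $\sum_{j=1}^{n}\omega_{j}=1$ to pull the constant $\beta$-contributions out of each weighted sum, e.g. $\sum_{j=1}^{n}\omega_{j}(\tau(\mu_{j})+\tau(\mu_{\beta}))=\sum_{j=1}^{n}\omega_{j}\tau(\mu_{j})+\tau(\mu_{\beta})$, and likewise for the $\eta+\mu$ terms and for the $\zeta$-sums. Re-applying $T(u,v)=\tau^{-1}(\tau(u)+\tau(v))$ and $S(u,v)=\zeta^{-1}(\zeta(u)+\zeta(v))$ in reverse, together with formula~\eqref{eq-PFWG-2} for $\gamma:=\mathrm{PFIWG}_{_{T,\omega}}(\alpha_{1},\ldots,\alpha_{n})$, rewrites the three coordinates as $T(\mu_{\gamma},\mu_{\beta})$, then $T(\eta_{\gamma}+\mu_{\gamma},\eta_{\beta}+\mu_{\beta})-T(\mu_{\gamma},\mu_{\beta})$, and finally $S(\nu_{\gamma},\nu_{\beta})$, which by Definition~\ref{Def-Wu}(iii) is precisely $\gamma\otimes_{_{T}}\beta$. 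This yields the claimed identity.

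I expect the only real obstacle to be the bookkeeping in the neutral coordinate, which is a difference of two $\tau^{-1}$-terms: I must verify that the $\tau(\eta_{\beta}+\mu_{\beta})$ added inside the first $\tau^{-1}$ and the $\tau(\mu_{\beta})$ added inside the second are exactly the shifts demanded by $\gamma\otimes_{_{T}}\beta$, so that the difference reassembles correctly rather than cancelling. Since both shifts survive the normalization step with coefficient $1$, the alignment is automatic, and no genuine difficulty arises beyond the careful tracking already carried out in the $\mathrm{PFIWA}$ case.
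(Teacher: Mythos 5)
Your proposal is correct and is essentially the paper's own argument: the paper proves the PFIWA case (Theorem~\ref{Shift-invariance-Thm}) in detail and then merely states that the PFIWG results follow ``analogously,'' and your proof---expanding $\alpha_j\otimes_{_{T}}\beta$ via Definition~\ref{Def-Wu}, using strictness of $T$ so that $\tau^{-1}$ is a genuine inverse, observing that $\eta_j'+\mu_j'=T(\eta_j+\mu_j,\eta_\beta+\mu_\beta)$ gives $\tau(\eta_j'+\mu_j')=\tau(\eta_j+\mu_j)+\tau(\eta_\beta+\mu_\beta)$, pulling the constant $\beta$-terms out of the weighted sums by $\sum_{j=1}^{n}\omega_j=1$, and reassembling through formula~\eqref{eq-PFWG-2}---is exactly that analogy carried out. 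Your closing check on the neutral coordinate, where the difference of two $\tau^{-1}$-terms must recombine into $T(\eta_\gamma+\mu_\gamma,\eta_\beta+\mu_\beta)-T(\mu_\gamma,\mu_\beta)$, is the one place requiring care, and you have it right.
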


\begin{theorem}[\textrm{Homogeneity}]
\label{Homogeneity-Thm-1}
Let $\alpha_{j}=\langle\mu_{j}, \eta_{j}, \nu_{j}\rangle\in \mathscr{P}$
($j=1, \ldots, n$) and $\lambda>0$.
Then,
$\mathrm{PFIWG}_{_{T, \omega}}((\alpha_1)^{\lambda_{_{T}}},
\ldots, (\alpha_n)^{\lambda_{_{T}}}) =(
\mathrm{PFIWG}_{_{T, \omega}}(\alpha_1, \ldots,
\alpha_n))^{\lambda_{_{T}}}.$
\end{theorem}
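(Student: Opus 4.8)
The plan is to verify the identity componentwise by substituting the closed-form expression for $\mathrm{PFIWG}_{_{T,\omega}}$ from Theorem~\ref{PFWA-PFGA-Thm} (formula~\eqref{eq-PFWG-2}) together with the formula for the power $\alpha^{\lambda_{_T}}$ from Definition~\ref{Def-Wu}(v), mirroring exactly the argument used for $\mathrm{PFIWA}_{_{T,\omega}}$ in Theorem~\ref{Homogeneity-Thm}. The single algebraic fact driving every cancellation is that, for a strict t-norm $T$ with additive generator $\tau$, Lemma~\ref{Concinuous-Archi-Representation} yields $\tau(0)=+\infty$, so that $\tau\colon[0,1]\to[0,+\infty]$ is a continuous strictly decreasing bijection; hence $\tau^{-1}$ is a genuine inverse with $\tau(\tau^{-1}(x))=x$, and likewise $\zeta(\zeta^{-1}(x))=x$ where $\zeta(u)=\tau(1-u)$.

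First I would expand the left-hand side. Writing $(\alpha_j)^{\lambda_{_T}}=\langle\mu_j',\eta_j',\nu_j'\rangle$, Definition~\ref{Def-Wu}(v) gives $\mu_j'=\tau^{-1}(\lambda\tau(\mu_j))$, $\nu_j'=\zeta^{-1}(\lambda\zeta(\nu_j))$, and, \emph{crucially}, $\mu_j'+\eta_j'=\tau^{-1}(\lambda\tau(\eta_j+\mu_j))$. Applying formula~\eqref{eq-PFWG-2} to $\langle\mu_j',\eta_j',\nu_j'\rangle_{j=1}^{n}$, the positive coordinate becomes $\tau^{-1}\big(\sum_j\omega_j\tau(\mu_j')\big)=\tau^{-1}\big(\lambda\sum_j\omega_j\tau(\mu_j)\big)$ after the cancellation $\tau\circ\tau^{-1}=\mathrm{id}$, and the negative coordinate becomes $\zeta^{-1}\big(\lambda\sum_j\omega_j\zeta(\nu_j)\big)$ in the same way. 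For the neutral coordinate, $\tau(\eta_j'+\mu_j')=\lambda\tau(\eta_j+\mu_j)$ telescopes, so it equals $\tau^{-1}\big(\lambda\sum_j\omega_j\tau(\eta_j+\mu_j)\big)-\tau^{-1}\big(\lambda\sum_j\omega_j\tau(\mu_j)\big)$.

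Next I would expand the right-hand side. Writing $\mathrm{PFIWG}_{_{T,\omega}}(\alpha_1,\ldots,\alpha_n)=\langle M,N,P\rangle$ with $M=\tau^{-1}(\sum_j\omega_j\tau(\mu_j))$, $M+N=\tau^{-1}(\sum_j\omega_j\tau(\eta_j+\mu_j))$, and $P=\zeta^{-1}(\sum_j\omega_j\zeta(\nu_j))$, Definition~\ref{Def-Wu}(v) gives $\langle M,N,P\rangle^{\lambda_{_T}}=\langle\tau^{-1}(\lambda\tau(M)),\,\tau^{-1}(\lambda\tau(M+N))-\tau^{-1}(\lambda\tau(M)),\,\zeta^{-1}(\lambda\zeta(P))\rangle$. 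Substituting the expressions for $M$, $M+N$, and $P$ and cancelling $\tau\circ\tau^{-1}$ and $\zeta\circ\zeta^{-1}$ reproduces precisely the three coordinates obtained from the left-hand side, establishing the equality.

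I do not expect a serious obstacle: the argument is a bookkeeping computation strictly parallel to Theorem~\ref{Homogeneity-Thm}. The only point requiring care is the neutral (middle) coordinate, where both the power and the weighted geometric aggregation are defined as a difference of two $\tau^{-1}$-terms; the cancellations go through cleanly only because the combination $\mu+\eta$ (rather than $\eta$ alone) sits inside the generator, which is exactly what makes $\mu_j'+\eta_j'$ and $M+N$ collapse to a single $\tau^{-1}$-term and lets the scalar $\lambda$ factor out uniformly. Verifying this telescoping carefully is the crux of the proof.
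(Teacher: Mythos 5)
Your proof is correct and is essentially the paper's own argument: the paper only proves the PFIWA case (Theorem~\ref{Homogeneity-Thm}) by this same coordinatewise substitution into the closed-form expression and declares the PFIWG version to hold ``analogously,'' and your computation is precisely that analogue. In particular, you correctly isolate the one point where the analogy needs care---the telescoping identity $\mu_j'+\eta_j'=\tau^{-1}(\lambda\cdot\tau(\eta_j+\mu_j))$ (and likewise $\tau(M+N)=\sum_j\omega_j\tau(\eta_j+\mu_j)$ on the other side), which is what lets $\lambda$ factor out of the middle coordinate exactly as $\tau(\eta_j'+\nu_j')=\lambda\cdot\tau(\eta_j+\nu_j)$ does in the paper's PFIWA proof.
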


\subsection{Picture fuzzy interactional ordered weighted average (geometric)
operator}

\begin{definition}
\label{PFWA-Def-2} Let $\omega=(\omega_1, \ldots, \omega_n)^{\top}$
be the weight vector such that $\omega_{j}\in (0, 1]$ and $%
\sum_{j=1}^{n}\omega_j=1$ and $T$ be a strict t-norm. Define the \textit{%
picture fuzzy interactional ordered weighted average operator} $\mathrm{%
PFIOWA}_{T, \omega}$ and \textit{picture fuzzy interactional ordered
weighted geometric operator} $\mathrm{PFIOWG}_{T, \omega}$ induced by $T$ as
\begin{equation*}
\label{eq-PFOWA-Ope}
\begin{split}
\mathrm{PFIOWA}_{T, \omega}: \mathscr{P}^{n}&\rightarrow \mathscr{P} \\
(\alpha_1, \ldots, \alpha_n)&\mapsto
(\omega_1)_{_T}\alpha_{\sigma(1)}\oplus_{_T} \cdots
\oplus_{_T} (\omega_n)_{_T}\alpha_{\sigma(n)},
\end{split}%
\end{equation*}%
and
\begin{equation*}  \label{eq-PFOWG-Ope}
\begin{split}
\mathrm{PFIOWG}_{T, \omega}: \mathscr{P}^{n}&\rightarrow \mathscr{P} \\
(\alpha_1, \ldots, \alpha_n)&\mapsto
\alpha_{\sigma(1)}^{(\omega_1)_{_T}} \otimes_{_T} \cdots
\otimes_{_T} \alpha_{\sigma(n)}^{(\omega_n)_{_T}},
\end{split}%
\end{equation*}%
respectively, where $({\sigma(1)}, \ldots, {\sigma(n)})$ is a
permutation of $(1, \ldots, n)$ such that $\alpha_{\sigma(j)} \preceq_{_{%
\mathrm{W}}} \alpha_{\sigma(j-1)}$ for all $2\leq j\leq n$, i.e., $%
\alpha_{\sigma(1)}\succeq_{_{\mathrm{W}}} \alpha_{\sigma(2)}\succeq_{_{%
\mathrm{W}}} \cdots \succeq_{_{\mathrm{W}}} \alpha_{\sigma(n)}$.
\end{definition}

Similarly to Theorem~\ref{PFWA-PFGA-Thm}, we have the following result.

\begin{theorem}
\label{PFOWA-PFOWG-Thm} Let $\alpha_{j}=(\mu_{j}, \eta_{j}, \nu_{j})\in \mathscr{P}$ ($j=1,
\ldots, n$), $\omega=(\omega_1, \ldots, \omega_n)^{\top}$ be the weight vector
of $\alpha_{j}$ ($j=1, \ldots, n$) such that $\omega_{j}\in (0, 1]$ and $\sum_{j=1}^{n}\omega_j=1$, and $T$ be
a strict t-norm with an AG $\tau$. Then,
\begin{equation*}  \label{eq-PFOWA-1}
\begin{split}
&\quad \mathrm{PFIOWA}_{T, \omega} (\alpha_1, \alpha_2, \ldots, \alpha_n) \\
&= \left\langle \zeta^{-1}(\omega_1\cdot \zeta(\mu_{\sigma(1)})+ \cdots
+\omega_n\cdot \zeta(\mu_{\sigma(n)})),\right. \\
&\quad \quad \tau^{-1}(\omega_1 \cdot
\tau(\eta_{\sigma(1)}+\nu_{\sigma(1)})+\cdots +\omega_n \cdot
\tau(\eta_{\sigma(n)}+\nu_{\sigma(n)}))\\
&\quad \quad - \tau^{-1}(\omega_1\cdot
\tau(\nu_{\sigma(1)})+\cdots +\omega_n\cdot \tau(\nu_{\sigma(n)})), \\
&\quad \quad \left.\tau^{-1}(\omega_1\cdot \tau(\nu_{\sigma(1)})+\cdots
+\omega_n\cdot \tau(\nu_{\sigma(n)})) \right\rangle,
\end{split}%
\end{equation*}
and
\begin{equation*}  \label{eq-PFOWG-2}
\begin{split}
&\quad \mathrm{PFIOWG}_{T, \omega} (\alpha_1, \alpha_2, \ldots, \alpha_n) \\
&= \left\langle \tau^{-1}(\omega_1\cdot \tau(\mu_{\sigma(1)})+ \cdots
+\omega_n\cdot \tau(\mu_{\sigma(n)})),\right. \\
&\quad \quad \tau^{-1}(\omega_1 \cdot
\tau(\eta_{\sigma(1)}+\mu_{\sigma(1)})+\cdots +\omega_n \cdot
\tau(\eta_{\sigma(n)}+\mu_{\sigma(n)}))\\
&\quad \quad -\tau^{-1}(\omega_1\cdot
\tau(\mu_{\sigma(1)})+\cdots +\omega_n\cdot \tau(\mu_{\sigma(n)})), \\
&\quad \quad \left.\zeta^{-1}(\omega_1\cdot \zeta(\nu_{\sigma(1)})+\cdots
+\omega_n\cdot \zeta(\nu_{\sigma(n)})) \right\rangle,
\end{split}%
\end{equation*}
where $\zeta(x)=\tau(1-x)$.
\end{theorem}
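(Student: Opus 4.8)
The plan is to \emph{reduce} Theorem~\ref{PFOWA-PFOWG-Thm} to the already-proved Theorem~\ref{PFWA-PFGA-Thm}, exploiting the fact that the ordered operators are simply the weighted operators evaluated on a reordered input tuple. First I would record that, since $\preceq_{_{\mathrm{W}}}$ is a total order by Theorem~\ref{Total-Order-Thm}, the permutation $\sigma$ arranging $\alpha_{\sigma(1)}\succeq_{_{\mathrm{W}}}\cdots\succeq_{_{\mathrm{W}}}\alpha_{\sigma(n)}$ is unambiguously determined, so $\mathrm{PFIOWA}_{T,\omega}$ and $\mathrm{PFIOWG}_{T,\omega}$ are well-defined. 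Comparing Definition~\ref{PFWA-Def-2} with Definition~\ref{PFWA-Def} then gives
$$\mathrm{PFIOWA}_{T,\omega}(\alpha_1,\ldots,\alpha_n)=\mathrm{PFIWA}_{T,\omega}(\alpha_{\sigma(1)},\ldots,\alpha_{\sigma(n)}),$$
together with the analogous identity for $\mathrm{PFIOWG}$, so that substituting $\alpha_{\sigma(j)}$ for $\alpha_j$ in formulas~\eqref{eq-PFWA-1} and \eqref{eq-PFWG-2} yields the claimed closed forms at once.

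If instead a self-contained derivation mirroring the proof of Theorem~\ref{PFWA-PFGA-Thm} is preferred, I would proceed in three steps. First, apply Definition~\ref{Def-Wu}(iv) to each summand to write $(\omega_j)_{_{T}}\cdot\alpha_{\sigma(j)}=\langle a_j,b_j,c_j\rangle$ with $a_j=\zeta^{-1}(\omega_j\zeta(\mu_{\sigma(j)}))$, $c_j=\tau^{-1}(\omega_j\tau(\nu_{\sigma(j)}))$, and hence $b_j+c_j=\tau^{-1}(\omega_j\tau(\eta_{\sigma(j)}+\nu_{\sigma(j)}))$. Second, combine these $n$ PFNs under $\oplus_{_{T}}$ through formula~\eqref{eq-Wu-1.1} of Theorem~\ref{N-Thm}, expressing the aggregate via the $n$-ary extensions $S^{(n)}$ and $T^{(n)}$. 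Third, invoke the additive-generator representations $S^{(n)}(a_1,\ldots,a_n)=\zeta^{-1}(\sum_j\zeta(a_j))$ and $T^{(n)}(x_1,\ldots,x_n)=\tau^{-1}(\sum_j\tau(x_j))$, and collapse the sums using $\zeta(a_j)=\omega_j\zeta(\mu_{\sigma(j)})$, $\tau(c_j)=\omega_j\tau(\nu_{\sigma(j)})$, and $\tau(b_j+c_j)=\omega_j\tau(\eta_{\sigma(j)}+\nu_{\sigma(j)})$. The geometric case follows symmetrically from Definition~\ref{Def-Wu}(v) and formula~\eqref{eq-Wu-1.2}, with the roles of $\mu$ and $\nu$ interchanged.

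There is no substantive obstacle: the whole content reduces to bookkeeping of the neutral component, where one must carry $\tau^{-1}(\sum_j\omega_j\tau(\eta_{\sigma(j)}+\nu_{\sigma(j)}))$ and $\tau^{-1}(\sum_j\omega_j\tau(\nu_{\sigma(j)}))$ separately so that their difference reproduces the middle entry of the output. This is precisely the cancellation already verified for Theorem~\ref{PFWA-PFGA-Thm}, so the only genuinely new point is the well-definedness of $\sigma$, which rests on $\preceq_{_{\mathrm{W}}}$ being a total order and is exactly why the admissible order constructed in Section~\ref{S-3} is indispensable for the ordered operators.
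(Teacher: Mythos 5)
Your proposal is correct and is essentially the paper's own argument: the paper gives no separate proof, remarking only that the result holds ``Similarly to Theorem~\ref{PFWA-PFGA-Thm},'' and your reduction $\mathrm{PFIOWA}_{T,\omega}(\alpha_1,\ldots,\alpha_n)=\mathrm{PFIWA}_{T,\omega}(\alpha_{\sigma(1)},\ldots,\alpha_{\sigma(n)})$ (with the analogous identity for $\mathrm{PFIOWG}_{T,\omega}$), followed by substitution into formulas~\eqref{eq-PFWA-1} and \eqref{eq-PFWG-2}, is precisely what that remark leaves implicit. The only negligible caveat is your claim that $\sigma$ is ``unambiguously determined'': when several inputs coincide as PFNs the permutation is not unique, but any admissible choice merely swaps equal arguments, so the aggregated value and the stated formulas are unaffected.
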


Meanwhile, it can be verified that the operators $\mathrm{PFIOWA}_{T,\omega }$
and $\mathrm{PFIOWG}_{T,\omega }$ have idempotency and boundedness
properties. Besides, we obtain that they have commutativity.

\begin{theorem}[\textrm{Commutativity}]
Let $\alpha_{j}=\left\langle\mu_{j}, \eta_{j}, \nu_{j}\right\rangle$ ($j=1,
\ldots, n$) and $\alpha_{j}^{\prime}=\left\langle\mu_{j}^{\prime},
\eta_{j}^{\prime}, \nu_{j}^{\prime}\right\rangle$ ($j=1, \ldots, n$) be
two collections of PFNs such that $(\alpha_{1}^{\prime},
\ldots, \alpha_{n}^{\prime})$ is any permutation of $%
(\alpha_1, \ldots, \alpha_n)$. Then
\begin{equation*}
\mathrm{PFIOWA}_{T, \omega} (\alpha_1, \ldots, \alpha_n)= \mathrm{%
PFIOWA}_{T, \omega}(\alpha_{1}^{\prime}, \ldots,
\alpha_{n}^{\prime}),
\end{equation*}
and
\begin{equation*}
\mathrm{PFIOWG}_{T, \omega} (\alpha_1, \ldots, \alpha_n)= \mathrm{%
PFIOWG}_{T, \omega}(\alpha_{1}^{\prime}, \ldots,
\alpha_{n}^{\prime}).
\end{equation*}
\end{theorem}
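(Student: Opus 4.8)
The plan is to exploit the fact that both ordered operators depend on the input tuple only through its non-increasing rearrangement under $\preceq_{_{\mathrm{W}}}$, together with the fixed weight vector $\omega$. Since $(\alpha_{1}^{\prime}, \ldots, \alpha_{n}^{\prime})$ is a permutation of $(\alpha_1, \ldots, \alpha_n)$, the two tuples determine the same multiset of PFNs, so their sorted sequences coincide and the explicit formulas of Theorem~\ref{PFOWA-PFOWG-Thm} return identical outputs. First I would record the structural observation that, by Theorem~\ref{Total-Order-Thm}, $\preceq_{_{\mathrm{W}}}$ is a total order on $\mathscr{P}$; hence every finite collection of PFNs admits a non-increasing rearrangement, and the sequence of \emph{values} obtained in this way is uniquely determined even though the sorting permutation itself may fail to be unique when some entries coincide.

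Concretely, let $\sigma$ and $\sigma^{\prime}$ be the permutations entering the definition of $\mathrm{PFIOWA}_{T,\omega}$ applied to $(\alpha_1, \ldots, \alpha_n)$ and to $(\alpha_{1}^{\prime}, \ldots, \alpha_{n}^{\prime})$, respectively, so that $\alpha_{\sigma(1)} \succeq_{_{\mathrm{W}}} \cdots \succeq_{_{\mathrm{W}}} \alpha_{\sigma(n)}$ and $\alpha_{\sigma^{\prime}(1)}^{\prime} \succeq_{_{\mathrm{W}}} \cdots \succeq_{_{\mathrm{W}}} \alpha_{\sigma^{\prime}(n)}^{\prime}$. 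The central step is to verify the entrywise identity $\alpha_{\sigma(j)} = \alpha_{\sigma^{\prime}(j)}^{\prime}$ for every $1 \leq j \leq n$. Because $(\alpha_{1}^{\prime}, \ldots, \alpha_{n}^{\prime})$ is a permutation of $(\alpha_1, \ldots, \alpha_n)$, both sides list exactly the same PFNs with the same multiplicities; two non-increasing rearrangements of one and the same multiset with respect to a total order must agree in each position, for otherwise the first index at which they differ would exhibit two distinct values occupying the same rank, contradicting the antisymmetry of $\preceq_{_{\mathrm{W}}}$.

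With this identity in hand, I would substitute $\alpha_{\sigma(j)} = \alpha_{\sigma^{\prime}(j)}^{\prime}$ into the closed-form expression for $\mathrm{PFIOWA}_{T,\omega}$ given in Theorem~\ref{PFOWA-PFOWG-Thm}. Since every coordinate of that expression is a function of the sorted triples $(\mu_{\sigma(j)}, \eta_{\sigma(j)}, \nu_{\sigma(j)})$ paired with the fixed weights $\omega_j$, the two aggregated PFNs coincide. The argument for $\mathrm{PFIOWG}_{T,\omega}$ is verbatim the same, using its own formula from Theorem~\ref{PFOWA-PFOWG-Thm}.

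The only subtlety, and the point that must be handled with care, is the presence of ties: if two inputs are equal, the sorting permutation is not unique. This is harmless precisely because $\preceq_{_{\mathrm{W}}}$ is antisymmetric, so $\alpha_i \preceq_{_{\mathrm{W}}} \alpha_j$ and $\alpha_j \preceq_{_{\mathrm{W}}} \alpha_i$ force $\alpha_i = \alpha_j$; thus any two valid sorting permutations produce the same tuple of values, and the operators, which see only these values and $\omega$, are well defined and insensitive to the original ordering. I expect no genuine computation beyond this bookkeeping.
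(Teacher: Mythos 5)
Your proposal is correct, and it is precisely the standard verification the paper implicitly appeals to: the paper states this commutativity theorem without any proof (it merely remarks beforehand that such properties ``can be verified''), so there is no written argument to compare against. Your key step---that under the total order $\preceq_{_{\mathrm{W}}}$ (Theorem~\ref{Total-Order-Thm}) any two non-increasing rearrangements of the same multiset of PFNs agree value-by-value, so the sorted tuples fed into the formulas of Theorem~\ref{PFOWA-PFOWG-Thm} coincide---is exactly what is needed, and your handling of ties (non-uniqueness of the sorting permutation but uniqueness of the sorted value sequence) also settles the well-definedness of $\mathrm{PFIOWA}_{T,\omega}$ and $\mathrm{PFIOWG}_{T,\omega}$, a point Definition~\ref{PFWA-Def-2} leaves tacit.
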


\section{A picture fuzzy MCDM method and a numerical example}
\label{S-7}
\subsection{A novel picture fuzzy MCDM method}
Consider an MCDM problem under the PF environment,
suppose $A=\{A_{1},A_{2},\ldots ,A_{m}\}$ is a set of alternatives to be
selected, and $G=\{G_{1},G_{2},\ldots ,G_{n}\}$ is a set of criteria to be
evaluated, whose weight vector is $\omega =(\omega _{1},\omega _{2},\ldots
,\omega _{n})^{\top }$ so that $\omega _{j}\in (0,1]$ and $%
\sum_{j=1}^{n}\omega _{j}=1$. Assume that the rating of an alternative $%
A_{i} $ ($i=1, \ldots ,m$) based on the criteria $G_{j}$ ($j=1, \ldots, n$) is
assessed by the decision-maker, forming PFNs $\alpha _{ij}=\langle \mu
_{ij},\eta _{ij},\nu _{ij}\rangle $, where $\mu _{ij}$ and $\eta _{ij}$
are the degrees of the positive and the neutral memberships,
respectively, and $\nu _{ij}$ is the degree that the alternative $A_{i}$
does not satisfy the attribute $G_{j}$. To rank alternatives, the following
steps are given:

Step~1: (Construct the decision matrix)
The decision-maker gives their preference by PFNs $\alpha _{ij}=\langle \mu _{ij},\eta
_{ij},\nu _{ij}\rangle $ towards the alternative $A_{i}$ and the criteria $%
G_{j}$. Hence, a picture fuzzy decision matrix $D=(\alpha
_{ij})_{m\times n}$ is constructed as below:
\begin{table}[H]
\centering
  \caption{The picture fuzzy decision matrix $D$}
  \label{Tab-**}
\scalebox{0.75}{
	\begin{tabular}{cccccc}
		\toprule
		$$ & $G_{1}$ &  $G_{2}$ & $\cdots$ & $G_{n}$ \\
		\midrule
		$A_{1}$ & $\langle \mu_{11}, \eta_{11}, \nu_{11}\rangle$ &
        $\langle \mu_{12}, \eta_{12}, \nu_{12}\rangle$ &
           $\cdots$ & $\langle \mu_{1n}, \eta_{1n}, \nu_{1n}\rangle$\\
		$A_{2}$ & $\langle \mu_{21}, \eta_{21}, \nu_{21}\rangle$ &
        $\langle \mu_{22}, \eta_{22}, \nu_{22}\rangle$ &
           $\ldots$ & $\langle \mu_{2n}, \eta_{2n}, \nu_{2n}\rangle$ \\
        $\vdots$ & $\vdots$ & $\vdots$ &
           $\ddots$ & $\vdots$ \\
        $A_{m}$ & $\langle \mu_{m1}, \eta_{m1}, \nu_{m1}\rangle$ &
        $\langle \mu_{m2}, \eta_{m2}, \nu_{m2}\rangle$ &
           $\ldots$ & $\langle \mu_{mn}, \eta_{mn}, \nu_{mn}\rangle$ \\
        \bottomrule
	\end{tabular}
      }
\end{table}

Step~2: (Normalize the decision matrix)
The matrix $D=(\alpha_{ij})_{m\times n}$ is transformed into the
normalized matrix $R=(r_{ij})_{m\times n}$ for the picture fuzzy decision as below:
\begin{equation*}
r_{ij}=
\begin{cases}
\alpha_{ij}, & \text{for benefit criteria } G_{j}, \\
\alpha_{ij}^{\complement}, & \text{for cost criteria } G_{j}.%
\end{cases}%
\end{equation*}
%%where $\alpha_{ij}^{\complement}$ is the complement of $\alpha_{ij}$.

Step~3: (Compute the aggregated value) Based on the normalized
decision matrix $R$, compute the overall aggregated value of
every alternative $A_{i}$ ($i=1,\ldots ,m$), under the different criteria $%
G_{1},G_{2},\ldots ,G_{n}$, by using $\mathrm{PFIWA}_{T,\omega }$, $%
\mathrm{PFIWG}_{T,\omega },$ $\mathrm{PFIOWA}_{T,\omega },$ or $\mathrm{%
PFIOWG}_{T,\omega }$ for some strict t-norm $T$. Hence, it is taken the
collective value $r_{i}$ for each alternative $A_{i}$.

Step~4: (Rank the alternatives) The alternatives $%
A_{1},A_{2},\ldots ,A_{m}$ by means of the total order given by Definition~%
\ref{Order-Wu} are ranked and their most desirable is selected.

\subsection{An illustrative example}

In an attempt to increase incomes, a commercial company in China plans to
make the financial strategy and investment for the coming year. Based
on the company's strategic plan, they have the following six alternative
schemes after the strict screening: (1) $A_{1}$: to invest in
\textquotedblleft Hunan Province"; (2) $A_{2}$: to invest in
\textquotedblleft Guangdong Province"; (3) $A_{3}$: to invest in
\textquotedblleft Hubei Province"; (4) $A_{4}$: to invest in
\textquotedblleft Sichuan Province"; (5) $A_{5}$: to invest in
\textquotedblleft Hebei Province"; (6) $A_{6}$: to invest in
\textquotedblleft Anhui Province". The six alternatives are judged by four attributes: (1) the \textquotedblleft growth
analysis" denoted by $G_{1}$; (2) the \textquotedblleft risk analysis" denoted by $G_{2}$; (3) the \textquotedblleft social-political impact analysis", denoted by $G_{3}$; (4)
the \textquotedblleft environmental impact analysis", denoted by $G_{4}$, where the weight
vector is $\omega =(0.2,0.3,0.1,0.4)^{\top }$.

Step~1: By normalization, the normalized decision
matrix $R=(r_{ij})_{6\times 4}$ transformed by the decision matrix from
the expert is listed in Table~\ref{Tab-4}.
\begin{table}[H]
\caption{The normalized picture fuzzy decision matrix $R$}
\label{Tab-4}\centering
\scalebox{0.75}{
	\begin{tabular}{cccccc}
		\toprule
		$$ & $G_{1}$ &  $G_{2}$ & $G_{3}$ & $G_{4}$ \\
		\midrule
		$A_{1}$ & $\langle0.6, 0.1, 0.2\rangle$ & $\langle0.5, 0.3, 0.1\rangle$ &
           $\langle0.5, 0.1, 0.3\rangle$ & $\langle 0.2, 0.3, 0.4\rangle$\\
		$A_{2}$ & $\langle0.4, 0.4, 0.1\rangle$ & $\langle0.6, 0.3, 0.1\rangle$ &
           $\langle0.5, 0.2, 0.2\rangle$ & $\langle0.7, 0.1, 0.2\rangle$ \\
        $A_{3}$ & $\langle0.2, 0.2, 0.3\rangle$ & $\langle0.6, 0.2, 0.1\rangle$ &
           $\langle0.4, 0.1, 0.3\rangle$ & $\langle0.4, 0.3, 0.3\rangle$ \\
        $A_{4}$ & $\langle0.5, 0.1, 0.2\rangle$ & $\langle0.4, 0.2, 0.1\rangle$ &
           $\langle0.2, 0.2, 0.5\rangle$ & $\langle0.3, 0.2, 0.2\rangle$ \\
        $A_{5}$ & $\langle0.2, 0.2, 0.2\rangle$ & $\langle0.5, 0.2, 0.1\rangle$ &
           $\langle0.3, 0.2, 0.3\rangle$ & $\langle0.6, 0.2, 0.1\rangle$ \\
        $A_{6}$ & $\langle0.6, 0.1, 0.3\rangle$ & $\langle0.1, 0.2, 0.6\rangle$ &
           $\langle0.1, 0.3, 0.5\rangle$ & $\langle0.2, 0.3, 0.2\rangle$ \\
        \bottomrule
	\end{tabular}
      }
\end{table}

Step~2: If we take $T=T_{\textbf{P}}$, by the PFIWA operator $\mathrm{%
PFIWA}_{T_{\textbf{P}},\omega }$, the overall rating values of
each alternative $A_{i}$ are as below:
\begin{small}
\begin{align*}
r_{1}& =\big \langle1-0.4^{0.2}\cdot 0.5^{0.3}\cdot 0.5^{0.1}\cdot 0.8^{0.4},
\\
& 0.3^{0.2}\cdot 0.4^{0.3}\cdot 0.4^{0.1}\cdot
0.7^{0.4}-0.2^{0.2}\cdot 0.1^{0.3}\cdot 0.3^{0.1}\cdot 0.4^{0.4}, \\
& \quad \quad 0.2^{0.2}\cdot 0.1^{0.3}\cdot 0.3^{0.1}\cdot 0.4^{0.4}\big
\rangle \\
& =\langle 0.4229,0.2492,0.2232\rangle ,
\end{align*}%
\end{small}
similarly, $r_{2}=\langle 0.6046, 0.2314, 0.1414\rangle$, $r_{3}=\langle
0.4373, 0.2355, 0.2158\rangle$, $r_{4}=\langle 0.3667, 0.1883, 0.1780\rangle$,
$r_{5}=\langle 0.4804, 0.2062, 0.1282\rangle$, $r_{6}=\langle
0.2700, 0.2466, 0.3305\rangle $.

Step~3: By the direct calculation, the score values of the aggregated
values $r_{1},r_{2},\ldots ,r_{6}$ are $S(r_{1})=0.1997$, $S(r_{2})=0.4632$,
$S(r_{3})=0.2215$, $S(r_{4})=0.1887$, $S(r_{5})=0.3522$, and $%
S(r_{6})=-0.0605$. Thus, we have $r_{2}\succ _{_{\mathrm{W}}}r_{5}\succ _{_{%
\mathrm{W}}}r_{3}\succ _{_{\mathrm{W}}}r_{1}\succ _{_{\mathrm{W}}}r_{4}\succ
_{_{\mathrm{W}}}r_{6}$ by Definition~\ref{Order-Wu}. It means that the best
financial strategy is investing in the Eastern Asian market; i.e., $A_{2}$.

To study the changing tendency of the scores and the rankings for $A_1$,
$A_2$, $A_3$, $A_4$, $A_5$, and  $A_6$ with the variation of the t-norm $T$ and the parameter $\gamma $,
we use the following figures for illustrating these subjects.
\begin{figure}
\centering
\subfigure[Scores for alternatives derived by $\mathrm{PFIWA}_{T_{\protect%
\gamma}^{\textbf{SS}}, \protect\omega}$]{\includegraphics[height=4.1cm,width=4.35cm]{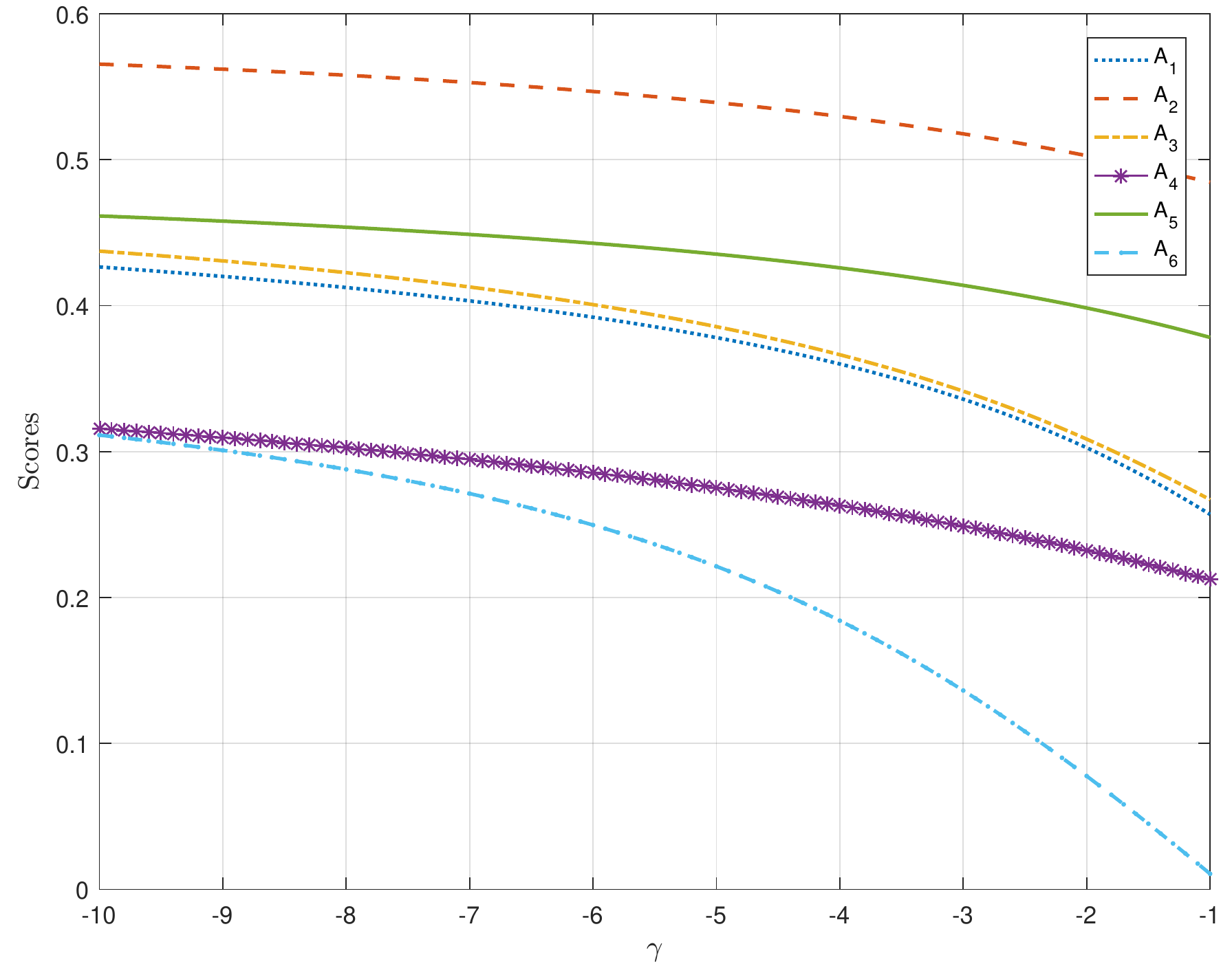}}
\subfigure[Scores for alternatives derived by $\mathrm{PFIWA}_{T_{\protect%
\gamma}^{\textbf{H}}, \protect\omega}$]{\includegraphics[height=4.1cm,width=4.35cm]{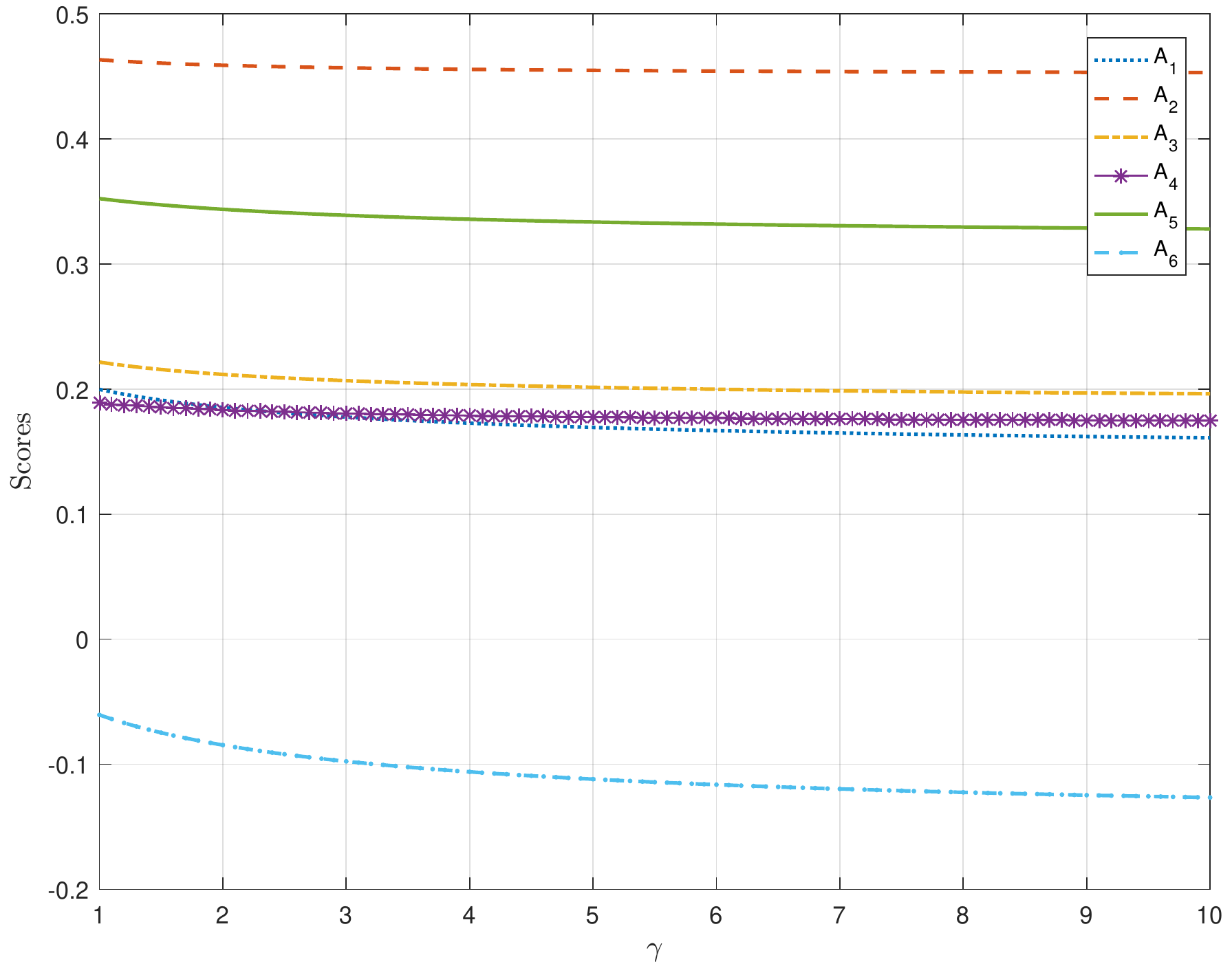}}
\subfigure[Scores for alternatives derived by $\mathrm{PFIWG}_{T_{\protect%
\gamma}^{\textbf{F}}, \protect\omega}$]{\includegraphics[height=4.1cm,width=4.35cm]{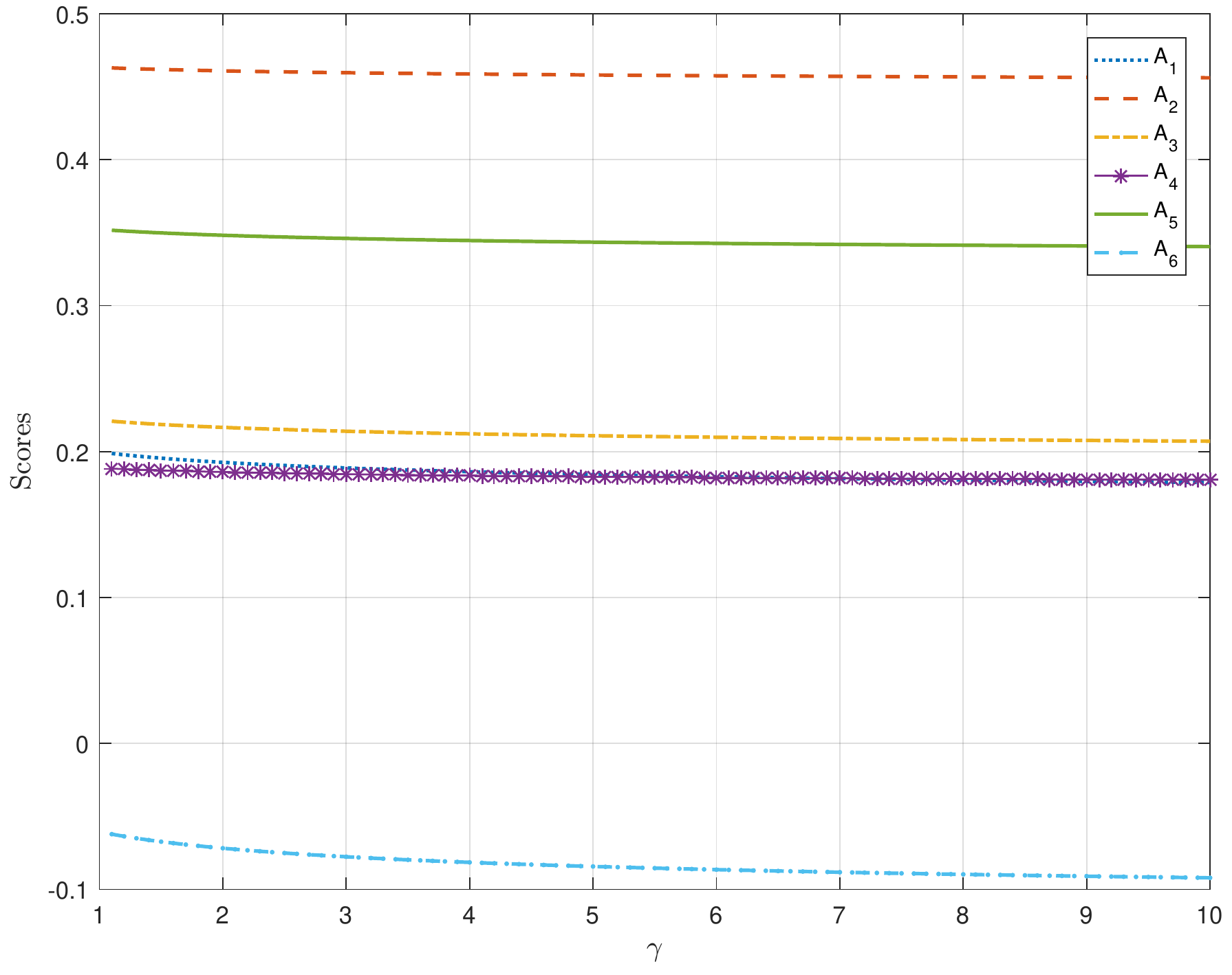}}
\subfigure[Scores for alternatives derived by $\mathrm{PFIWG}_{T_{\protect%
\gamma}^{\textbf{D}}, \protect\omega}$]{\includegraphics[height=4.1cm,width=4.35cm]{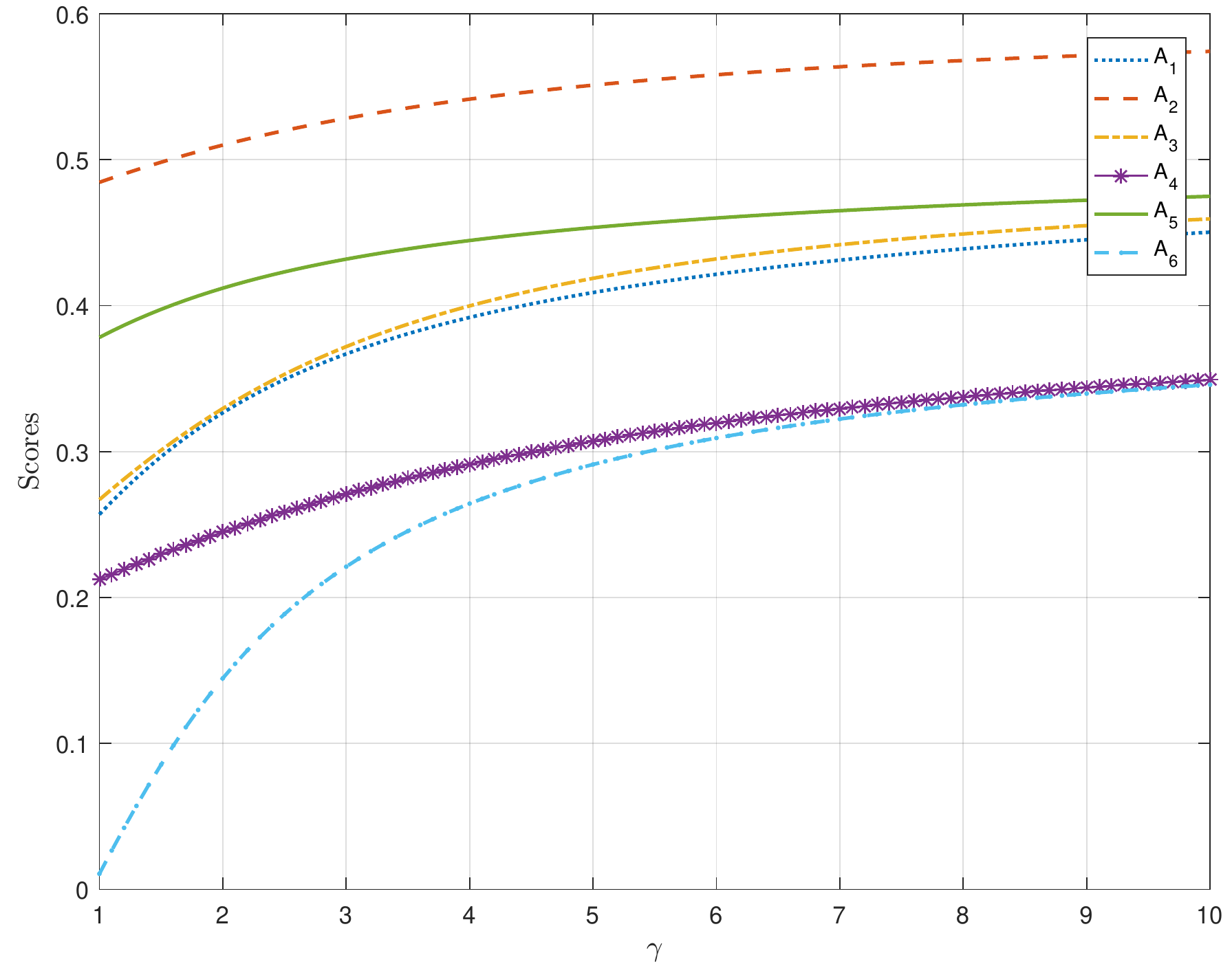}}
\subfigure[Scores for alternatives derived by $\mathrm{PFIWA}_{T_{\protect%
\gamma}^{\textbf{AA}}, \protect\omega}$]{\includegraphics[height=4.1cm,width=4.35cm]{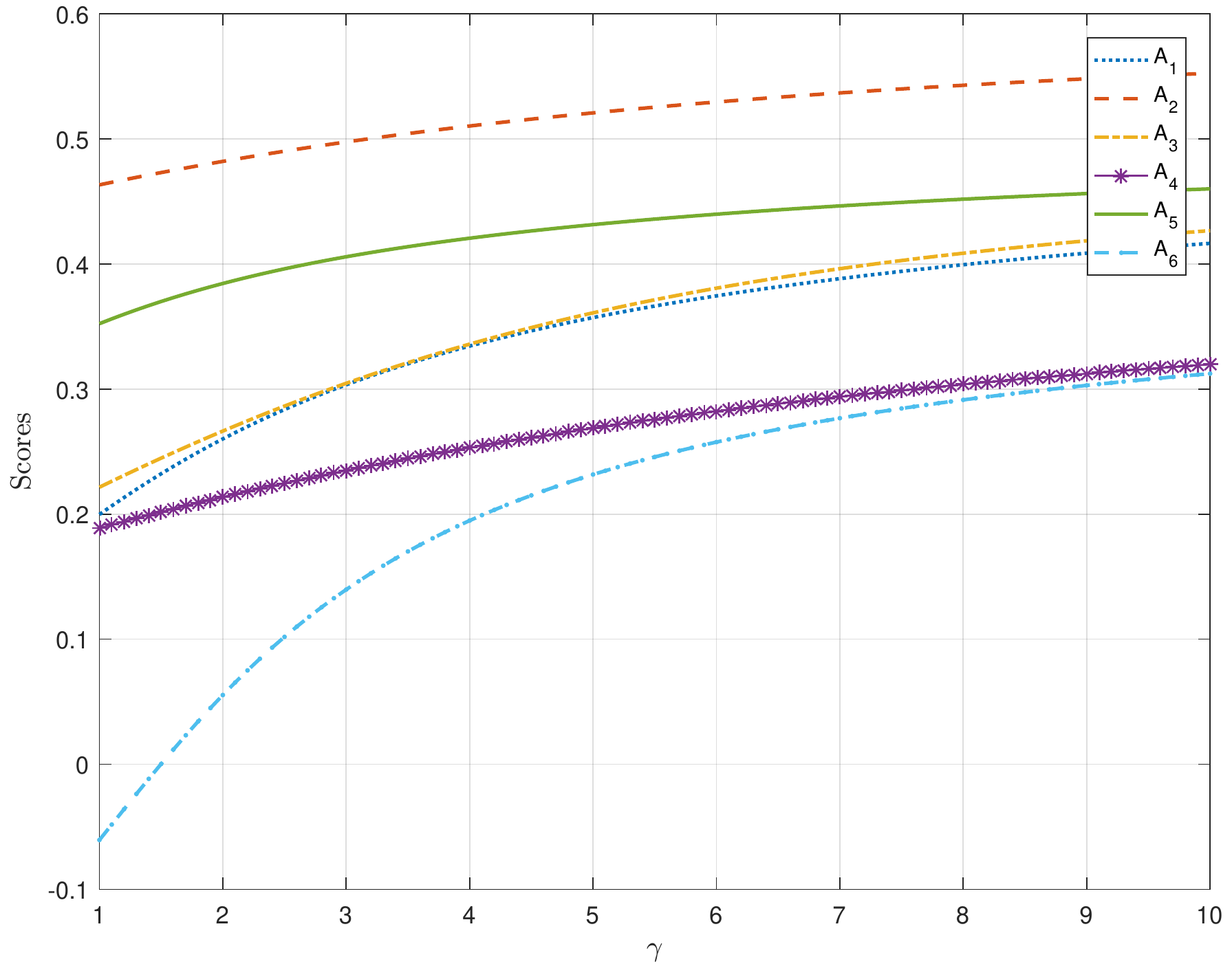}}
\caption{Scores for alternatives}
\label{FIG:2}
\end{figure}

Fig.~\ref{FIG:2} (a) draws the scores for the alternatives derived by the
operator $\mathrm{PFIWA}_{T_{\gamma }^{\textbf{SS}},\omega }$, induced by
Schweizer-Sklar t-norm $T_{\gamma }^{\textbf{SS}},$ as the parameter $\gamma
$ changes from $-10$ to $-1$. Then, we observe that (1) the score functions
of all alternatives decrease monotonously; (2)
$A_{2}$ is the best financial strategy.

Fig.~\ref{FIG:2} (b) draws the scores for the alternatives derived by the
operator $\mathrm{PFIWA}_{T_{\gamma }^{\textbf{H}},\omega }$, induced by
Hamacher t-norm $T_{\gamma }^{\textbf{H}},$ as the parameter $\gamma $
changes from $1$ to $10$. Then, we observe that (1) the score functions
of all alternatives decrease monotonously; (2)
$A_{2}$ is the best financial strategy.

Fig.~\ref{FIG:2} (c) draws the scores for the alternatives derived by the
operator $\mathrm{PFIWA}_{T_{\gamma }^{\textbf{F}},\omega }$, induced by
Frank t-norm $T_{\gamma }^{\textbf{F}},$ as the parameter $\gamma $
changes from $1$ to $10$. Then, we observe that (1) the score functions
of all alternatives decrease monotonously; (2)
$A_{2}$ is the best financial strategy.

Fig.~\ref{FIG:2} (d) draws the scores for the alternatives derived by the
operator $\mathrm{PFIWG}_{T_{\gamma }^{\textbf{D}},\omega }$, induced by
Dombi t-norm $T_{\gamma }^{\textbf{D}},$ as the parameter $\gamma $
changes from $1$ to $10$. Then, we observe that (1) the score functions
of all alternatives increase monotonously; (2)
$A_{2}$ is the best financial strategy.

Fig.~\ref{FIG:2} (e) draws the scores for the alternatives derived by the
operator $\mathrm{PFIWA}_{T_{\gamma }^{\textbf{AA}},\omega }$, induced by Acz%
\'{e}l-Alsina t-norm $T_{\gamma }^{\textbf{AA}},$ as the parameter $\gamma $
changes from $1$ to $10$. Then, we observe that (1) the score functions
of all alternatives increase monotonously; (2)
$A_{2}$ is the best financial strategy.

In general, from the above analysis, the parameter $\gamma $ can be regarded
as a reflection of the decision makers' preferences as the parameter $\gamma
$ changes in a certain range; however, the scores and the rankings of the
alternatives are different. Then, the best financial strategy is always $%
A_{2}$. Therefore, we have sufficient reasons to conclude that this company's best
financial strategy is investing in \textquotedblleft {\bf Guangdong Province}".

\section{Conclusion}
In summary, we introduced an total order on $\mathscr{P},$ denoted by $%
\preceq _{_{\mathrm{W}}},$ by using the score function, and the first
and the second accuracy functions. We also showed that $\preceq _{_{%
\mathrm{W}}}$ is an admissible order, with which $\mathscr{P}$ has the
structure of a complete lattice. Moreover, we discussed the operational laws
in \cite{AMAK2019,AA2020-1,Ga2017,JSPY2019,KAA2019,LLCFL2021,
WWGW2019,Wei2017,Wei2018a,XSWWZLX2019}. We demonstrated that these
operational laws might violate the closedness property
on $\mathscr{P}$, which implies that that decision procedure might not
be convergent. To clear up the problems of these
operational laws, we proposed new interactive operational laws for PFNs
via a strict t-norm. If assigning specific forms of t-norms to our
laws, some attracted operators were obtained. It should be pointed out that
our picture fuzzy operators satisfy the closedness property in comparison
with the above-mentioned ones. We also described the main characteristics of
these operators. Through the instrument of new operators, we defined the
picture fuzzy interactional weighted average (PFIWA) operator and the
picture fuzzy interactional weighted geometric (PFIWG) operator. Both of
them get possession of monotonicity, idempotency, boundedness,
shift-invariance, and homogeneity. Furthermore, by applying PFIWA and PFIWG
operators, a novel MCDM method was developed with the picture fuzzy
environment. We finally provided a practical example of an
investment plan that illustrates the efficiency of the developed method, and
the recent results are conceivable and creditable. Therefore, the research
about this new method may be challenging and meaningful.

\section*{Appendix}
\subsection{Proof of Theorem~\ref{Total-Order-Thm}:}
\begin{proof}
Now, we show that, for any given two PFNs $
\alpha =\langle \mu _{\alpha },\eta _{\alpha },\nu _{\alpha }\rangle $ and $%
\beta =\langle \mu _{\beta },\eta _{\beta },\nu _{\beta }\rangle $, if we have the equations: $%
S(\alpha )-S(\beta )=0$, $H_{1}(\alpha )-H_{1}(\beta )=0$, and $H_{2}(\alpha
)-H_{2}(\beta )=0$, then we must have $\alpha =\beta $. Namely,
if the following equalities hold,
\begin{equation*}
\begin{cases}
\mu _{\alpha }-\nu _{\alpha }=\mu _{\beta }-\nu _{\beta }, \\
\mu _{\alpha }+\nu _{\alpha }=\mu _{\beta }+\nu _{\beta }, \\
\mu _{\alpha }+\eta _{\alpha }+\nu _{\alpha }=\mu _{\beta }+\eta _{\beta
}+\nu _{\beta },%
\end{cases}
\label{eq-Wu-2.1}
\end{equation*}%
i.e.,
\begin{equation*}
\begin{bmatrix}
1 & 0 & -1 \\
1 & 0 & 1 \\
1 & 1 & 1%
\end{bmatrix}%
\begin{bmatrix}
\mu _{\alpha }-\mu _{\beta } \\
\eta _{\alpha }-\eta _{\beta } \\
\nu _{\alpha }-\nu _{\beta }%
\end{bmatrix}%
=%
\begin{bmatrix}
0 \\
0 \\
0%
\end{bmatrix}%
,
\end{equation*}%
then $\mu _{\alpha }=\mu _{\beta }$, $\eta _{\alpha }=\eta _{\beta }$, and $%
\nu _{\alpha }=\nu _{\beta }$. This holds trivially since $%
\begin{vmatrix}
1 & 0 & -1 \\
1 & 0 & 1 \\
1 & 1 & 1%
\end{vmatrix}%
=-2$.

This, together with the fact that
$\preceq _{_{\mathrm{W}}}$ is a partial order on $\mathscr{P}$, implies the conclusion.
\end{proof}

\subsection{Proof of Theorem~\ref{Adm-Order-Thm}:}
\begin{proof}
It follows from Theorem \ref{Total-Order-Thm} that $\preceq_{_{\mathrm{W}}}$ is a total
order on $\mathscr{P}$. For two PFNs $\alpha =\langle \mu_{\alpha},
\eta_{\alpha}, \nu_{\alpha}\rangle $ and $\beta =\langle \mu_{\beta},
\eta_{\beta}, \nu_{\beta}\rangle ,$ let $\alpha \subseteq
\beta$. By Definition~\ref{sub-order}, there holds $\mu_{\alpha}-
\mu_{\beta}\leq0$, $\eta_{\alpha}-\eta_{\beta}\leq0$, and $\nu_{\alpha}-
\nu_{\beta}\geq0$. Then, we have
$S(\alpha )-S(\beta )
=(\mu_{\alpha}-\nu_{\alpha})-(\mu_{\beta}-\nu_{\beta})
=(\mu_{\alpha}-\mu_{\beta})+(\nu_{\beta}-\nu_{\alpha})\leq 0$.

Case 1. For $S(\alpha )-S(\beta )<0$, we have $S(\alpha )<S(\beta)$.
From Definition \ref{Order-Wu}, it follows that $\alpha \prec_{_{\mathrm{W}}}\beta$.

Case 2. For $S(\alpha )-S(\beta )=0$, we have
$(\mu_{\alpha}-\mu_{\beta})+(\nu_{\beta}-\nu_{\alpha})=0$.
This, together with $\mu_{\alpha}\leq \mu_{\beta}$ and $\nu_{\alpha}\geq
\nu_{\beta}$, implies that $\mu_{\alpha}=\mu_{\beta}$ and
$\nu_{\alpha}=\nu_{\beta}$, and thus $H_{1}(\alpha)=\mu_{\alpha}+\nu_{\alpha}
=\mu_{\beta}+\nu_{\beta}=H_{1}(\beta)$. By $\eta_{\alpha}\leq \eta_{\beta}$,
we have $H_{2}(\alpha)=\mu_{\alpha}+\eta_{\alpha}+\nu_{\alpha}
\leq \mu_{\beta}+\eta_{\beta}+\nu_{\beta}=H_{2}(\beta)$. Therefore,
by Definition~\ref{Order-Wu}, we have $\alpha \preceq_{_{\mathrm{W}}}\beta$.
\end{proof}

\subsection{Proof of Theorem~\ref{Pro-Them}:}
\begin{proof}
Let $S$ be the dual t-conorm of $T$. The statements (i)--(ii) follow
directly from the commutativity of $T$ and $S$.

(iii)
$(\alpha \oplus _{_{T}}\beta )\oplus _{_{T}}\gamma
=\big\langle S(\mu _{\alpha },\mu _{\beta }),T(\eta _{\alpha }+\nu
_{\alpha },\eta _{\beta }+\nu _{\beta })-T(\nu _{\alpha },\nu _{\beta
}),T(\nu _{\alpha },\nu _{\beta })\big\rangle\oplus _{_{T}}\gamma
=\big\langle S(S(\mu _{\alpha },\mu _{\beta }),\mu _{\gamma
}),T(T(\eta _{\alpha }+\nu _{\alpha },\eta _{\beta }+\nu _{\beta }),\eta
_{\gamma }+\nu _{\gamma })-T( T(\nu _{\alpha },\nu _{\beta }),\nu
_{\gamma }) ,T(T(\nu _{\alpha },\nu _{\beta }),\nu _{\gamma })
\big\rangle =\big\langle S(\mu _{\alpha },S(\mu _{\beta },\mu _{\gamma
})),T(\eta _{\alpha }+\nu _{\alpha },T(\eta _{\beta }+\nu _{\beta },\eta
_{\gamma }+\nu _{\gamma }))-T(\nu _{\alpha },T(\nu _{\beta },\nu _{\gamma
}),T(\nu _{\alpha },T(\nu _{\beta },\nu _{\gamma }))\big\rangle
=\alpha \oplus _{_{T}}\big\langle S(\mu _{\beta },\mu _{\gamma }),T(\eta
_{\beta }+\nu _{\beta },\eta _{\gamma }+\nu _{\gamma })-T(\nu _{\beta },\nu
_{\gamma }),T(\nu _{\beta },\nu _{\gamma })\big\rangle
 =\alpha \oplus _{_{T}}(\beta \oplus _{_{T}}\gamma ).
$

(v) Notice that $T(x,y)=\tau ^{-1}(\tau (x)+\tau (y))$ and $S(x,y)=\zeta
^{-1}(\zeta (x)+\zeta (y))$. Then, we get that
$(\xi_{_T}\alpha)\oplus_{_T}(\lambda_{_T} \alpha)
=\big\langle \zeta^{-1}(\xi\cdot \zeta(\mu_{\alpha})), \tau^{-1}(\xi\cdot
\tau(\eta_{\alpha}+\nu_{\alpha})) -\tau^{-1}(\xi\cdot \tau(\nu_{\alpha})),
\tau^{-1}(\xi\cdot \tau(\nu_{\alpha}))\big\rangle
 \oplus_{_{T}} \big\langle \zeta^{-1}(\lambda\cdot
\zeta(\mu_{\alpha})), \tau^{-1}(\lambda \cdot
\tau(\eta_{\alpha}+\nu_{\alpha})) -\tau^{-1}(\lambda\cdot
\tau(\nu_{\alpha})), \tau^{-1}(\lambda\cdot \tau(\nu_{\alpha}))\big\rangle
=\big\langle S(\zeta^{-1}(\xi\cdot \zeta(\mu_{\alpha})),
\zeta^{-1}(\lambda\cdot \zeta(\mu_{\alpha}))), T(\tau^{-1}(\xi\cdot
\tau(\eta_{\alpha}+\nu_{\alpha})), \tau^{-1}(\lambda\cdot
\tau(\eta_{\alpha}+\nu_{\alpha})))-T(\tau^{-1}(\xi\cdot \tau(\nu_{\alpha})),
\tau^{-1}(\lambda\cdot \tau(\nu_{\alpha}))), T(\tau^{-1}(\xi\cdot
\tau(\nu_{\alpha})), \tau^{-1}(\lambda\cdot \tau(\nu_{\alpha})))\big\rangle
=\big\langle\zeta^{-1}((\xi+\lambda)\cdot \zeta(\mu_{\alpha})),
\tau^{-1}((\xi+\lambda)\cdot \tau(\eta_{\alpha}+\nu_{\alpha}))
-\tau^{-1}((\xi+\lambda)\cdot \tau(\nu_{\alpha})),
\tau^{-1}((\xi+\lambda)\cdot \tau(\nu_{\alpha}))\big\rangle
=(\xi+\lambda)_{_T}\alpha.$

(vii)
$(\lambda_{_T}\alpha)\oplus_{_T}(\lambda_{_T}\beta)
=\big\langle \zeta^{-1}(\lambda\cdot \zeta(\mu_{\alpha})),
\tau^{-1}(\lambda\cdot \tau(\eta_{\alpha}+\nu_{\alpha}))
-\tau^{-1}(\lambda\cdot \tau(\nu_{\alpha})), \tau^{-1}(\lambda\cdot
\tau(\nu_{\alpha}))\big\rangle \oplus_{_{T}} \big\langle \zeta^{-1}(\lambda\cdot
\zeta(\mu_{\beta})), \tau^{-1}(\lambda\cdot \tau(\eta_{\beta}+\nu_{\beta}))
-\tau^{-1}(\lambda\cdot \tau(\nu_{\beta})), \tau^{-1}(\lambda\cdot
\tau(\nu_{\beta}))\big\rangle
=\big\langle S(\zeta^{-1}(\lambda\cdot \zeta(\mu_{\alpha})),
\zeta^{-1}(\lambda\cdot \zeta(\mu_{\beta}))), T(\tau^{-1}(\lambda\cdot
\tau(\eta_{\alpha}+\nu_{\alpha})), \tau^{-1}(\lambda\cdot
\tau(\eta_{\beta}+\nu_{\beta})))
-T(\tau^{-1}(\lambda\cdot \tau(\nu_{\alpha})),
\tau^{-1}(\lambda\cdot \tau(\nu_{\beta}))), T(\tau^{-1}(\lambda\cdot
\tau(\nu_{\alpha})), \tau^{-1}(\lambda\cdot \tau(\nu_{\beta})))\big\rangle
=\big\langle\zeta^{-1}(\lambda\cdot
(\zeta(\mu_{\alpha})+\zeta(\mu_{\beta}))), \tau^{-1}(\lambda\cdot
(\tau(\eta_{\alpha}+\nu_{\alpha})+\tau(\eta_{\beta}+\nu_{\beta})))
-\tau^{-1}(\lambda\cdot
(\tau(\nu_{\alpha})+\tau(\nu_{\beta}))), \tau^{-1}(\lambda \cdot
(\tau(\nu_{\alpha})+\tau(\nu_{\beta}))) \big\rangle
=\big\langle\zeta^{-1}(\lambda\cdot \zeta\circ \zeta^{-1}
(\zeta(\mu_{\alpha})+\zeta(\mu_{\beta}))), \tau^{-1}(\lambda\cdot \tau \circ
\tau^{-1} (\tau(\eta_{\alpha}+\nu_{\alpha})+\tau(\eta_{\beta}+\nu_{\beta})))
-\tau^{-1}(\lambda\cdot \tau\circ \tau^{-1}
(\tau(\nu_{\alpha})+\tau(\nu_{\beta}))), \tau^{-1}(\lambda \cdot \tau\circ
\tau^{-1} (\tau(\nu_{\alpha})+\tau(\nu_{\beta}))) \big\rangle
=\big\langle\zeta^{-1}(\lambda\cdot S(\mu_{\alpha}, \mu_{\beta})),
\tau^{-1}(\lambda\cdot T(\eta_{\alpha}+\nu_{\alpha},
\eta_{\beta}+\nu_{\beta})) -\tau^{-1}(\lambda\cdot T(\nu_{\alpha},
\nu_{\beta})), \tau^{-1}(\lambda\cdot T(\nu_{\alpha}, \nu_{\beta})) %
\big\rangle =\lambda_{_T}(\alpha\oplus_{_T} \beta)$.

(ix)
$\xi _{_{T}}(\lambda _{_{T}}\alpha )
=\xi _{_{T}}\cdot \big\langle\zeta ^{-1}(\lambda \zeta (\mu _{\alpha
})),\tau ^{-1}(\lambda \tau (\eta _{\alpha }+\nu _{\alpha }))-\tau
^{-1}(\lambda \tau (\nu _{\alpha })),\tau ^{-1}(\lambda \tau (\nu _{\alpha
}))\big\rangle=\big\langle\zeta ^{-1}((\xi \cdot \lambda )\cdot \zeta (\mu
_{\alpha })),\tau ^{-1}((\xi \cdot \lambda )\cdot \tau (\eta _{\alpha }+\nu
_{\alpha }))-\tau ^{-1}((\xi \cdot \lambda )\cdot \tau (\nu _{\alpha
})),\tau ^{-1}((\xi \cdot \lambda )\cdot \tau (\nu _{\alpha }))\big\rangle
=(\lambda \cdot \xi )_{_{T}}\alpha$.

By using similar arguments to the proofs of (iii), (v), (vii), and (ix), it
is verified the statements (iv), (vi), (viii), and (x).
\end{proof}

\subsection{Proof of Theorem~\ref{N-Thm}:}
\begin{proof}We only prove the formula~\eqref{eq-Wu-1.1} in view of the mathematical
induction on $n$. The formula~\eqref{eq-Wu-1.2} is proved analogously.

(1) When $n=2$, it is easy to check that $\alpha _{1}\oplus _{_{T}}\alpha
_{2}=\langle S(\mu _{1},\mu _{2}),T(\eta _{1}+\nu _{1},\eta _{2}+\nu
_{2})-T(\nu _{1},\nu _{2}),T(\nu _{1},\nu _{2})\rangle $.

(2) Suppose that the formula~\eqref{eq-Wu-1.1} holds when $n=k$; i.e.,
\begin{align*}
&\quad \alpha_1\oplus_{_{T}}\alpha_2\oplus_{_{T}}\cdots \oplus_{_{T}}\alpha_k
\\
&=\left\langle S^{(k)}(\mu_1, \mu_2, \ldots, \mu_k),\right. \\
&\quad T^{(k)}(\eta_1+\nu_1, \eta_2+\nu_2, \ldots, \eta_k+\nu_k)
-T^{(k)}(\nu_1, \nu_2, \ldots, \nu_k), \\
&\quad \left.T^{(k)}(\nu_1, \nu_2, \ldots, \nu_k)\right\rangle,
\end{align*}
When $n=k+1$, by Definition~\ref{Def-Wu} and Theorem~\ref{Pro-Them} (iii),
we get
\begin{align*}
&\quad \alpha_1\oplus_{_{T}}\alpha_2\oplus_{_{T}}\cdots
\oplus_{_{T}}\alpha_{k+1}=\left(\alpha_1\oplus_{_{T}}
\cdots \oplus_{_{T}}\alpha_{k}\right)\oplus_{_{T}}
\alpha_{k+1} \\
&=\left\langle S^{(k)}(\mu_1, \ldots, \mu_k),\right. \\
&\quad \quad T^{(k)}(\eta_1+\nu_1, \ldots, \eta_k+\nu_k)
-T^{(k)}(\nu_1, \ldots, \nu_k), \\
&\quad \quad \left.T^{(k)}(\nu_1, \ldots, \nu_k)\right\rangle \oplus_{_{T}} %
\left\langle\mu_{k+1}, \eta_{k+1}, \nu_{k+1}\right\rangle \\
&=\left\langle S(S^{(k)}(\mu_1, \ldots, \mu_k), \mu_{k+1}),\right. \\
& \quad \quad T(T^{(k)}(\eta_1+\nu_1, \ldots, \eta_k+\nu_k),
\eta_{k+1}+\nu_{k+1})\\
&\quad \quad  -T(T^{(k)}(\nu_1, \ldots, \nu_k), \nu_{k+1}), \\
&\quad \quad \left.T(T^{(k)}(\nu_1, \ldots, \nu_k), \nu_{k+1})\right\rangle,\\
&=\left\langle S^{(k+1)}(\mu_1, \ldots, \mu_{k+1}), \right.\\
&\quad \quad T^{(k+1)}(\eta_1+\nu_1, \ldots,
\eta_{k+1}+\nu_{k+1})\\
&\quad \quad  -T^{(k+1)}(\nu_1, \ldots, \nu_{k+1}), \\
&\quad\quad \left.T^{(k+1)}(\nu_1, \ldots, \nu_{k+1})\right\rangle,
\end{align*}
and thus formula~\eqref{eq-Wu-1.1} holds when $n=k+1$.

Therefore, by (1) and (2), we obtain that formula~\eqref{eq-Wu-1.1} holds for
all $n$.
\end{proof}

\subsection{Proof of Theorem~\ref{PFWA-PFGA-Thm}:}
\begin{proof}
By the formulas~\eqref{eq-PFWA-Ope} and \eqref{eq-PFWG-Ope}, it follows from
Theorem~\ref{N-Thm} and Definition~\ref{Def-Wu} that
\begin{align*}
& \quad \mathrm{PFIWA}_{T,\omega }(\alpha _{1},\ldots ,\alpha
_{n}) \\
& =\left\langle \zeta ^{-1}(\omega _{1}\cdot \zeta (\mu _{1})),\tau
^{-1}(\omega _{1}\cdot \tau (\eta _{1}+\nu _{1}))-\tau ^{-1}(\omega
_{1}\cdot \tau(\nu _{1})),\right.\\
&\quad \left.\tau^{-1}(\omega _{1}\cdot \tau (\nu
_{1}))\right\rangle \oplus _{_{T}}\cdots \oplus _{_{T}}\left\langle \zeta ^{-1}(\omega
_{n}\cdot \zeta (\mu _{n})),\right.\\
&\quad \left.\tau ^{-1}(\omega _{n}\cdot \tau (\eta _{n}+\nu
_{n}))-\tau ^{-1}(\omega _{n}\cdot \tau (\nu _{n})),\tau ^{-1}(\omega
_{n}\cdot \tau (\nu _{n}))\right\rangle\\
%%\end{align*}
%%\begin{align*}
& =\left\langle S^{(n)}(\zeta ^{-1}(\omega _{1}\cdot \zeta (\mu _{1})),\ldots
,\zeta ^{-1}(\omega _{n}\cdot \zeta (\mu _{n}))),\right. \\
& \quad \quad T^{(n)}(\tau ^{-1}(\omega _{1}\cdot \tau (\eta _{1}+\nu
_{1})),\ldots ,\tau ^{-1}(\omega _{n}\cdot \tau (\eta _{n}+\nu
_{n}))) \\
&\quad \quad \quad \quad -T^{(n)}(\tau ^{-1}(\omega _{1}\cdot \tau (\nu _{1})),\ldots ,\tau
^{-1}(\omega _{n}\cdot \tau (\nu _{n}))),\\
&\quad \quad \left.
T^{(n)}(\tau ^{-1}(\omega _{1}\cdot \tau (\nu _{1})),\ldots
,\tau ^{-1}(\omega _{n}\cdot \tau (\nu _{n})))\right\rangle\\
%%\end{align*}
%%\begin{align*}
& =\left\langle\zeta ^{-1}(\omega _{1}\cdot \zeta (\mu _{1})+\cdots +\omega
_{n}\cdot \zeta (\mu _{n})),\right. \\
& \quad \quad \tau ^{-1}(\omega _{1}\cdot \tau (\eta _{1}+\nu _{1})+\cdots
+\omega _{n}\cdot \tau (\eta _{n}+\nu _{n}))\\
&\quad \quad \quad \quad -\tau ^{-1}(\omega _{1}\cdot
\tau (\nu _{1})+\cdots +\omega _{n}\cdot \tau (\nu _{n})), \\
& \quad \quad \left.\tau ^{-1}(\omega _{1}\cdot \tau (\nu _{1})+\cdots +\omega
_{n}\cdot \tau (\nu _{n}))\right\rangle,
\end{align*}%
and
\begin{align*}
& \quad \mathrm{PFIWG}_{T,\omega }(\alpha _{1},\alpha _{2},\ldots ,\alpha
_{n}) \\
& =\left\langle \tau ^{-1}(\omega _{1}\cdot \tau (\mu _{1})),\tau
^{-1}(\omega _{1}\cdot \tau (\eta _{1}+\mu _{1}))-\tau ^{-1}(\omega
_{1}\cdot \tau (\mu _{1})),\right.\\
&\quad \left.\zeta ^{-1}(\omega _{1}\cdot \zeta (\nu
_{1}))\right\rangle \otimes _{_{T}}\cdots \otimes _{_{T}}\left\langle \tau ^{-1}(\omega
_{n}\cdot \tau (\mu _{n})),\right.\\
&\quad \left.\tau ^{-1}(\omega _{n}\cdot \tau (\eta _{n}+\mu
_{n}))-\tau ^{-1}(\omega _{n}\cdot \tau (\mu _{n})),\zeta ^{-1}(\omega
_{n}\cdot \tau (\nu _{n}))\right\rangle \\
& =\left\langle T^{(n)}(\tau ^{-1}(\omega _{1}\cdot \tau (\mu _{1})),\ldots
,\tau ^{-1}(\omega _{n}\cdot \tau (\mu _{n}))),\right. \\
& \quad \quad T^{(n)}(\tau ^{-1}(\omega _{1}\cdot \tau (\eta _{1}+\mu
_{1})),\ldots ,\tau ^{-1}(\omega _{n}\cdot \tau (\eta _{n}+\mu
_{n})))\\
&\quad \quad \quad \quad -T^{(n)}(\tau ^{-1}(\omega _{1}\cdot \tau (\mu _{1})),\ldots ,\tau
^{-1}(\omega _{n}\cdot \tau (\mu _{n}))), \\
& \quad \quad \left.S^{(n)}(\zeta ^{-1}(\omega _{1}\cdot \zeta (\nu _{1})),\ldots
,\zeta ^{-1}(\omega _{n}\cdot \zeta (\nu _{n})))\right\rangle \\
& =\left\langle\tau ^{-1}(\omega _{1}\cdot \tau (\mu _{1})+\cdots +\omega
_{n}\cdot \tau (\mu _{n})),\right. \\
& \quad \quad \tau ^{-1}(\omega _{1}\cdot \tau (\eta _{1}+\mu _{1})+\cdots
+\omega _{n}\cdot \tau (\eta _{n}+\mu _{n}))\\
&\quad \quad \quad \quad -\tau ^{-1}(\omega _{1}\cdot
\tau (\mu _{1})+\cdots +\omega _{n}\cdot \tau (\mu _{n})), \\
& \quad \quad \left.\zeta ^{-1}(\omega _{1}\cdot \zeta (\nu _{1})+\cdots +\omega
_{n}\cdot \zeta (\nu _{n}))\right\rangle.
\end{align*}
\end{proof}

\section*{References}

\end{document}